\documentclass[sn-mathphys-num]{sn-jnl}

\usepackage[utf8]{inputenc}
\usepackage[T1]{fontenc}

\usepackage{graphicx}
\usepackage{amsmath,amssymb,amsfonts}
\usepackage{amsthm}
\usepackage{mathtools}
\usepackage{mathrsfs}

\usepackage{booktabs}
\usepackage{multirow}
\usepackage{float}
\usepackage{stfloats}

\usepackage{subcaption}

\usepackage{algorithm}
\usepackage{algorithmic}

\usepackage{listings}

\usepackage[table]{xcolor}

\usepackage{wrapfig}

\usepackage[capitalize,noabbrev]{cleveref}

\usepackage{geometry}
\usepackage{tabularx}

\usepackage{amsthm}

\newtheorem{lemma}{Lemma}

\theoremstyle{thmstyleone}%
\newtheorem{theorem}{Theorem}%
\theoremstyle{thmstyletwo}%
\theoremstyle{thmstylethree}%
\newtheorem{definition}{Definition}%
\usepackage{titletoc}
\usepackage{makecell}

\definecolor{alggray}{gray}{0.45}
\newcommand{\algc}[1]{\textcolor{alggray}{// #1}} 

\definecolor{bestcol}{RGB}{120,170,245}    
\definecolor{secondcol}{RGB}{170,210,255}  
\definecolor{thirdcol}{RGB}{225,242,255}

\newcommand{\bestcell}[1]{\cellcolor{bestcol}\textbf{#1}}
\newcommand{\second}[1]{\cellcolor{secondcol}#1}
\newcommand{\third}[1]{\cellcolor{thirdcol}#1}
\begin{document}

\title[GESR]{GESR: A Genetic Programming–Based Symbolic Regression Method with Gene Editing}


\author[1,2,3]{\fnm{Yanjie} \sur{Li}}\email{liyanjie@semi.ac.cn}
\author[1]{\fnm{Liping} \sur{Zhang}}\email{lipingzhang@semi.ac.cn}
\author*[1]{\fnm{Min} \sur{Wu}}\email{wumin@semi.ac.cn}
\author*[1,2,3]{\fnm{Weijun} \sur{Li}}\email{wjli@semi.ac.cn}
\author[1]{\fnm{Lina} \sur{Yu}}\email{yulina@semi.ac.cn}
\author[1]{\fnm{Jingyi} \sur{Liu}}\email{liujingyi@semi.ac.cn}
\author[1]{\fnm{Yusong} \sur{Deng}}\email{dys@semi.ac.cn}
\author[1]{\fnm{Mingzhu} \sur{Wan}}\email{wanmingzhu@semi.ac.cn}
\author[1,3]{\fnm{Xin} \sur{Ning}}\email{ningxin@semi.ac.cn}

\affil[1]{\orgdiv{AnnLab}, \orgname{Institute of Semiconductors, Chinese Academy of Sciences}, \orgaddress{\city{Beijing}, \country{China}}}
\affil[2]{\orgdiv{Zhongguancun Academy}, \orgname{Zhongguancun Academy}, \orgaddress{\city{Beijing}, \country{China}}}
\affil[3]{\orgdiv{School of Electronic, Electrical and Communication Engineering}, \orgname{University of Chinese Academy of Sciences}, \orgaddress{\city{Beijing}, \country{China}}}


\abstract{
Mathematical formulas serve as a language through which humans communicate with nature. Discovering mathematical laws from scientific data to describe natural phenomena has been a long-standing pursuit of humanity for centuries. In the field of artificial intelligence, this challenge is known as the symbolic regression problem. Among existing symbolic regression approaches, Genetic Programming (GP) based on evolutionary algorithms remains one of the most classical and widely adopted methods. GP simulates the evolutionary process across generations through genetic mutation and crossover. However, mutations and crossovers in GP are entirely random. While this randomness effectively mimics natural evolution, it inevitably produces both beneficial and detrimental variations. If there existed a metaphorical “God” capable of foreseeing which genetic mutations or crossovers would yield superior outcomes and performing targeted gene editing accordingly, the efficiency of evolution could be substantially improved. Motivated by this idea, we propose in this paper a symbolic regression approach based on gene editing, termed \textit{GESR}. In GESR, we trained two "hands of God" (two BERT models). Among them, the first leverages the BERT's masked language modeling capability to guide the mutation of genes (expression symbols). The other BERT model guides the crossover of individual genes by predicting the crossover point. Experimental results demonstrate that GESR significantly improves computational efficiency compared with traditional GP algorithms and achieves strong overall performance across multiple symbolic regression tasks.
}

\keywords{Symbolic Regression, Gene Editing, Evolutionary Computation, Scientific Discovery}

\maketitle

\section{Introduction}
\label{sec:introduction}

Mathematical formulas constitute a fundamental language for humans to understand and describe natural laws. Automatically discovering mathematical principles from observational data in order to characterize natural phenomena has long been a central objective in scientific research. In the field of artificial intelligence, this task is known as the \textit{Symbolic Regression} problem, whose core goal is to automatically derive mathematical expressions from data that achieve both high predictive accuracy and strong interpretability.

Among existing symbolic regression approaches, methods based on Genetic Programming (GP) have become one of the most classical and widely adopted paradigms due to their expressive power and flexible search capability. GP simulates biological evolution by iteratively evolving expression structures through genetic operators such as crossover and mutation, gradually approximating the target function. However, mutation operations in traditional GP typically rely on purely random strategies. Although random mutation and crossover helps maintain population diversity and global search capability, its uncontrollability often results in a large number of ineffective or even detrimental variations, thereby reducing search efficiency, slowing convergence, and increasing computational cost. 

Recently, NGGP~\citep{dso} uses LSTM to generate a good initial population for GP, but does not effectively guide the mutation and crossover of GP. PGGP\citep{han2025transformer} calculates the ideal output of a subtree by backward calculation in the evolution process, and then predicts a new subtree structure by NeSymRes, and replaces the original subtree. Then the crossover operation is performed. Although this method introduces some guidance, it is very time-consuming and easy to produce very complex expressions.

From the perspective of biological evolution, if there existed a metaphorical “God” capable of predicting which genetic mutations are more likely to improve fitness and performing targeted gene editing accordingly, the efficiency of evolution could be substantially enhanced. Although gene editing may raise ethical concerns in real-world biological contexts, it can be safely leveraged as a conceptual tool to improve search efficiency in symbolic regression. Inspired by this idea, we propose a novel symbolic regression approach termed Gene Editing Symbolic Regression (GESR).

In GESR, we train a pair of “Hands of God” — two multimodal BERT models — to perform gene editing. First, the first BERT model is used to guide mutation. Specifically, an expression binary tree is linearized into a symbolic sequence via preorder traversal. We then exploit BERT’s masked language modeling capability to predict the appropriate replacement for a masked symbol, thereby enabling guided mutation. Second, we employ a second BERT model to guide crossover. Concretely, two expression sequences are concatenated using a special [SEQ] token to form a single symbolic sequence, which is then fed into BERT together with the data [X,y]. The model predicts the root node of the subtree to be crossed over in each expression, thus enabling guided crossover. This mechanism preserves the evolutionary search advantages of GP while significantly reducing ineffective mutations  and crossovers, thereby improving search stability and efficiency.

Extensive experimental results demonstrate that, across multiple symbolic regression benchmark tasks, GESR significantly improves computational efficiency and convergence speed compared with traditional GP methods, while achieving competitive or superior performance in terms of predictive accuracy and expression interpretability. These findings indicate that integrating deep language models for intelligent gene editing provides a novel and efficient research direction for symbolic regression.

Our main contributions are summarized as follows:

\begin{itemize}
\item 1, We propose a gene editing-based symbolic regression framework, which enables targeted guidance of genetic mutations and crossovers during evolution, and the search efficiency is improved by 33\%.
\item 2, We trained a pair of “Hands of God” — two multimodal BERT models — to perform gene editing and accelerate the GP evolutionary process. They can also serve as a plug-and-play module for GP-based symbolic regression algorithms.
\item 3, We designed the mutation function and based on it and GP designed a method for constructing high-quality gene editing training data.
\item 4, We introduce the concept of “gene editing” into the field of evolutionary computation.

\end{itemize}

\section{Related Work}

\textbf{Genetic Algorithm-Based Methods.}
Genetic Algorithms (GA)~\citep{ga,ga2} are classical optimization techniques inspired by biological evolution. Genetic Programming (GP)~\citep{gp1,gp2,gp3} extends GA to symbolic regression by representing mathematical expressions as tree structures and evolving them through crossover and mutation. Due to its flexible representation, GP remains one of the most widely used paradigms in symbolic regression. However, standard GP often suffers from inefficient random mutation, slow convergence, and expression bloat. PySR~\citep{cranmer2023interpretable} improves GP by integrating constant optimization, evolutionary search, and Pareto-based complexity control, enabling the discovery of compact and interpretable symbolic expressions.

\textbf{Reinforcement Learning-Based Methods.}
Deep Symbolic Regression (DSR)~\citep{dsr} formulates symbolic regression as a reinforcement learning problem, where a recurrent neural network (RNN) acts as a policy to sequentially generate expression tokens, optimized via policy gradient methods. NGGP~\citep{dso} further integrates GP with DSR by using DSR-generated expressions to initialize GP populations and feeding evolved expressions back to refine the policy, forming a closed-loop learning process.  uDSR\cite{landajuela2022unified} is a deep symbolic regression framework that unifies neural-guided search, reinforcement learning optimization, and symbolic expression generation to automatically discover high-precision and interpretable mathematical formulas from data. Other approaches incorporate Monte Carlo Tree Search (MCTS) and reinforcement learning to improve symbolic search efficiency, including SPL~\citep{spl}, SR-GPT~\citep{li2025discovering}, DySymNet~\cite{li2023neural}, and RSRM~\cite{xu2023rsrm}, which combine neural guidance with structured exploration strategies. RSRM~\cite{ruan2025discovering} Combining GP and MCTS, this paper proposes a symbolic regression method for parallel inference of genes on GPU.

\textbf{Neural Network-Based Methods.}
The AI Feynman family~\citep{aif1,aif2} follows a decomposition-based strategy: it first fits data using neural networks, then exploits structural priors such as symmetry, separability, and invariance to decompose complex formulas into simpler symbolic submodules. The EQL framework~\citep{eql1,eql2} replaces standard neural activations with symbolic operators (e.g., $+, -, \sin$), prunes redundant connections, and extracts explicit analytic expressions. DGP~\cite{dgp} explores inferring symbolic operators through normalized neural weights, though it struggles with variable selection in high-dimensional settings. MetaSymNet~\cite{li2025metasymnet} proposes a signed neural architecture with adaptive structure and activation functions for symbolic discovery.

\textbf{Transformer-Based Methods.}
NeSymReS~\citep{nesy} casts symbolic regression as a sequence-to-sequence translation problem using Transformers pre-trained on synthetic data. Subsequent works integrate Transformers with MCTS or policy learning to guide symbolic search, including TPSR~\citep{TPSR}, DGSR-MCTS~\citep{dgsr_mcts}, and FormuleGPT~\citep{li2024generative}. SNIP~\citep{meidani2023snip} and MMSR~\citep{li2025mmsr} further explore multimodal learning for symbolic expression generation. ChatSR~\citep{li2024mllm} introduces large multimodal language models into symbolic regression. PGGP~\citep{han2025transformer} uses a pretrained Transformer to provide high-quality initialization and semantically guided mutation for genetic programming in symbolic regression, thereby combining the expression prior learned by the neural model with GP’s task-specific search.

\begin{figure*}[t]
\centering
\setlength{\belowcaptionskip}{-0.0cm} 
\includegraphics[width=130mm]{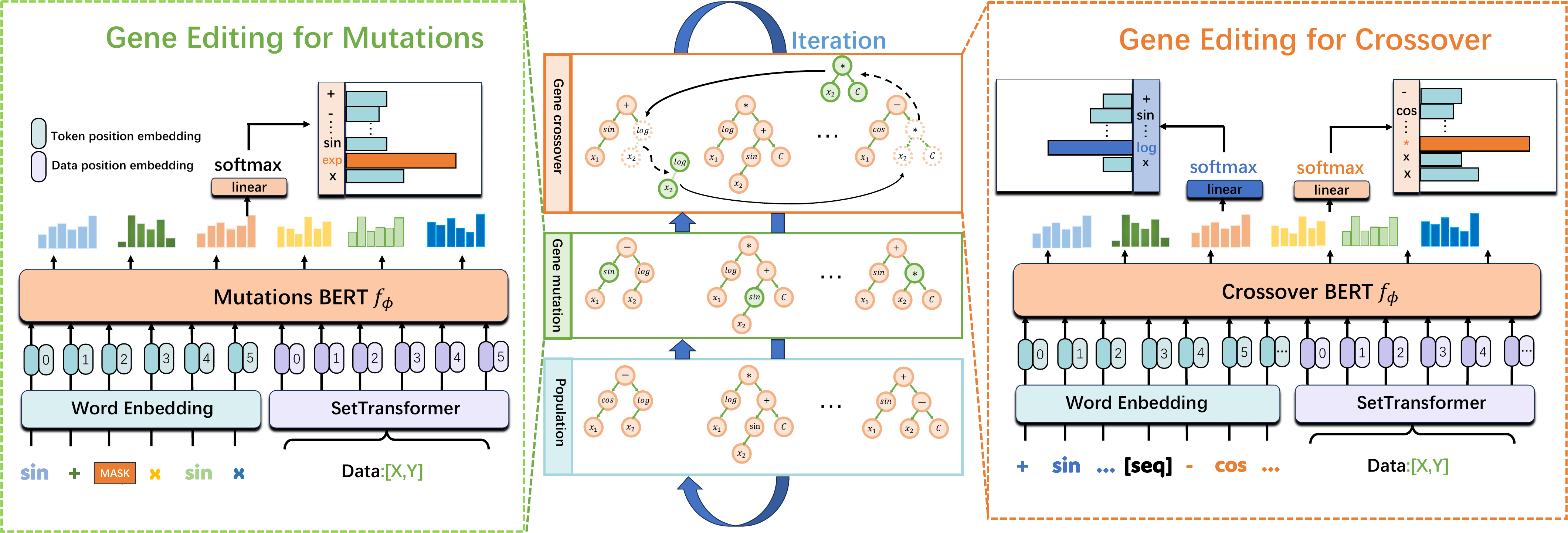}
\caption{Flowchart of the GESR. The left plot is the process of BERT guiding the mutation; The diagram of the evolution process is drawn on the right.
}
\label{fig1}
\end{figure*}
\section{Method}
\subsection{Overall Framework}

We propose a GESR symbolic regression framework that introduces multimodal BERT models to guide the random mutation and crossover process in traditional GP-based symbolic regression. By enabling controllable guided gene editing, the proposed method significantly improves search efficiency and regression performance. The overall framework is illustrated in Fig.~\ref{fig1}.

Within this framework, each symbolic expression is first represented as a binary expression tree and then linearized into a preorder traversal symbolic sequence for sequence-based modeling. First, to simulate mutation, a subset of operator or operand tokens in the sequence is randomly masked and jointly fed, together with the regression dataset $(X, Y)$, into a multimodal BERT. The data modality is encoded into permutation-invariant embeddings using a Set Transformer and is fused with symbolic token embeddings within the Transformer layers, enabling the model to perform data-conditioned gene editing based on the underlying data distribution.

The BERT model outputs a conditional probability distribution over the masked position and selects the symbol with the highest probability as the mutated gene. Subsequently, multiple crossover operations are performed among individuals in the mutated population. For each crossover, a candidate pair of individuals together with the data $[X, y]$ is fed into the BERT model, which probabilistically predicts the root nodes of the subtrees to be exchanged between the two individuals, after which the crossover operation is executed. Next, each expression in the population undergoes constant optimization via BFGS and is evaluated using regression goodness-of-fit metrics, such as $R^2$. Finally, the top $N$ individuals with the highest fitness scores are retained to form the next generation. This evolutionary process is iterated until convergence.

By replacing conventional random mutation and crossover with a data-driven strategy, the proposed framework preserves genetic diversity while steering the search process toward regions of higher fitness in the expression space, thereby achieving more efficient symbolic regression.

\subsection{Symbolic Editor: Multimodal BERT for Mutation}
To enable data-conditioned gene editing of symbolic expressions, we adopt a multimodal BERT architecture as the symbolic sequence decoding and editing module. This module takes as joint input a masked symbolic expression sequence and the embedding vectors produced by the data encoder, and learns to predict optimal symbol replacements conditioned on the regression dataset $(X, Y)$, thereby guiding expression evolution toward structures with improved fitting accuracy.

Specifically, each symbolic expression is first linearized from a binary expression tree into a preorder traversal sequence:
\[
s = (t_1, t_2, \dots, t_L),
\]
where $t_i$ denotes an operator, variable, or constant token. To simulate mutation in genetic programming, one or more tokens in the sequence are randomly selected and replaced with a special marker \texttt{[MASK]}, producing a masked sequence $\tilde{s}$.
Tokens in the masked sequence are mapped into a continuous representation space through a trainable embedding layer, with positional embeddings added to preserve sequential order. Meanwhile, data embeddings generated by the Set Transformer encoder are projected to the same dimensionality as symbolic embeddings and concatenated with token embeddings before being fed into a multi-layer Transformer encoder. This design enables the model to establish cross-modal attention links between symbolic structure and data distribution.
The BERT encoder consists of $L$ stacked self-attention layers, each containing Multi-Head Self-Attention and a Feedforward Network to capture global dependencies within the symbolic sequence. For each masked position, the model outputs a conditional probability distribution over the symbol vocabulary $\mathcal{V}$:

\begin{equation}
p(t_i \mid \tilde{s}, X, Y) = \mathrm{Softmax}(W h_i),
\end{equation}

where $h_i$ denotes the hidden representation at position $i$, and $W$ is a trainable linear projection matrix.

By maximizing the log-likelihood of the ground-truth token under this conditional 
distribution, BERT learns to recover plausible and high-fitness symbol replacements under data constraints, thereby enabling targeted gene editing. This mechanism allows the model to preferentially generate expression mutation with greater potential regression performance (e.g., higher $R^2$), rather than relying on unbiased random perturbations.

Key architectural configurations and training hyperparameters of the BERT model are summarized in Table~\ref{bert-tab}.

\subsection{Gene Editor: Multimodal BERT for Crossover}

Similar to the BERT model for mutation guidance, we further design a multimodal BERT for \textbf{crossover operations} to direct crossover. Unlike the setting in the previous subsection, where the model predicts the symbol at a single masked position, this model takes as input \textbf{a pair of candidate expressions} together with the corresponding dataset $(X,Y)$. As illustrated in Fig.~\ref{fig:crosser}, its objective is to predict the root nodes of the subtrees that are most suitable for crossover in the two parent expressions, thereby guiding the crossover process in genetic programming.

Specifically, let the two parent expressions be linearized into preorder traversal sequences
\[
s^{(a)}=(t^{(a)}_1,t^{(a)}_2,\dots,t^{(a)}_{L_a}),\qquad
s^{(b)}=(t^{(b)}_1,t^{(b)}_2,\dots,t^{(b)}_{L_b}).
\]
The two sequences are then concatenated into a joint sequence using a special separator token:
\[
s^{(ab)}=\bigl(s^{(a)},\texttt{[SEP]},s^{(b)}\bigr),
\]
which is fed into the multimodal BERT together with the sample representation produced by the data encoder. The remaining embedding construction and multimodal fusion mechanism are the same as those described in the previous subsection and are omitted here for brevity.

After Bert encoding, the model outputs, for each position in the two expressions, a conditional probability distribution indicating its suitability as a crossover point. Let positions $i$ and $j$ correspond to the root nodes of candidate subtrees in expressions $s^{(a)}$ and $s^{(b)}$, respectively. The corresponding distributions are defined as
\[
p(i\mid s^{(ab)},X,Y)=\mathrm{Softmax}(W_a h),
\]
\[
p(j\mid s^{(ab)},X,Y)=\mathrm{Softmax}(W_b h),
\]
where $h$ denote the hidden representation for input, and $W_a$ and $W_b$ are trainable linear projection matrices. Based on these distributions, the model selects the subtree root nodes with the highest probabilities in the two parent expressions and exchanges the corresponding subtrees to generate new offspring.

This mechanism replaces the random selection of crossover points in conventional genetic programming with a \textbf{data-driven crossover strategy}. In other words, the model learns not only whether two expressions can be crossed, but more importantly which structural fragments are worth exchanging under the current regression task. As a result, crossover is no longer an unbiased random perturbation, but is explicitly constrained by both the regression data distribution and the semantic structure of the expressions, making it more likely to produce offspring with better fitting performance.

In the actual evolutionary process, the mutated population undergoes multiple crossover operations. For each candidate pair, the crossover-guiding BERT predicts the root nodes of the subtrees to be exchanged based on the joint symbolic sequence and the data $(X,Y)$, and then performs the directed crossover operation. Subsequently, each newly generated expression is further optimized with BFGS for constant refinement and evaluated according to regression goodness-of-fit metrics such as $R^2$. Finally, the top $N$ individuals with the highest fitness scores are retained to form the next generation, and the evolutionary process is repeated until convergence.

\subsection{Data Encoder: SetTransformer}
Data information plays a crucial role in guiding the decoder. To respect the permutation invariance of the data---that is, the fact that the representation should not depend on the ordering of input samples---we adopt the SetTransformer as our data encoder, following~\cite{pmlr-v97-lee19d}. Our encoder takes as input a set of data points $\mathcal{D} = \{X, y\}$, where $X \in \mathbb{R}^{n \times d}$ denotes the input features and $y \in \mathbb{R}^n$ denotes the corresponding targets.

The data points are first passed through a trainable affine layer, which projects them into a latent space $h_n \in \mathbb{R}^{d_h}$. The resulting representations are then processed by a stack of Induced Set Attention Blocks (ISABs)~\cite{pmlr-v97-lee19d}, each of which employs cross-attention. In the first cross-attention layer, a set of learnable inducing points is used as queries, while the input data serves as keys and values. The outputs of this layer are then used as keys and values in a subsequent cross-attention layer, with the original data representations acting as queries.

After these attention layers, we apply dropout to mitigate overfitting. Finally, we standardize the output size via a last cross-attention layer, which uses another set of learnable vectors as queries, ensuring that the encoder output has a fixed dimensionality independent of the number of input samples. The hyperparameter settings of the SetTransformer are summarized in Table~\ref{set-tab}.

\subsection{Training data pair collection for BERT}

\subsubsection{Training data pair collection for Mutation}

To train BERT to learn symbol mutation strategies that improve regression fitting performance, we design a fitness-improvement-based data generation pipeline to construct high-quality editing samples. We sampled the binary operators have [+, -, *, /], unary operators have [sin, cos, exp, log, sqrt], variable operators have $[x_1,..., x_{10}]$, and there are constant placeholders [C]. We sample expressions with a maximum length limit of 60. The overall procedure is summarized as follows and fig.\ref{fig2b}:
\begin{wrapfigure}{r}{0.49\textwidth}
  \centering
  \vspace{-18pt}

  \subfloat[]{
    \includegraphics[width=0.95\linewidth]{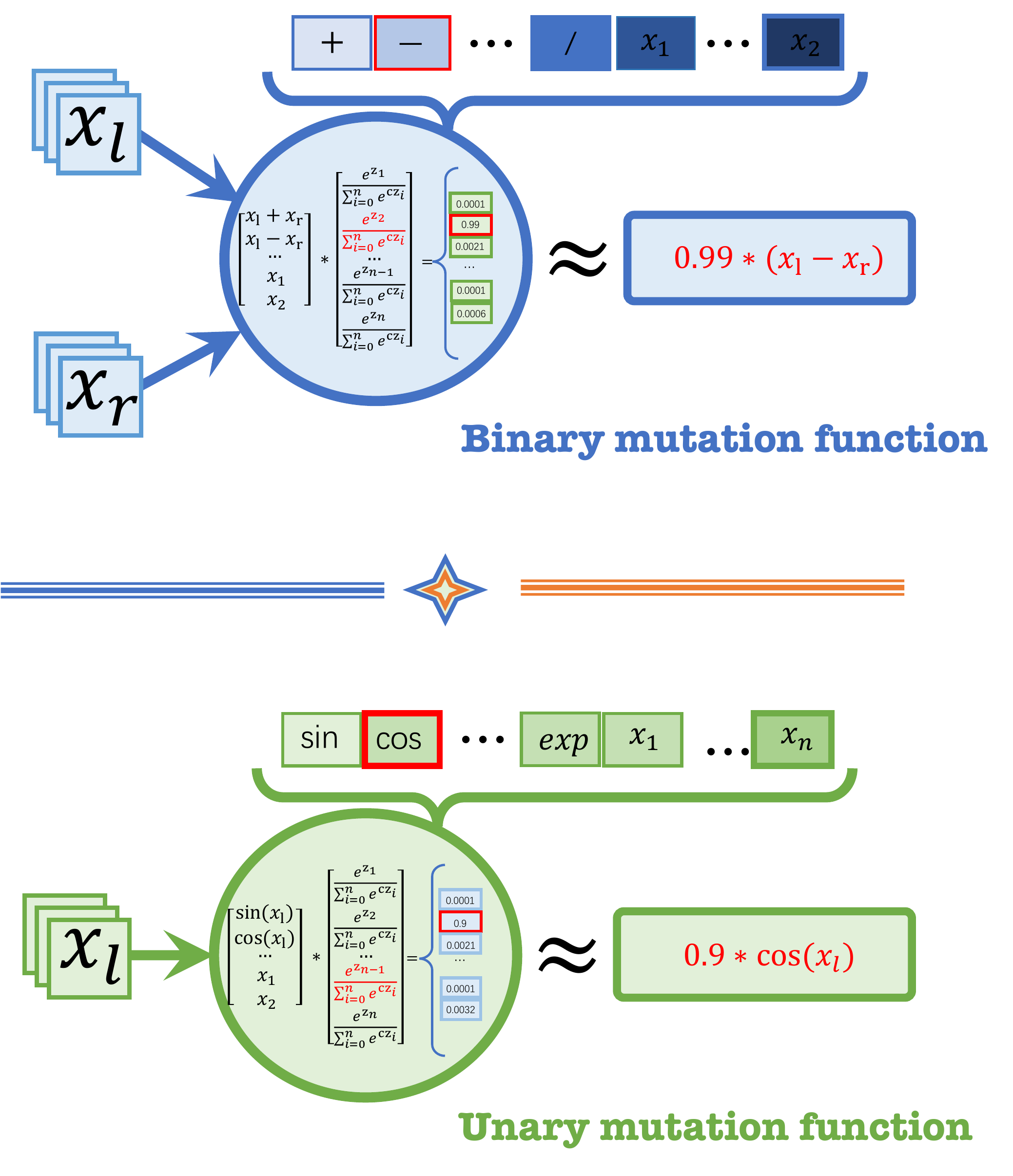}
    \label{fig2a}
  }

  \vspace{-4pt}

  \subfloat[]{
    \includegraphics[width=0.95\linewidth]{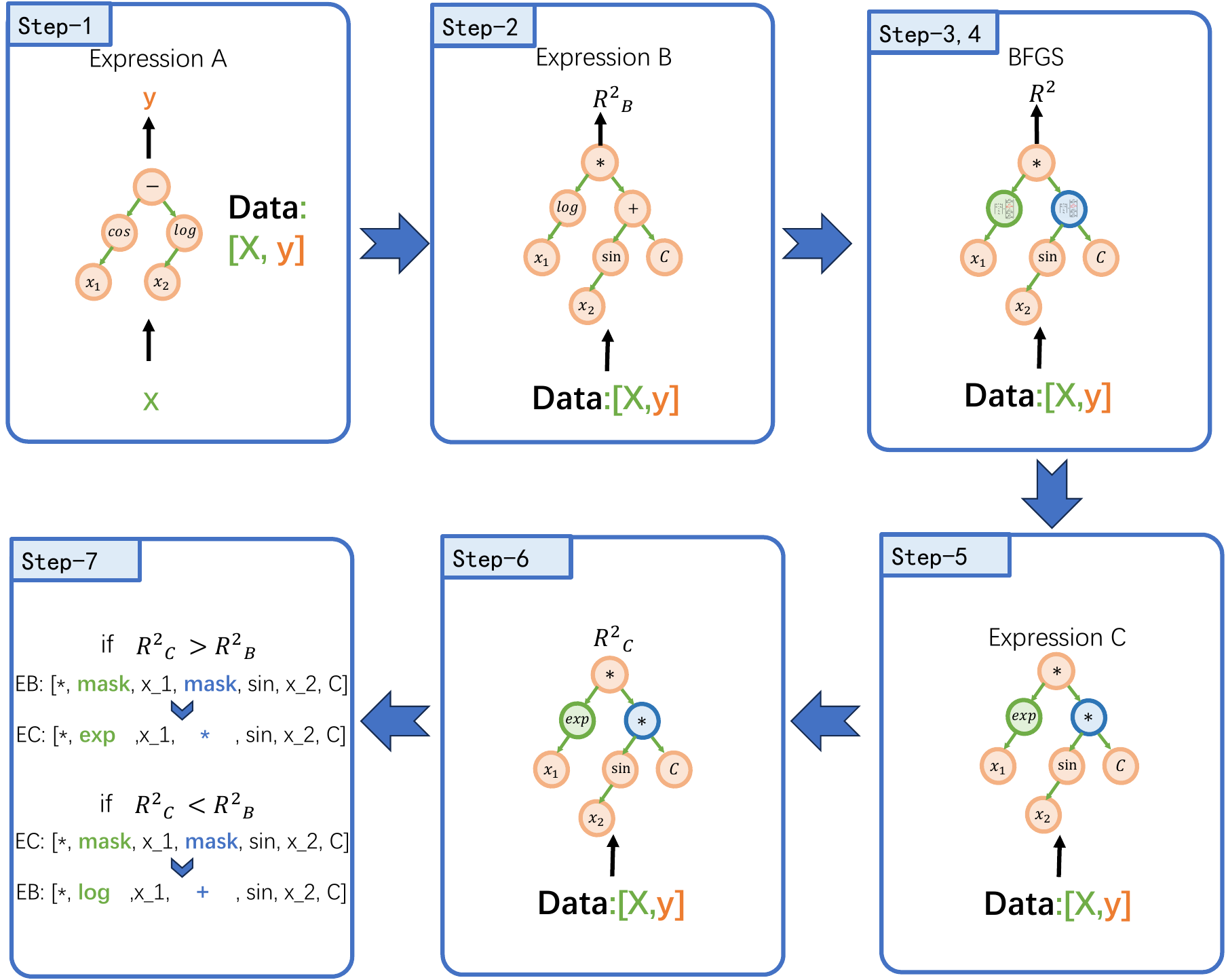}
    \label{fig2b}
  }

  \caption{
  Data collection flow for training BERT. Panel (a) shows the details of the unary mutation function and the binary mutation function. Panel (b) shows the detailed process of the whole data collection.
  }
  \label{fig2}
\end{wrapfigure}
\begin{enumerate}
    \item Randomly generate a target expression $A$, and sample a supervised dataset $(X, y)$ based on this expression;
    \item Randomly initialize a binary expression tree $B$, and compute its regression goodness-of-fit $R^2(B)$ on $(X, y)$;
    \item Select several nodes in expression $B$, and replace the corresponding operators or variables using a parameterized mutation function, constructing a continuously relaxed intermediate expression;
    \item Optimize the continuous parameters in the mutation function, as well as expression constants, using BFGS to obtain optimal mutation parameters;
    \item Discretize the optimized continuous mutation parameters into concrete symbols, thereby generating a mutated expression $C$;
    \item Compute the goodness-of-fit $R^2(C)$ of the mutated expression on $(X, y)$ and compare it with $R^2(B)$;
    \item If $R^2(C) > R^2(B)$, use $B$ as the input expression, mask the modified nodes, and assign the corresponding symbols in $C$ as supervision targets; otherwise, swap the roles of input and target, using $C$ as input and the corresponding symbols in $B$ as prediction targets.
\end{enumerate}

This construction strategy ensures that each training sample satisfies the constraint that the target expression achieves better fitting performance than the input expression, thereby guiding BERT to learn a regression-performance-oriented symbolic editing policy.

The mutation functions are divided into \textbf{unary mutation functions} and \textbf{binary mutation functions}, as shown in the figure. \ref{fig2a}. Both are formulated as weighted combinations of candidate operator outputs and variable outputs, where the weights are controlled by Softmax-normalized selection parameters. This design maintains differentiability of the mutation process in continuous space and enables efficient optimization.

In addition, in order to alleviate the problem of data leakage, we perform similarity retrieval and matching on each data point in the test data set after the generation of the training data. If there is the same preorder traversal, we will directly eliminate it from the training data.

\begin{figure*}[t]
\centering
\setlength{\belowcaptionskip}{-0.0cm} 
\includegraphics[width=130mm]{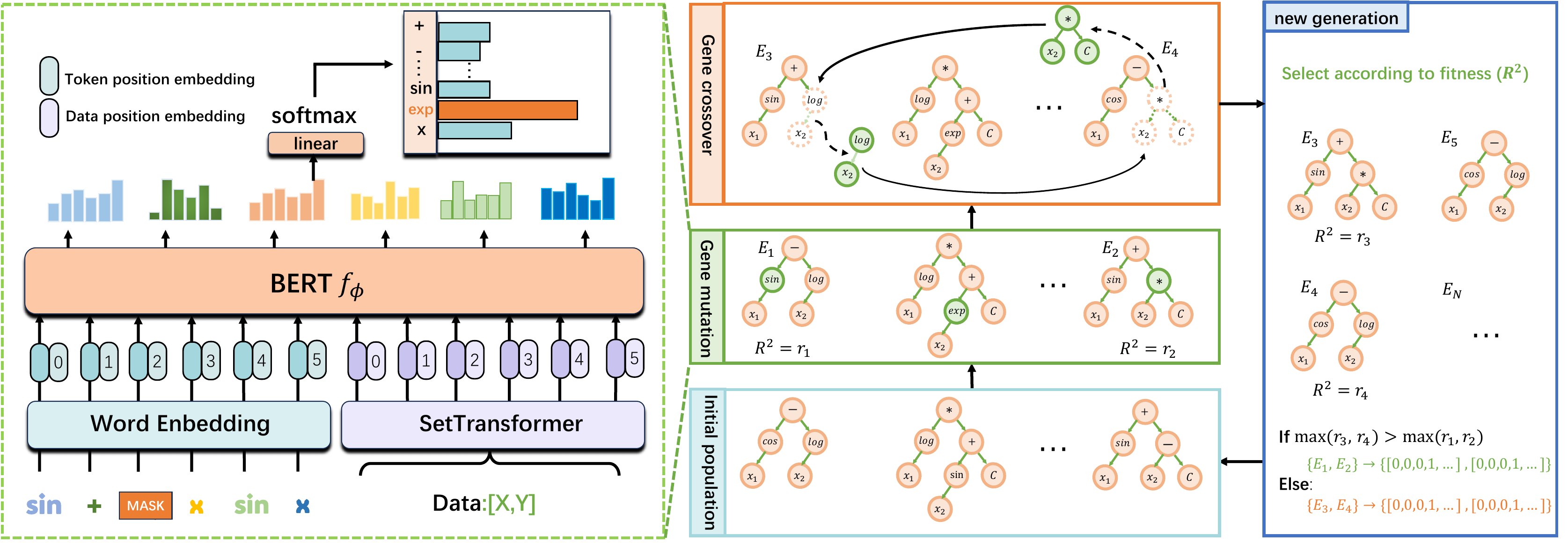}
\caption{Cross-guides Bert's training data collection flow.
}
\label{fig:crosser}
\end{figure*}
\subsubsection{Training data pair collection for Crossover}

The training data for the multimodal BERT used for crossover guidance are collected online during the \textbf{GP evolutionary process in which the mutation-guiding BERT has already been incorporated}. Specifically, for a given dataset $(X,Y)$, the population is first initialized by standard GP, and the mutation-guiding BERT trained in the first stage is then used to perform guided mutation on individuals, producing an intermediate population. Subsequently, a pair of candidate individuals, denoted by $E_1$ and $E_2$, is selected from this intermediate population, and their subtrees are exchanged according to the standard GP crossover mechanism to generate two offspring, $E_3$ and $E_4$.

Both the parent and offspring individuals are then subjected to constant optimization using BFGS, and their fitness values are evaluated according to regression goodness-of-fit metrics such as $R^2$, denoted by
\[
r_1,\;r_2,\;r_3,\;r_4.
\]
If
\[
\max(r_3,r_4)>\max(r_1,r_2),
\]
the crossover is regarded as effective, and the corresponding event is retained as a training sample.

To construct the sample, the two parent expressions $(E_1,E_2)$ are linearized into preorder traversal sequences,
\[
s_1^{(a)}=(t^{(a)}_1,\dots,t^{(a)}_{L_a}), \qquad
s_2^{(b)}=(t^{(b)}_1,\dots,t^{(b)}_{L_b}),
\]
and the positions of the two subtree root nodes actually involved in the crossover are encoded as supervision labels,
\[
y^{(a)}\in\{0,1\}^{L_a}, \qquad y^{(b)}\in\{0,1\}^{L_b},
\]
where only the crossover positions are assigned the value $1$. Accordingly, a training sample can be written as
\[
\bigl(s_1^{(a)},\,s_2^{(b)},\,[X,Y]\;;\;y^{(a)},\,y^{(b)}\bigr).
\]

Conversely, if
\[
\max(r_3,r_4)<\max(r_1,r_2),
\]
The construction is reversed: $E_3$ and $E_4$ are treated as the parent expressions to form the training sample, while the supervision labels remain unchanged. In this way, the crossover-guiding BERT learns crossover positions that can lead to fitness improvement under data constraints, rather than arbitrary crossover events, thereby better supporting directed crossover in the actual search process.

\section{Results}

In this paper, we conduct a comprehensive evaluation of GESR against four state-of-the-art baselines across 13 datasets, including SRBench. Our evaluation focuses on the following key aspects:

\ref{S4.1} How strong is the fitting performance of GESR?  

\ref{S4.2} How complex are the expressions discovered by GESR?  

\ref{S4.3} How efficient is GESR in inference?  

\ref{S4.4} How much does the introduction of BERT affect the performance of GESR?

\ref{S4.5} How does GESR perform in real physics applications?

\subsection{Fit Ability Test}
\label{S4.1}
In the comparative experiments, we strictly control experimental variables to ensure that all conditions other than algorithm type remain identical, thereby guaranteeing fairness and reliability. Specifically:  
(1) During the testing phase, all algorithms sample input $x_i$ within the same range for each target expression to ensure consistent test data distributions. The detailed sampling ranges are provided in Appendix \cref{a-tab1,a-tab2,a-tab3,a-tab5,a-tab6}.  
(2) For constraints introduced in GESR, we enable the corresponding constraints in baseline methods as well to prevent performance bias arising from constraint discrepancies, ensuring fair comparisons under identical rules.  
(3) The population size of GP-based methods is strictly kept consistent across all algorithms.

We adopt the coefficient of determination ($R^2$) \citep{r21,r22} as the primary metric for evaluating regression fitting performance. The computation of $R^2$ is defined as:
\begin{equation}
\label{e5}
\mathcal{R}^2 = 1 - \frac{\sum_{i=1}^{\mathcal{N}} (y_i - \hat{y}_i)^2}{\sum_{i=1}^{\mathcal{N}} (y_i - \overline{y})^2},
\end{equation}
where $\hat{y}_i$ denotes the predicted value, $\overline{y}$ represents the mean of the ground-truth labels $y$, and $\mathcal{N}$ is the number of test samples. We compare GESR with several widely used symbolic regression methods, including PySR, NGGP, GPlearn, and SNIP. The experimental results are summarized in Table~\ref{tab1}.

From the results, we observe that GESR achieves superior or comparable fitting performance to current state-of-the-art (SOTA) methods across multiple benchmark expressions and datasets, demonstrating its effectiveness in searching complex symbolic structures and performing high-precision regression.

\begin{table*}[t]
\renewcommand{\arraystretch}{1.1}
\centering
\resizebox{13.4cm}{!}{
\def\arraystretch{1.1}
    \small
        \bgroup
        \setlength{\tabcolsep}{0.4em}
        \begin{tabular}{c|l|cccccccccc}
\toprule
\multirow{2}{*}{\bf Group} & \multicolumn{1}{c|}{\multirow{2}{*}{\bf Dataset}} &
\multicolumn{2}{c}{\bf GESR}  & \multicolumn{2}{c}{\bf PYSR} &
\multicolumn{2}{c}{\bf NGGP} & \multicolumn{2}{c}{\bf SNIP } &
\multicolumn{2}{c}{\bf GPlearn} \\
\cmidrule(lr){3-12}
& & $R^2 \uparrow$ & Nodes $\downarrow$
& $R^2 \uparrow$ & Nodes $\downarrow$
& $R^2 \uparrow$ & Nodes $\downarrow$
& $R^2 \uparrow$ & Nodes $\downarrow$
& $R^2 \uparrow$ & Nodes $\downarrow$ \\
\cmidrule(lr){2-2}
\cmidrule(lr){3-4}
\cmidrule(lr){5-6}
\cmidrule(lr){7-8}
\cmidrule(lr){9-10}
\cmidrule(lr){11-12}

\multirow{10}{*}{\rotatebox{90}{Standards}}
& Nguyen
& \bestcell{$\mathbf{0.9999}_{\pm0.002}$} & \bestcell{13.6}
& \bestcell{$\mathbf{0.9999}_{\pm0.001}$} & \third{17.3}
& \bestcell{$\mathbf{0.9999}_{\pm0.004}$} & 20.3
& \bestcell{$\mathbf{0.9999}_{\pm0.003}$} & \second{16.6}
& \second{$0.9973_{\pm0.004}$} & 37.4 \\

& Keijzer
& \bestcell{$\mathbf{0.9998}_{\pm0.005}$} & \bestcell{14.7}
& \second{$0.9982_{\pm0.004}$} & 21.8
& $0.9924_{\pm0.005}$ & \second{18.4}
& \third{$0.9963_{\pm0.005}$} & \third{19.4}
& $0.9904_{\pm0.003}$ & 34.3 \\

& Korns
& \bestcell{$0.9964_{\pm0.004}$} & \bestcell{15.5}
& \second{$\mathbf{0.9987}_{\pm0.003}$} & \second{21.1}
& \third{$0.9872_{\pm0.006}$} & 26.6
& $0.9814_{\pm0.006}$ & \third{21.6}
& $0.9794_{\pm0.005}$ & 36.3 \\

& Constant
& \bestcell{$0.9997_{\pm0.003}$} & \bestcell{24.7}
& \second{$\mathbf{0.9996}_{\pm0.002}$} & \third{32.2}
& \third{$0.9988_{\pm0.005}$} & 38.2
& $0.9846_{\pm0.004}$ & \second{31.4}
& $0.9795_{\pm0.007}$ & 46.4 \\

& Livermore
& \bestcell{$\mathbf{0.9962}_{\pm0.006}$} & \bestcell{31.5}
& \second{$0.9886_{\pm0.004}$} & \third{38.3}
& \third{$0.9746_{\pm0.003}$} & 46.3
& $0.9695_{\pm0.005}$ & \second{35.2}
& $0.9014_{\pm0.007}$ & 49.8 \\

& Vladislavleva
& \bestcell{$0.9992_{\pm0.005}$} & \bestcell{20.2}
& \second{$\mathbf{0.9993}_{\pm0.004}$} & 31.6
& \third{$0.9963_{\pm0.006}$} & \third{31.3}
& $0.9877_{\pm0.006}$ & \second{28.4}
& $0.9623_{\pm0.006}$ & 38.7 \\

& R
& \bestcell{$\mathbf{0.9991}_{\pm0.006}$} & \bestcell{14.6}
& \second{$0.9881_{\pm0.004}$} & \second{19.3}
& \third{$0.9744_{\pm0.005}$} & \third{22.0}
& $0.9703_{\pm0.004}$ & 23.6
& $0.9253_{\pm0.005}$ & 28.1 \\

& Jin
& \bestcell{$\mathbf{0.9984}_{\pm0.005}$} & \bestcell{17.1}
& \third{$0.9914_{\pm0.003}$} & 28.8
& \second{$0.9916_{\pm0.004}$} & \third{22.0}
& $0.9824_{\pm0.005}$ & \second{19.5}
& $0.9601_{\pm0.007}$ & 32.4 \\

& Neat
& \bestcell{$\mathbf{0.9987}_{\pm0.006}$} & \bestcell{14.1}
& \second{$0.9944_{\pm0.006}$} & \second{19.6}
& $0.9827_{\pm0.004}$ & 22.7
& \third{$0.9879_{\pm0.005}$} & \third{21.4}
& $0.9426_{\pm0.006}$ & 32.3 \\

& Others
& \second{$0.9944_{\pm0.004}$} & \bestcell{23.9}
& \bestcell{$\mathbf{0.9952}_{\pm0.002}$} & \third{28.1}
& \third{$0.9861_{\pm0.003}$} & 33.2
& $0.9825_{\pm0.004}$ & \second{26.4}
& $0.9592_{\pm0.005}$ & 32.6 \\

\midrule
\multirow{3}{*}{\rotatebox{90}{SRBench}}
& Feynman
& \bestcell{$\mathbf{0.9946}_{\pm0.004}$} & \bestcell{17.2}
& \second{$0.9893_{\pm0.002}$} & 25.6
& $0.9710_{\pm0.004}$ & \second{24.1}
& \third{$0.9814_{\pm0.006}$} & \third{24.3}
& $0.9025_{\pm0.006}$ & 27.6 \\

& Strogatz
& \second{$0.9876_{\pm0.006}$} & \bestcell{21.7}
& \bestcell{$\mathbf{0.9923}_{\pm0.003}$} & \second{27.4}
& \third{$0.9613_{\pm0.003}$} & \third{27.8}
& $0.9366_{\pm0.005}$ & 29.6
& $0.8813_{\pm0.006}$ & 29.9 \\

& Black-box
& \bestcell{$\mathbf{0.9364}_{\pm0.003}$} & \bestcell{23.7}
& \second{$0.9037_{\pm0.004}$} & 38.3
& \third{$0.9033_{\pm0.004}$} & \third{32.9}
& $0.8984_{\pm0.005}$ & \second{24.9}
& $0.8738_{\pm0.004}$ & 39.2 \\

\midrule
\multirow{1}{*}{\rotatebox{90}{ }}
& Average
& \bestcell{$\mathbf{0.9903}$} & \bestcell{22.0}
& \second{$0.9876$} & \third{26.9}
& \third{$0.9784$} & 28.1
& $0.9738$ & \second{24.8}
& $0.9427$ & 35.8 \\
\bottomrule

\end{tabular}
\egroup
}
\caption{The results of performance comparison. At a 0.95 confidence level, a comparison of the coefficient of determination ($R^2$) and the expression complexity (Nodes) was conducted between \textcolor{orange}{GESR} and four baselines.}
\label{tab1}
\end{table*}

\subsection{Result Complexity Test}
\label{S4.2}
In symbolic regression, our ultimate goal is not only to achieve high fitting accuracy but also to produce compact and highly interpretable analytic expressions. If the generated expressions become overly complex, their physical meaning and human interpretability may significantly deteriorate. Therefore, we measure result complexity by counting the number of nodes in the corresponding expression binary tree.

To ensure a fair comparison, we select a subset of expressions from each benchmark dataset such that all competing methods achieve $R^2 > 0.999$, and use this subset as the complexity evaluation set.

For each expression in this set, every algorithm is executed independently 20 times, and the average number of nodes in the generated expressions is reported. Additionally, to avoid bias caused by inconsistent search spaces, we set the maximum expression length to 80 for all methods.
The experimental results are summarized in Table~\ref{tab1}. As shown in the table, GESR is able to generate more compact expression structures on most datasets, with an average node count significantly lower than that of several mainstream symbolic regression methods, and comparable to or better than the strongest baseline. These findings indicate that the gene-editing-driven search mechanism in GESR not only improves fitting performance but also effectively suppresses expression bloat, thereby producing concise and interpretable analytic formulas while maintaining high accuracy.
\begin{wrapfigure}{r}{0.46\textwidth}
  \centering
  \vspace{-12pt}
  \includegraphics[width=\linewidth]{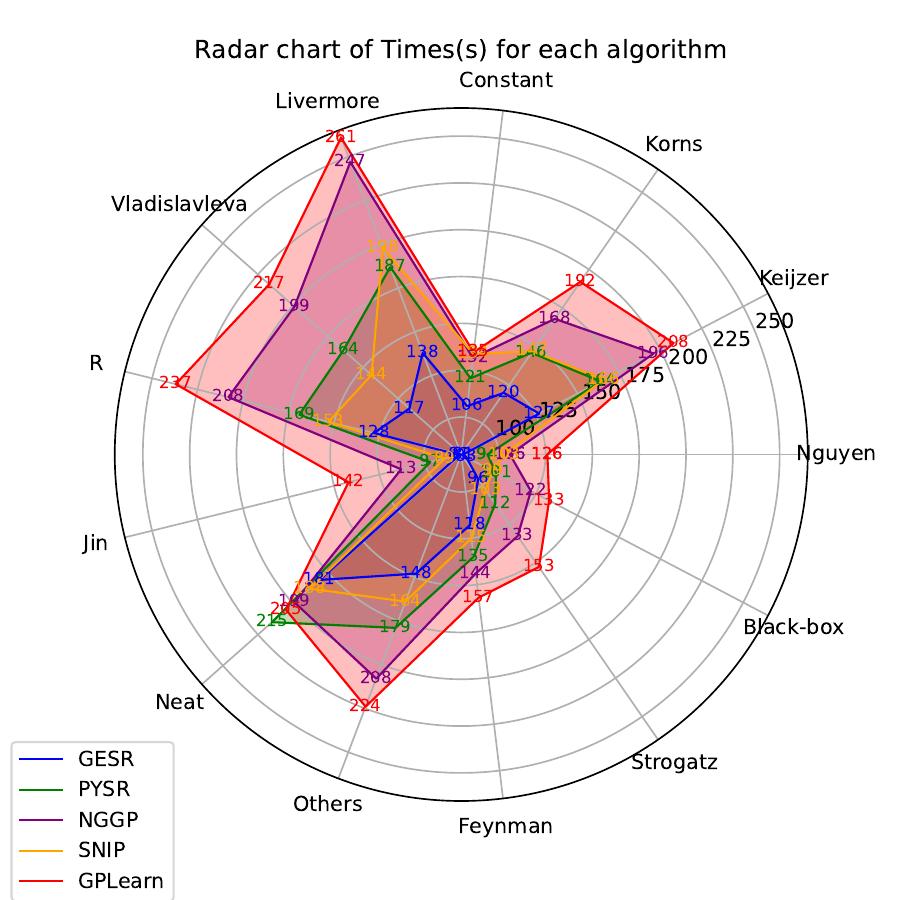}
  \caption{Inference time (s) radar chart for each algorithm.}
  \label{fig:radar}
  \vspace{-10pt}
\end{wrapfigure}

\subsection{Inference Time Test}
\label{S4.3}
Beyond fitting accuracy, inference efficiency is a critical factor in evaluating the practical applicability of symbolic regression algorithms, particularly for our method. To systematically assess the inference efficiency of GESR, we select a subset of expressions from all benchmark datasets such that all compared algorithms achieve $R^2 > 0.999$, and define this subset as the evaluation dataset $\mathbb{A}$. On dataset $\mathbb{A}$, we repeat the inference process of each algorithm 10 times and record the average inference time required to reach $R^2 = 0.999$.

We visualize the inference time of different algorithms using a radar chart, as shown in Fig.~\ref{fig2}. Each color in the figure corresponds to a specific method. From the chart, it can be observed that GESR achieves significantly superior inference efficiency compared to most baseline approaches while maintaining high fitting accuracy, providing strong empirical evidence of its efficiency advantages on standard benchmark tasks.

\begin{wraptable}{r}{0.58\textwidth}
\vspace{-0.35cm}
\centering
\fontsize{5.8pt}{6.6pt}\selectfont
\setlength{\tabcolsep}{1.2pt}
\renewcommand{\arraystretch}{0.90}

\resizebox{\linewidth}{!}{
\begin{tabular}{lccccccccc}
\toprule[1.2pt]
& \multicolumn{3}{c}{GESR (M+C)}
& \multicolumn{3}{c}{GESR (M)}
& \multicolumn{3}{c}{GESR NO} \\
\cmidrule(lr){2-4}
\cmidrule(lr){5-7}
\cmidrule(lr){8-10}
& $R^2 \uparrow$ & Nodes $\downarrow$ & Time $\downarrow$
& $R^2 \uparrow$ & Nodes $\downarrow$ & Time $\downarrow$
& $R^2 \uparrow$ & Nodes $\downarrow$ & Time $\downarrow$ \\

\midrule
Nguyen        & \cellcolor{gray!25}0.9999 & \cellcolor{gray!25}13.6 & \cellcolor{gray!25}81
              & \cellcolor{gray!25}0.9999 & 14.5 & 84
              & 0.9572 & 33.9 & 118 \\
Keijzer       & \cellcolor{gray!25}0.9998 & \cellcolor{gray!25}14.7 & \cellcolor{gray!25}127
              & 0.9997 & 16.2 & 133
              & 0.9514 & 32.8 & 176 \\
Korns         & \cellcolor{gray!25}0.9964 & \cellcolor{gray!25}15.5 & \cellcolor{gray!25}120
              & 0.9938 & 16.8 & 123
              & 0.9489 & 35.6 & 161 \\
Constant      & \cellcolor{gray!25}0.9997 & \cellcolor{gray!25}24.7 & \cellcolor{gray!25}106
              & 0.9995 & 28.5 & 108
              & 0.9531 & 41.2 & 144 \\
Livermore     & \cellcolor{gray!25}0.9962 & \cellcolor{gray!25}31.5 & \cellcolor{gray!25}138
              & 0.9903 & 36.4 & 145
              & 0.9427 & 48.6 & 228 \\
Vladislavleva & \cellcolor{gray!25}0.9992 & \cellcolor{gray!25}20.2 & \cellcolor{gray!25}117
              & 0.9986 & 24.8 & 122
              & 0.9498 & 39.5 & 182 \\
R             & \cellcolor{gray!25}0.9991 & \cellcolor{gray!25}14.6 & \cellcolor{gray!25}128
              & 0.9962 & 15.7 & 135
              & 0.9540 & 33.7 & 194 \\
Jin           & \cellcolor{gray!25}0.9984 & \cellcolor{gray!25}17.1 & \cellcolor{gray!25}81
              & 0.9974 & 20.2 & 84
              & 0.9566 & 36.9 & 129 \\
Neat          & \cellcolor{gray!25}0.9987 & \cellcolor{gray!25}14.1 & \cellcolor{gray!25}181
              & 0.9983 & 16.3 & 187
              & 0.9524 & 34.8 & 206 \\
Others        & \cellcolor{gray!25}0.9944 & \cellcolor{gray!25}23.9 & \cellcolor{gray!25}148
              & 0.9938 & 26.9 & 152
              & 0.9549 & 42.5 & 192 \\
Feynman       & \cellcolor{gray!25}0.9946 & \cellcolor{gray!25}17.2 & \cellcolor{gray!25}118
              & 0.9925 & 20.4 & 121
              & 0.9482 & 35.1 & 171 \\
Strogatz      & \cellcolor{gray!25}0.9876 & \cellcolor{gray!25}21.7 & \cellcolor{gray!25}96
              & 0.9813 & 23.6 & 99
              & 0.9396 & 46.3 & 156 \\
Black-box     & \cellcolor{gray!25}0.9364 & \cellcolor{gray!25}23.7 & \cellcolor{gray!25}83
              & 0.9331 & 25.4 & 84
              & 0.9518 & 41.6 & 134 \\

\midrule
Average       & \cellcolor{gray!25}0.9903 & \cellcolor{gray!25}22.0 & \cellcolor{gray!25}117.2
              & 0.9903 & 22.0 & 124.4
              & 0.9508 & 38.6 & 176.1 \\
\bottomrule
\end{tabular}
}

\caption{Ablation study on GESR. Gray cells indicate the best performance across the three variants for each metric. M denotes mutation and C denotes crossover.}
\label{pk}
\vspace{-0.3cm}
\end{wraptable}

We attribute the high inference efficiency of GESR primarily to its gene-editing-driven search mechanism. Conventional symbolic regression methods typically rely on random mutation, reinforcement learning, or evolutionary search to perform large-scale discrete combinatorial optimization in high-dimensional structural spaces, resulting in substantial inference costs. In contrast, GESR reformulates symbolic mutation as a learning-guided local gene editing problem. By leveraging BERT to predict the most promising symbol substitutions in a continuous embedding space, GESR effectively transforms the original combinatorial structural search into a constrained numerical optimization and conditional generation problem.

\begin{figure*}[t]
    \centering
       \subfloat[]{
       \includegraphics[width=0.98\linewidth]{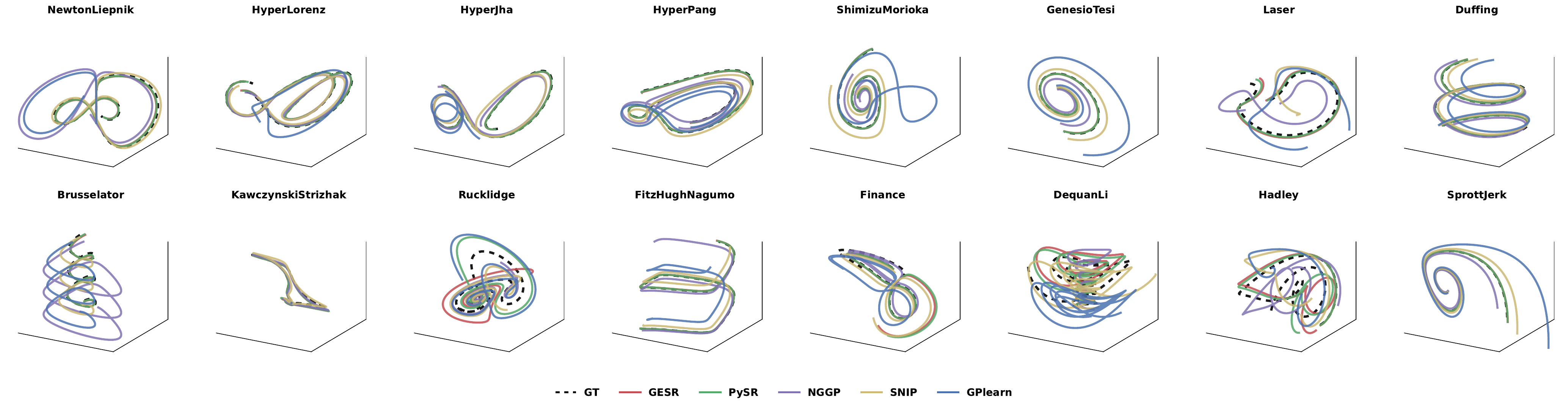} \label{fig3a}}
        \\
        \subfloat[]{
        \vspace{-0.5cm}
        \hspace{-0.3cm}
        \includegraphics[width=0.98 \linewidth]{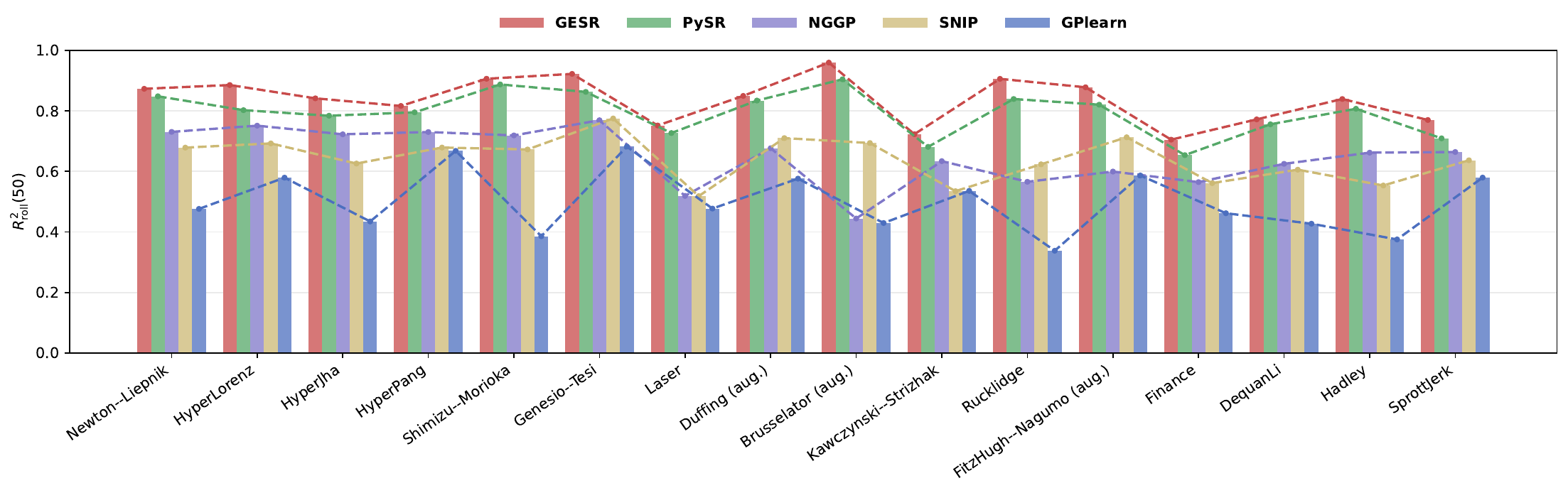}\label{fig3b}}   
	  \caption{
   Chaotic system trajectory prediction results. Where panel a shows the curves of the predicted and true trajectories of GESR and baselines. Figure b shows the Short-horizon rollout consistency of each algorithm  
  }
\label{fig3} 
\end{figure*}

\subsection{Ablation Study on the Gene Editing Module}
\label{S4.4}

To quantitatively evaluate the roles of the two components of the gene editing module, namely \textbf{mutation guidance} and \textbf{crossover guidance}, we conduct a staged ablation study within the GESR framework. Specifically, we compare the following three configurations:
\begin{itemize}
    \item \textbf{GESR NO}: no learning-based guidance is introduced, and standard GP with random mutation and random crossover is used;
    \item \textbf{GESR (M)}: only the mutation-guiding BERT is introduced, while crossover remains random;
    \item \textbf{GESR (M+C)}: both mutation guidance and crossover guidance are introduced, corresponding to the full GESR model.
\end{itemize}
All three methods are evaluated under the same datasets, initialization schemes, computational budgets, and hyperparameter settings. The results are reported in Table~\ref{pk}.

From the overall trend, introducing mutation guidance leads to clear improvements over the unguided version in fitting performance, expression complexity, and search efficiency, indicating that data-conditioned directed mutation can effectively improve search quality and suppress unnecessary structural expansion. On top of this, further introducing crossover guidance yields additional overall gains. Notably, even though an extra BERT module is involved, the inference efficiency is still improved. This suggests that crossover guidance can further improve the quality of structural recombination and reduce the redundant search caused by inefficient crossover operations.

Overall, the ablation study shows that both mutation guidance and crossover guidance substantially enhance the fitting performance, search efficiency, and stability of the model. When combined, the full GESR achieves the best balance among accuracy, compactness, and efficiency.

\subsection{Vector Field Learning on Chaotic Systems}
\label{S4.5}
Chaotic dynamical systems provide a stringent benchmark for evaluating the learning capacity of data-driven models, as their dynamics are governed by strong nonlinearities and multivariate couplings. In such systems, even small local modeling errors can rapidly amplify during numerical integration, inevitably leading to long-term trajectory divergence. Therefore, evaluation should emphasize the ability to learn the local vector field structure, rather than long-horizon pointwise trajectory matching. We evaluate the vector-field modeling performance of five symbolic regression methods across 16 chaotic systems in table\ref{table:cs_1to4},\ref{table:cs_5to8},\ref{table:cs_9to12},\ref{table:cs_13to16}. And shown in fig. \ref{fig_16c} 

All systems are governed by autonomous ordinary differential equations:
\begin{equation}
\dot{\mathbf{x}}(t) = \mathbf{f}(\mathbf{x}(t)),
\end{equation}
and are reformulated as a supervised regression task from state to time derivative. All methods are trained under identical data splits, numerical differentiation procedures, and training configurations to ensure a fair comparison.

\subsection{Reformulating Dynamical Systems as Symbolic Regression}

To identify analytical representations of chaotic systems without direct access to the underlying differential equations, we cast dynamical modeling as a symbolic regression problem for vector field approximation. For each system, we first generate state trajectories $\mathbf{x}(t)$ via high-precision numerical integration from a given initial condition, and uniformly sample discrete state observations $\{\mathbf{x}_i\}_{i=1}^{N}$. Since the true derivatives $\dot{\mathbf{x}}(t)$ are not directly observable, we estimate velocity targets $\tilde{\dot{\mathbf{x}}}_i$ using smoothed numerical differentiation. This yields a supervised dataset
\[
\mathcal{D}=\{(\mathbf{x}_i,\tilde{\dot{\mathbf{x}}}_i)\}_{i=1}^{N},
\]
and we learn an analytic mapping $f_j:\mathbb{R}^d\to\mathbb{R}$ for each state dimension such that
\begin{equation}
\tilde{\dot{x}}_{i,j}\approx f_j(\mathbf{x}_i).
\end{equation}
The resulting vector field $\hat{\mathbf{f}}(\mathbf{x})=(f_1(\mathbf{x}),\dots,f_d(\mathbf{x}))$ constitutes a symbolic representation of the system dynamics. This formulation converts chaotic system identification into a multi-output (dimension-wise factorized) symbolic regression problem, whose induced trajectories can be further evaluated through numerical integration.

We adopt two complementary evaluation metrics:  

(1) \textbf{Derivative fitting accuracy} $R^2_{\dot{\mathbf{x}}}$, measuring local vector-field regression quality, is shows in Table \ref{tab:chaos_r2_full_part1} and  \ref{tab:chaos_r2_full_part2};  

(2) \textbf{Short-horizon rollout consistency} $R^2_{\mathrm{roll}}(50)$, assessing dynamical stability within a finite Lyapunov time window, is shown in \ref{fig3}.

Experimental results show that GESR achieves the highest or tied-best vector-field fitting accuracy on most chaotic systems, and exhibits the most stable short-term trajectory consistency. Phase-space trajectory visualizations further demonstrate that GESR better preserves the geometric structure of true attractors over short to medium time horizons, whereas baselines are more prone to trajectory drift or structural degradation.

\section{Discussion}
\label{sec:results_discussion}

GESR introduces two BERT models to jointly guide the mutation and crossover processes in genetic programming, replacing traditional random genetic operators with context-aware ``gene editing'' operations. Specifically, mathematical expressions are represented as symbolic sequences. During mutation, the mutation-guiding BERT predicts suitable replacements for masked positions, thereby generating expressions that are better aligned with the given data. During crossover, the crossover-guiding BERT takes as input a pair of candidate expressions together with the corresponding dataset, and predicts the subtree root nodes that are most suitable for exchange, thereby enabling data-driven structural recombination. This dual guidance mechanism preserves population diversity while effectively reducing ineffective and destructive mutations and crossovers, transforming the search process from unguided exploration into directed optimization. Experimental results demonstrate that GESR achieves high predictive accuracy across multiple symbolic regression benchmarks and outperforms conventional GP methods in terms of convergence speed, search efficiency, and stability, thereby validating the effectiveness of learning-guided mutation and crossover for symbolic regression.

\textbf{Limitations and Future Directions}

Despite its strong performance and efficiency, GESR still has several limitations. First, GESR depends to some extent on the quality of the pretrained BERT models. 

Second, the current model has limitations on the dimensionality of the input data. When the dimensionality of the test data exceeds the range supported during pretraining, the model can no longer function properly. Therefore, GESR still faces scalability challenges on very large-scale datasets.

Future work will focus on developing lighter and more transferable symbolic editing models, designing hybrid search mechanisms that balance guidance and diversity, and exploring scalable frameworks for high-dimensional inputs and very large-scale datasets.

\nocite{langley00}

\section{Code availability}
After the paper is accepted, we will open-source the source code
\section{Acknowledgments}

\paragraph{Funding:}
This work was supported in part by the National Natural Science Foundation of China under Grant 92370117, in part by CAS Project for Young Scientists in Basic Research under Grant YSBR-090, and in part by the Key Research Program of the Chinese Academy of Sciences under Grant XDPB22.

\paragraph{Competing interests:}
All authors of the article have no competing interests.

\bibliography{sn-bibliography}

\newpage
\appendix
\onecolumn
\section*{Appendix For GESR: A Genetic Programming–Based Symbolic Regression Method with Gene
Editing}

\startcontents[app]             
\printcontents[app]{ }{1}{\setcounter{tocdepth}{3}} 
\newpage

\section{pseudocode for GESR}

\subsection{GESR: Learning-guided Genetic Programming Framework.}
We propose \textbf{GESR}, a symbolic regression framework that integrates multimodal representation learning with genetic programming. Instead of relying on purely random mutation and crossover, GESR employs two multimodal BERT editors to perform data-conditioned, structure-aware gene editing. For mutation, each symbolic expression is linearized into a preorder sequence and partially masked, and the mutation-guiding BERT predicts replacement symbols by jointly attending to symbolic context and dataset embeddings encoded by a Set Transformer. For crossover, two mutated expressions are concatenated with a special separator token and fed, together with the dataset embedding, into a second BERT model that predicts the subtree root nodes to be exchanged. The edited population is then refined through BFGS-based constant optimization, and individuals are selected according to regression fitness (e.g., $R^2$), guiding the search toward more accurate and compact symbolic models.

\begin{algorithm}[H]
\caption{GESR: Learning-guided Genetic Programming for Symbolic Regression}
\label{alg:gesr}
\begin{algorithmic}[1]
\REQUIRE dataset $\mathcal{D}=(X,y)$; population size $N$; generations $G$; grammar $\mathcal{G}$; mutation editor $\textsc{BERT}^{\mathrm{mut}}_{\phi}$; crossover editor $\textsc{BERT}^{\mathrm{cross}}_{\psi}$; data encoder \textsc{SetTransformer}; mask rate $\rho$; crossover rate $\gamma$; elite size $K$
\ENSURE best expression $E^\star$

\STATE $P \leftarrow \{\textsc{RandomExprTree}(\mathcal{G})\}_{i=1}^{N}$; \quad $E_{\text{data}} \leftarrow \textsc{SetTransformer}(X,y)$
\STATE $E^\star \leftarrow \varnothing$; \quad $f^\star \leftarrow -\infty$

\FOR{$g=1,\dots,G$}
    \STATE $P_{\text{mut}} \leftarrow \emptyset$
    \FORALL{$E \in P$}
        \STATE $s \leftarrow \textsc{PreorderLinearize}(E)$; \quad $\tilde{s} \leftarrow \textsc{Mask}(s,\rho)$
        \STATE $p^{\mathrm{mut}} \leftarrow \textsc{BERT}^{\mathrm{mut}}_{\phi}(\tilde{s},E_{\text{data}})$
        \STATE $s' \leftarrow \textsc{FillMaskByArgmax}(\tilde{s},p^{\mathrm{mut}})$
        \STATE $P_{\text{mut}} \leftarrow P_{\text{mut}} \cup \{\textsc{DecodeAndRepairToTree}(s',\mathcal{G})\}$
    \ENDFOR

    \STATE $P_{\text{cross}} \leftarrow \emptyset$
    \FORALL{$(E^{(a)},E^{(b)}) \in \textsc{SamplePairs}(P_{\text{mut}},\gamma)$}
        \STATE $s^{(a)} \leftarrow \textsc{PreorderLinearize}(E^{(a)})$; \quad $s^{(b)} \leftarrow \textsc{PreorderLinearize}(E^{(b)})$
        \STATE $s^{(ab)} \leftarrow \bigl(s^{(a)},\texttt{[SEP]},s^{(b)}\bigr)$
        \STATE $(p^{(a)},p^{(b)}) \leftarrow \textsc{BERT}^{\mathrm{cross}}_{\psi}(s^{(ab)},E_{\text{data}})$
        \STATE $i^\star \leftarrow \arg\max_i p^{(a)}_i$; \quad $j^\star \leftarrow \arg\max_j p^{(b)}_j$
        \STATE $(\hat{E}^{(a)},\hat{E}^{(b)}) \leftarrow \textsc{SwapSubtrees}\!\left(E^{(a)},E^{(b)},i^\star,j^\star,\mathcal{G}\right)$
        \STATE $P_{\text{cross}} \leftarrow P_{\text{cross}} \cup \{\hat{E}^{(a)},\hat{E}^{(b)}\}$
    \ENDFOR
    \STATE $P_{\text{cross}} \leftarrow \textsc{ResizePopulation}(P_{\text{cross}},P_{\text{mut}},N)$

    \STATE $\mathcal{F} \leftarrow [\;]$
    \FORALL{$E \in P_{\text{cross}}$}
        \STATE $\hat{E} \leftarrow \textsc{BFGSOptimizeConstants}(E,X,y)$
        \STATE $f \leftarrow \textsc{FitnessR2}(\hat{E},X,y)$
        \STATE $\mathcal{F}.\textsc{append}((\hat{E},f))$
        \IF{$f > f^\star$}
            \STATE $f^\star \leftarrow f$; \quad $E^\star \leftarrow \hat{E}$
        \ENDIF
    \ENDFOR

    \STATE $P \leftarrow \textsc{SelectNextPopulation}(\mathcal{F},N,K)$
\ENDFOR

\STATE \textbf{return} $E^\star$
\end{algorithmic}
\end{algorithm}

\subsection{Training pair construction pipeline for mutation guidance. }
As illustrated in the pseudocode, we design a training pair construction pipeline based on regression performance improvement to supervise the multimodal BERT in learning effective symbolic editing strategies. The pipeline first randomly generates a target expression and samples a regression dataset $(X, y)$, then initializes a candidate expression $B$ and evaluates its fitting performance. Next, differentiable mutation operators are applied to $B$, and the continuous mutation parameters as well as expression constants are optimized using BFGS, producing a discretized mutated expression $C$. By comparing $R^2(B)$ and $R^2(C)$, the expression with superior fitting performance is consistently selected as the supervision target, while the corresponding nodes of the inferior expression are masked to form masked-input–target-token training samples. This strategy guarantees that all training pairs reflect performance improvement, thereby guiding BERT to learn structure-aware symbolic edits oriented toward regression accuracy.
\begin{algorithm}[H]
\caption{Training Pair Collection for Multimodal BERT Editor (GESR)}
\label{alg:collect_pairs}
\begin{algorithmic}[1]
\REQUIRE $N_{\text{pairs}}$; grammar $\mathcal{G}$; unary candidates $\mathcal{C}_u$; binary candidates $\mathcal{C}_b$; mask token \texttt{[MASK]}
\ENSURE Training set $\mathcal{T}=\{(\tilde{s},X,y,\mathcal{Y}_{\text{tok}})\}$

\STATE $\mathcal{T} \leftarrow \emptyset$
\WHILE{$|\mathcal{T}| < N_{\text{pairs}}$}
    \STATE $A \leftarrow \textsc{RandomExprTree}(\mathcal{G})$ \hfill \algc{random target expression}
    \STATE $(X,y) \leftarrow \textsc{SampleDataFromExpr}(A)$ \hfill \algc{sample supervised data}

    \STATE $B \leftarrow \textsc{RandomExprTree}(\mathcal{G})$ \hfill \algc{random initial expression}
    \STATE $R^2_B \leftarrow \textsc{FitnessR2}(B,X,y)$

    \STATE $\mathcal{N} \leftarrow \textsc{SelectMutationNodes}(B)$ \hfill \algc{choose nodes to mutate}
    \STATE $\Theta \leftarrow \textsc{InitContinuousParams}(\mathcal{N})$ \hfill \algc{softmax logits + constants}

    \STATE $B_{\mathrm{rel}}(\cdot;\Theta) \leftarrow \textsc{ReplaceWithDiffMutations}(B,\mathcal{N},\Theta,\mathcal{C}_u,\mathcal{C}_b)$
    \STATE $\Theta^\star \leftarrow \textsc{BFGS}\Big(\min_{\Theta}\;\textsc{MSE}(B_{\mathrm{rel}}(X;\Theta),y)\Big)$ \hfill \algc{optimize params}

    \STATE $C \leftarrow \textsc{DiscretizeMutation}(B_{\mathrm{rel}},\Theta^\star)$ \hfill \algc{argmax discretization}
    \STATE $C \leftarrow \textsc{SimplifyIfNeeded}(C)$
    \STATE $R^2_C \leftarrow \textsc{FitnessR2}(C,X,y)$

    \IF{$R^2_C > R^2_B$}
        \STATE $E_{\text{in}} \leftarrow B$; \;\; $E_{\text{out}} \leftarrow C$ \hfill \algc{ensure target is better}
    \ELSE
        \STATE $E_{\text{in}} \leftarrow C$; \;\; $E_{\text{out}} \leftarrow B$ \hfill \algc{swap roles}
    \ENDIF

    \STATE $s_{\text{in}} \leftarrow \textsc{PreorderLinearize}(E_{\text{in}})$
    \STATE $s_{\text{out}} \leftarrow \textsc{PreorderLinearize}(E_{\text{out}})$

    \STATE $\mathcal{I} \leftarrow \textsc{FindChangedPositions}(s_{\text{in}},s_{\text{out}},\mathcal{N})$ \hfill \algc{indices to mask}
    \STATE $\tilde{s} \leftarrow s_{\text{in}}$
    \FORALL{$k \in \mathcal{I}$}
        \STATE $\tilde{s}[k] \leftarrow \texttt{[MASK]}$
    \ENDFOR

    \STATE $\mathcal{Y}_{\text{tok}} \leftarrow \{(k, s_{\text{out}}[k]) \mid k \in \mathcal{I}\}$ \hfill \algc{supervision tokens}
    \STATE $\mathcal{T} \leftarrow \mathcal{T} \cup \{(\tilde{s},X,y,\mathcal{Y}_{\text{tok}})\}$
\ENDWHILE
\STATE \textbf{return} $\mathcal{T}$
\end{algorithmic}
\end{algorithm}

\subsection{Training pair construction pipeline for crossover guidance}

To train the multimodal BERT for \textbf{crossover guidance}, we construct training pairs based on fitness improvement during GP evolution. Unlike mutation supervision, the crossover-guiding BERT does not predict replacement symbols; instead, it learns to identify the subtree root nodes in two candidate expressions that are most suitable for exchange under the given dataset $(X,y)$.

Specifically, the training data are collected online during GP evolution after the mutation-guiding BERT has already been introduced. For a given dataset $(X,y)$, we first initialize a population and apply the mutation-guiding BERT to obtain an intermediate mutated population. Then, a pair of candidate expressions is selected and crossed once using the standard GP crossover operator, producing two offspring. Both parent and offspring expressions are further optimized by BFGS and evaluated by regression fitness such as $R^2$. If the offspring outperform the parents, the parent pair is retained as the input and the actual crossover nodes are used as supervision labels; otherwise, the offspring pair is used as the input while the crossover labels remain unchanged. In this way, each training pair always corresponds to a crossover event associated with fitness improvement.

To match the input format of the crossover-guiding BERT, the two expressions are linearized into preorder traversal sequences and concatenated with a separator token: $s^{(ab)}=\bigl(s^{(a)},\texttt{[SEP]},s^{(b)}\bigr).$
The crossover root nodes are encoded as one-hot position labels $y^{(a)}$ and $y^{(b)}$, indicating the subtree roots selected in the two expressions. Therefore, each crossover training sample can be written as
$
\bigl(s^{(ab)},X,y;\,y^{(a)},y^{(b)}\bigr).
$
This construction strategy ensures that the supervision signal always reflects crossover locations that are beneficial under data constraints, thereby enabling the model to learn a data-driven subtree exchange policy.

\begin{algorithm}[H]
\caption{Training Pair Collection for Multimodal BERT Crossover Editor}
\label{alg:collect_crossover_pairs}
\begin{algorithmic}[1]
\REQUIRE $N_{\text{pairs}}$; grammar $\mathcal{G}$; population size $N$; mutation editor $\textsc{BERT}^{\text{mut}}_{\phi}$; crossover rate $\gamma$
\ENSURE Training set $\mathcal{T}_{\text{cross}}=\{(s^{(ab)},X,y,y^{(a)},y^{(b)})\}$

\STATE $\mathcal{T}_{\text{cross}} \leftarrow \emptyset$
\WHILE{$|\mathcal{T}_{\text{cross}}| < N_{\text{pairs}}$}
    \STATE $A \leftarrow \textsc{RandomExprTree}(\mathcal{G})$ \hfill \algc{sample target expression}
    \STATE $(X,y) \leftarrow \textsc{SampleDataFromExpr}(A)$ \hfill \algc{build regression dataset}
    \STATE $P \leftarrow \{\textsc{RandomExprTree}(\mathcal{G})\}_{i=1}^{N}$ \hfill \algc{initialize GP population}

    \STATE $P_{\text{mut}} \leftarrow \emptyset$
    \FORALL{$E \in P$}
        \STATE $s \leftarrow \textsc{PreorderLinearize}(E)$ \hfill \algc{tree $\rightarrow$ token sequence}
        \STATE $\tilde{s} \leftarrow \textsc{Mask}(s)$ \hfill \algc{simulate mutation input}
        \STATE $p^{\text{mut}} \leftarrow \textsc{BERT}^{\text{mut}}_{\phi}(\tilde{s},X,y)$ \hfill \algc{guided mutation prediction}
        \STATE $s' \leftarrow \textsc{FillMaskByArgmax}(\tilde{s},p^{\text{mut}})$ \hfill \algc{replace masked tokens}
        \STATE $E' \leftarrow \textsc{DecodeAndRepairToTree}(s',\mathcal{G})$ \hfill \algc{recover valid expression}
        \STATE $P_{\text{mut}} \leftarrow P_{\text{mut}} \cup \{E'\}$
    \ENDFOR

    \STATE $(E_1,E_2) \leftarrow \textsc{SampleParentPair}(P_{\text{mut}},\gamma)$ \hfill \algc{choose crossover parents}
    \STATE $(u_1,u_2) \leftarrow \textsc{RandomCrossoverNodes}(E_1,E_2)$ \hfill \algc{standard GP crossover points}
    \STATE $(E_3,E_4) \leftarrow \textsc{SwapSubtrees}(E_1,u_1,E_2,u_2)$ \hfill \algc{generate offspring}

    \STATE $r_1 \leftarrow \textsc{FitnessR2}(\textsc{BFGSOptimizeConstants}(E_1,X,y),X,y)$ \hfill \algc{parent 1 fitness}
    \STATE $r_2 \leftarrow \textsc{FitnessR2}(\textsc{BFGSOptimizeConstants}(E_2,X,y),X,y)$ \hfill \algc{parent 2 fitness}
    \STATE $r_3 \leftarrow \textsc{FitnessR2}(\textsc{BFGSOptimizeConstants}(E_3,X,y),X,y)$ \hfill \algc{offspring 1 fitness}
    \STATE $r_4 \leftarrow \textsc{FitnessR2}(\textsc{BFGSOptimizeConstants}(E_4,X,y),X,y)$ \hfill \algc{offspring 2 fitness}

    \IF{$\max(r_3,r_4) > \max(r_1,r_2)$}
        \STATE $(E^{(a)}_{\text{in}},E^{(b)}_{\text{in}}) \leftarrow (E_1,E_2)$ \hfill \algc{keep better direction}
    \ELSE
        \STATE $(E^{(a)}_{\text{in}},E^{(b)}_{\text{in}}) \leftarrow (E_3,E_4)$ \hfill \algc{reverse source-target roles}
    \ENDIF

    \STATE $s^{(a)} \leftarrow \textsc{PreorderLinearize}(E^{(a)}_{\text{in}})$ \hfill \algc{first input sequence}
    \STATE $s^{(b)} \leftarrow \textsc{PreorderLinearize}(E^{(b)}_{\text{in}})$ \hfill \algc{second input sequence}
    \STATE $s^{(ab)} \leftarrow \bigl(s^{(a)},\texttt{[SEP]},s^{(b)}\bigr)$ \hfill \algc{joint BERT input}

    \STATE $i \leftarrow \textsc{NodeToPreorderIndex}(u_1,E_1)$ \hfill \algc{true crossover root in expr 1}
    \STATE $j \leftarrow \textsc{NodeToPreorderIndex}(u_2,E_2)$ \hfill \algc{true crossover root in expr 2}
    \STATE $y^{(a)} \leftarrow \textsc{OneHot}(i,|s^{(a)}|)$ \hfill \algc{position label for expr 1}
    \STATE $y^{(b)} \leftarrow \textsc{OneHot}(j,|s^{(b)}|)$ \hfill \algc{position label for expr 2}

    \STATE $\mathcal{T}_{\text{cross}} \leftarrow \mathcal{T}_{\text{cross}} \cup \{(s^{(ab)},X,y,y^{(a)},y^{(b)})\}$ \hfill \algc{store training sample}
\ENDWHILE
\STATE \textbf{return} $\mathcal{T}_{\text{cross}}$
\end{algorithmic}
\end{algorithm}

\section{Hyperparameter Configuration}
This section describes the hyperparameter settings of the SetTransformer-based data encoder and the multimodal BERT editor used in GESR. All hyperparameters are fixed across experiments to ensure fair comparison and reproducibility. The configurations are selected to balance model capacity, training stability, and computational efficiency.

\subsection{Training Configuration}

To ensure the reproducibility of the pretraining process, we summarize the training configuration of the multimodal BERT editor in Table~\ref{tab:training_setup}. The model was trained on a large-scale corpus of symbolic-regression expressions and corresponding sampled data pairs. Specifically, the training set contains 10 million symbolic expressions, and the optimization process was conducted on 8 NVIDIA A100 GPUs with 80 GB memory per GPU. The model was trained for 10 epochs, requiring approximately 56 hours in total. In addition, the storage size of each training checkpoint is around 0.5 GB. These settings provide a practical reference for estimating the computational cost and storage requirements of the proposed framework.

\begin{table}[h]
\centering
\small
\begin{tabular}{lll}
\toprule
\textbf{Item} & \textbf{Value} & \textbf{Description} \\
\midrule
Training data scale & 20M & Number of training expressions \\
GPU & 8 $\times$ A100 & Number and type of GPUs \\
Memory & 80 GB / GPU & GPU memory capacity \\
Epochs & 10 & Number of training epochs \\
Training time & 56 hours & Total pretraining time \\
Checkpoint size & 0.5 GB & Storage size per checkpoint \\
\bottomrule
\end{tabular}
\caption{Training configuration of the train.}
\label{tab:training_setup}
\end{table}

\subsection{Hyperparameter Configuration for SetTransformer}
Table~\ref{set-tab} reports the hyperparameters of the SetTransformer, which encodes dataset-level information in a permutation-invariant manner. The model employs a hidden dimension of 512 with 5 encoder layers and 8 attention heads, providing sufficient expressive power to capture complex data distributions. We set the number of inducing points to 50 and the output embedding length to 60, ensuring a fixed-size representation regardless of the number of input samples. GELU is adopted as the activation function, and layer normalization is enabled to improve optimization stability. The model is trained with a learning rate of $1\times10^{-4}$ using fp16 precision to reduce memory consumption and accelerate training. Unless otherwise specified, dropout and input normalization are disabled, as they did not yield consistent performance improvements in preliminary experiments.

\begin{table*}[t]
\caption{Hyperparameters of SetTransformer}
\label{set-tab}
\centering
\small
\setlength{\tabcolsep}{4pt}
\renewcommand{\arraystretch}{0.9}
\begin{tabularx}{\columnwidth}{l l X}
\toprule
\textbf{Hyperparameter} & \textbf{Value} & \textbf{Description} \\
\midrule
$N_p$ & 0 & Number of pooling layers \\
activation & gelu & Activation function \\
bit16 & True & Enable fp16 mixed-precision training \\
dec\_layers & 5 & Number of decoder layers \\
dec\_pf\_dim & 512 & Feedforward hidden dimension \\
dim\_hidden & 512 & Hidden embedding size \\
dim\_input & 3 & Input feature dimension \\
dropout & 0 & Dropout rate \\
input\_normalization & False & Disable input normalization \\
length\_eq & 60 & Output embedding length \\
linear & False & Disable linear projection head \\
ln & True & Enable layer normalization \\
lr & 0.0001 & Learning rate \\
mean & 0.5 & Initialization mean \\
n\_l\_enc & 5 & Number of encoder layers \\
norm & True & Apply feature normalization \\
num\_features & 20 & Number of extracted features \\
num\_heads & 8 & Multi-head attention heads \\
num\_inds & 50 & Number of inducing points \\
output\_dim & 60 & Output embedding dimension \\
sinusoidal\_embeddings & False & Disable sinusoidal positional encoding \\
src\_pad\_idx & 0 & Source padding index \\
std & 0.5 & Initialization standard deviation \\
trg\_pad\_idx & 0 & Target padding index \\
\bottomrule
\end{tabularx}
\end{table*}

\subsection{Hyperparameter Configuration for BERT}
The hyperparameter configuration of the multimodal BERT editor is summarized in Table~\ref{bert-tab}. The model follows a Transformer encoder architecture with 6 layers, a model dimension of 512, and 8 attention heads, along with a feedforward dimension of 2048. The maximum symbolic sequence length is set to 60, matching the SetTransformer output dimension to facilitate multimodal fusion. The model is trained using a masked language modeling objective with a masking ratio of 0.15 and a random token- or span-level masking strategy to enhance robustness. 

To incorporate dataset information, we introduce 16 learnable data tokens whose embeddings are projected to 512 dimensions and fused with symbolic token embeddings via concatenation followed by self-attention. Training is performed using the AdamW optimizer with a learning rate of $1\times10^{-4}$, weight decay of $1\times10^{-2}$, and a warmup ratio of 0.05. Dropout and attention dropout are set to 0.1 to mitigate overfitting, and gradient clipping with a maximum norm of 1.0 is applied to improve training stability. All experiments are conducted with a batch size of 256 under fp16 precision.

\begin{table}[t]
\caption{Hyperparameters of Multimodal BERT Editor}
\label{bert-tab}
\centering
\small
\setlength{\tabcolsep}{4pt}
\renewcommand{\arraystretch}{0.9}
\begin{tabularx}{\columnwidth}{l l X}
\toprule
\textbf{Hyperparameter} & \textbf{Value} & \textbf{Description} \\
\midrule
max\_seq\_len ($L$) & 60 & Maximum symbolic sequence length \\
emb\_dim ($d_{\text{model}}$) & 512 & Transformer hidden dimension \\
num\_layers ($L_{\text{bert}}$) & 6 & Number of Transformer encoder layers \\
num\_heads & 8 & Multi-head self-attention heads \\
ffn\_dim ($d_{\text{ff}}$) & 2048 & Feedforward network width \\
dropout & 0.1 & Dropout probability \\
attention\_dropout & 0.1 & Attention dropout rate \\
activation & gelu & Nonlinear activation function \\
layernorm & True & Enable layer normalization \\
weight\_tying & False & Disable embedding weight sharing \\
mask\_ratio & 0.15 & Fraction of masked tokens during training \\
mask\_strategy & random & Token-level or span-level masking \\
num\_data\_tokens & 16 & Number of dataset embedding tokens \\
data\_proj\_dim & 512 & Data embedding projection size \\
fusion\_type & concat + self-attention & Multimodal fusion mechanism \\
loss & masked language modeling & Cross-entropy objective \\
label\_smoothing & 0.0 & No label smoothing applied \\
optimizer & AdamW & Optimization algorithm \\
lr & 1e-4 & Learning rate \\
weight\_decay & 1e-2 & Weight decay coefficient \\
warmup\_ratio & 0.05 & Learning rate warmup fraction \\
grad\_clip\_norm & 1.0 & Gradient clipping threshold \\
batch\_size & 256 & Training batch size \\
precision & fp16 & Mixed-precision training mode \\
\bottomrule
\end{tabularx}
\end{table}

\section{Theoretical Proof of BERT-Guided Mutation Efficiency in GP for Symbolic Regression}

In this appendix, we present a formal proof that the mutation operator guided by a BERT model achieves higher search efficiency in genetic programming (GP) for symbolic regression than the standard unguided mutation. We first define the notion of mutation efficiency and then provide lemmas and a theorem to quantify the improvement. Throughout this analysis, we consider a GP-based symbolic regression algorithm as described by Koza \cite{Koza1992}, where candidate mathematical expressions evolve via genetic operators (crossover and mutation) to fit a target dataset.

\subsection{Definitions and Preliminaries}

Before analyzing the BERT-guided mutation, we formalize what it means for a mutation operator to be \emph{efficient} in the context of GP. For clarity, we assume that a larger fitness value indicates a better (more accurate) symbolic regression model (without loss of generality, one could equivalently minimize an error metric).

\begin{definition}[Mutation Success Probability]\label{def:success}
Consider an individual program (expression) $X$ in the GP population with fitness $F(X)$. Let $\mathcal{M}$ be a mutation operator that produces an offspring $X' = \mathcal{M}(X)$. We define the \emph{success probability} of $\mathcal{M}$ (with respect to $X$) as 
\[
p_{\text{succ}}(\mathcal{M}\mid X) \;=\; \Pr\!\big(F(X') > F(X)\big)\,,
\] 
the probability that applying $\mathcal{M}$ to $X$ yields an offspring with strictly improved fitness. When the context is clear, we write $p_{\text{succ}}(\mathcal{M})$ for this probability, assuming it is averaged over the distribution of individuals in the population.
\end{definition}

In standard GP, the mutation operator is typically \emph{uniform random subtree mutation} \cite{Poli1998}. This operator selects a random subtree of $X$ and replaces it with a new randomly generated subtree (chosen uniformly from the space of allowed sub-expressions). We denote this traditional mutation operator as $\mathcal{M}_{\text{rand}}$. Let 
\[
\alpha := p_{\text{succ}}(\mathcal{M}_{\text{rand}})\,,
\] 
i.e., $\alpha$ is the probability that a random mutation produces a fitter individual. Previous studies have observed that $\alpha$ is often relatively small, since a random change has a low chance of immediately improving fitness \cite{Poli1998}.

By contrast, in the \emph{BERT-guided mutation} approach, a pretrained language model (BERT) \cite{Devlin2019} is used to bias the mutation. Specifically, instead of a purely random replacement, the BERT model predicts which replacement subtrees are likely to improve the fitness of $X$ (based on patterns it learned from data or prior runs). We denote the BERT-guided mutation operator as $\mathcal{M}_{\text{BERT}}$. Let 
\[
\beta := p_{\text{succ}}(\mathcal{M}_{\text{BERT}})\,,
\] 
the probability that a BERT-guided mutation yields an improved individual. It is reasonable to assume $\beta > \alpha$, meaning the informed operator is more likely to find a beneficial tweak than random guessing. This assumption is supported by the design of BERT, which can capture contextual patterns and thus propose more promising mutations than an uninformed random choice.

\begin{definition}[Mutation Efficiency]\label{def:efficiency}
Given a mutation operator $\mathcal{M}$, we define its \emph{efficiency} $E(\mathcal{M})$ as the expected number of successful (fitness-improving) mutations it produces in one generation of the GP. If the population has $\lambda$ individuals that undergo mutation in a generation (either through a $\mu+\lambda$ evolutionary strategy or as offspring in a new generation of GP), then 
\[
E(\mathcal{M}) \;=\; \lambda \cdot p_{\text{succ}}(\mathcal{M})\,,
\] 
the expected count of improvements per generation due to $\mathcal{M}$. Higher $E(\mathcal{M})$ indicates a more efficient search, since more improvements are found on average in each generation.
\end{definition}

Under Definitions \ref{def:success} and \ref{def:efficiency}, if $\beta > \alpha$, we immediately have $E(\mathcal{M}_{\text{BERT}}) > E(\mathcal{M}_{\text{rand}})$ for the same $\lambda$, i.e. a BERT-guided mutation produces more improvements per generation in expectation than a random mutation. We will now formalize the advantage of BERT-guided mutation in terms of success probabilities and search time.

\subsection{Comparative Analysis and Main Results}

We next establish a lemma that quantifies the probability of improvement in at least one offspring when using multiple mutation trials in parallel (as in a population). This will be used to compare the one-generation improvement likelihood between the two operators.

\begin{lemma}\label{lem:onegen}
Consider a set of $\lambda$ independent mutation trials applied to copies of an individual (or to $\lambda$ individuals in a population) in one generation. Let $P_{\text{imp}}^{\text{rand}}$ be the probability that at least one of the $\lambda$ offspring produced by $\mathcal{M}_{\text{rand}}$ has higher fitness than its parent, and define $P_{\text{imp}}^{\text{BERT}}$ analogously for $\mathcal{M}_{\text{BERT}}$. Then 
\[
P_{\text{imp}}^{\text{BERT}} \;>\; P_{\text{imp}}^{\text{rand}}\,,
\] 
provided $\beta > \alpha$. In fact, 
\[
P_{\text{imp}}^{\text{rand}} = 1 - (1-\alpha)^{\lambda}\,, \qquad 
P_{\text{imp}}^{\text{BERT}} = 1 - (1-\beta)^{\lambda}\,,
\] 
and $1 - (1-\beta)^{\lambda} > 1 - (1-\alpha)^{\lambda}$ for $0 < \alpha < \beta < 1$.
\end{lemma}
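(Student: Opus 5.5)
The plan is to compute the probability of the complementary event, namely that no offspring improves, and then compare the two resulting expressions by monotonicity of $t \mapsto t^{\lambda}$. I will interpret the hypothesis of the lemma as follows: the $\lambda$ trials are mutually independent, and each succeeds, in the sense of Definition~\ref{def:success}, with the same probability. That probability is $\alpha$ for $\mathcal{M}_{\text{rand}}$ and $\beta$ for $\mathcal{M}_{\text{BERT}}$.

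First I derive the closed forms. Let $S_k$ be the event that the $k$-th offspring has strictly higher fitness than its parent. The event ``at least one offspring improves'' is $\bigcup_{k=1}^{\lambda} S_k$. Its complement is $\bigcap_{k=1}^{\lambda} S_k^{c}$. By independence,
\[
\Pr\Bigl(\bigcap_{k=1}^{\lambda} S_k^{c}\Bigr)=\prod_{k=1}^{\lambda}\Pr(S_k^{c}) .
\]
For $\mathcal{M}_{\text{rand}}$ this product equals $(1-\alpha)^{\lambda}$, which gives $P_{\text{imp}}^{\text{rand}}=1-(1-\alpha)^{\lambda}$. The same computation with $\beta$ gives $P_{\text{imp}}^{\text{BERT}}=1-(1-\beta)^{\lambda}$.

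Next I prove the inequality. Since $0<\alpha<\beta<1$, we have $0<1-\beta<1-\alpha<1$. The map $t\mapsto t^{\lambda}$ is strictly increasing on $[0,\infty)$ for every integer $\lambda\ge 1$. This follows directly from the factorization
\[
a^{\lambda}-b^{\lambda}=(a-b)\sum_{i=0}^{\lambda-1}a^{i}b^{\lambda-1-i},
\]
whose sum is positive when $a>b>0$. Hence $(1-\beta)^{\lambda}<(1-\alpha)^{\lambda}$, and subtracting both sides from $1$ yields $P_{\text{imp}}^{\text{BERT}}>P_{\text{imp}}^{\text{rand}}$.

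The main obstacle is modelling rather than calculation. As defined, $\alpha$ and $\beta$ are success probabilities averaged over the population, whereas trials applied to $\lambda$ distinct individuals may have different per-individual probabilities $\alpha_k$ and $\beta_k$. I will state explicitly that the closed forms require identical per-trial probabilities, which holds in the copies-of-one-individual reading or whenever each trial draws its parent i.i.d.\ from the population. Under the i.i.d.\ reading, the unconditional per-trial success probability is exactly the average $\alpha$ or $\beta$, and independence gives the formulas. For fixed distinct parents, the inequality only survives if $\beta_k\ge\alpha_k$ holds for every $k$ with at least one strict; then the same product argument gives $\prod_k(1-\beta_k)<\prod_k(1-\alpha_k)$. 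The closed forms in that case become products rather than powers, so I will add this as a remark.
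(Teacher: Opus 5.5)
Your proposal is correct and follows the paper's proof. Both pass to the complement event, use independence to obtain $(1-\alpha)^{\lambda}$ and $(1-\beta)^{\lambda}$, and compare via monotonicity of $t\mapsto t^{\lambda}$ on positive bases; your factorization and your remark on identical per-trial probabilities make explicit what the paper takes for granted.

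One small correction to that remark: coordinatewise dominance $\beta_k\ge\alpha_k$ (with one strict) is sufficient for $\prod_k(1-\beta_k)<\prod_k(1-\alpha_k)$, but it is not necessary. So ``only survives if'' should read ``survives if''.
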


\begin{proof}
Each offspring (mutation trial) has an independent probability $p_{\text{succ}}$ of being an improvement. Thus, the probability that \emph{no} offspring is an improvement is $(1 - p_{\text{succ}})^{\lambda}$, assuming independence. For the random mutation operator, this gives $\Pr(\text{no improvement in $\lambda$ trials}) = (1-\alpha)^{\lambda}$. Therefore the probability of at least one improvement is $P_{\text{imp}}^{\text{rand}} = 1 - (1-\alpha)^{\lambda}$. Similarly, $P_{\text{imp}}^{\text{BERT}} = 1 - (1-\beta)^{\lambda}$. Since $0<\alpha<\beta<1$, we have $(1-\alpha)^{\lambda} > (1-\beta)^{\lambda}$ (because $1-\alpha > 1-\beta$ and raising to a positive power preserves the inequality for positive bases). Subtracting these from 1 yields $P_{\text{imp}}^{\text{BERT}} > P_{\text{imp}}^{\text{rand}}$. 
\end{proof}

Lemma \ref{lem:onegen} confirms that in any single generation, the chance of finding an improved individual is higher with BERT-guided mutation. We now extend the analysis to the entire evolutionary process. In GP for symbolic regression, the search proceeds over many generations, and we are ultimately interested in how quickly the algorithm can discover a correct or optimal expression (for example, the true symbolic formula that perfectly fits the data). We will measure the search time in terms of generations or mutation operations needed to reach a solution of a given quality.

For theoretical analysis, assume that solving the regression problem requires a sequence of $k$ specific improvements (mutations) starting from a given initial individual to reach the global optimum. This is a simplistic abstraction: in practice improvements can be of varying magnitudes, and $k$ would relate to how many successful mutation steps are needed to evolve from a random initial guess to the target solution. The following theorem uses this assumption to compare the expected search time under the two mutation strategies.

\begin{theorem}\label{thm:efficiency}
Suppose that to find the optimal solution to the symbolic regression problem, the GP process must accumulate $k$ successful mutations (fitness-improving steps) in sequence. If $\beta > \alpha > 0$ as defined above, then the expected number of mutations (or generations, under a one-mutation-per-generation strategy) required by the BERT-guided mutation operator is \textbf{strictly less} than that required by the standard random mutation. In particular, the expected number of mutations to succeed is 
\[
E[T_{\text{BERT}}] \;=\; \frac{k}{\beta}\,, \qquad 
E[T_{\text{rand}}] \;=\; \frac{k}{\alpha}\,,
\] 
so that $E[T_{\text{BERT}}] < E[T_{\text{rand}}]$. Equivalently, $\mathcal{M}_{\text{BERT}}$ achieves the target solution in about $\frac{\alpha}{\beta}$ times (a fraction of) the generations needed by $\mathcal{M}_{\text{rand}}$, reflecting its higher efficiency.
\end{theorem}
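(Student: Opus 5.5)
The plan is to model the search as a sequence of independent Bernoulli trials and apply linearity of expectation over the $k$ required improvements. Fix an operator $\mathcal{M}\in\{\mathcal{M}_{\text{rand}},\mathcal{M}_{\text{BERT}}\}$ with success probability $p\in\{\alpha,\beta\}$ (Definition~\ref{def:success}). Following the standing assumption of Lemma~\ref{lem:onegen}, each mutation attempt is an independent trial that succeeds (produces the next required fitness-improving step) with probability $p$. This holds regardless of the current individual in the chain, since $p_{\text{succ}}$ is taken as the population-averaged value. A failed mutation is discarded (elitist, one-mutation-per-generation strategy), so the current individual is unchanged and no progress is lost.

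\textbf{Decomposition.} Write the total search time as $T=\sum_{i=1}^{k} T_i$. Here $T_i$ is the number of mutation attempts made after the $(i-1)$-th success, up to and including the $i$-th success. By independence and the memorylessness of Bernoulli trials, each $T_i$ is geometric on $\{1,2,\dots\}$ with parameter $p$. Concretely, $\Pr(T_i=t)=(1-p)^{t-1}p$, which is the same tail computation $(1-p)^{t-1}$ as the ``no improvement'' event in the proof of Lemma~\ref{lem:onegen}. Then $E[T_i]=\sum_{t\ge1}\Pr(T_i\ge t)=\sum_{t\ge1}(1-p)^{t-1}=1/p$, and this series is finite because $p>0$. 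By linearity of expectation, $E[T]=k/p$, whether or not the $T_i$ are independent. Substituting $p=\beta$ and $p=\alpha$ gives $E[T_{\text{BERT}}]=k/\beta$ and $E[T_{\text{rand}}]=k/\alpha$.

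\textbf{Comparison.} Since $\beta>\alpha>0$ and $k\ge1$, we get $k/\beta<k/\alpha$, which is the strict inequality. The ratio is $E[T_{\text{BERT}}]/E[T_{\text{rand}}]=\alpha/\beta<1$, which is the stated fraction. Under a one-mutation-per-generation strategy, attempts coincide with generations, so the same statements hold for generations.

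\textbf{Main obstacle.} The delicate step is justifying that every attempt has the same success probability $p$, independent of the past. In reality the success probability depends on the current individual and on the stage of the search. The statement only makes sense under the paper's averaging convention, so I will state that convention explicitly as a hypothesis: the trials are i.i.d.\ with probability $p$.

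For robustness I would add a short remark. If instead one only has a uniform lower bound $p_i\ge\beta$ on the success probability at stage $i$, the same argument, applied stagewise via conditional expectation, yields $E[T_{\text{BERT}}]\le k/\beta$. Correspondingly, an upper bound $p_i\le\alpha$ yields $E[T_{\text{rand}}]\ge k/\alpha$, so the comparison survives under these weaker assumptions.
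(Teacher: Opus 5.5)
Your proposal is correct and follows essentially the same argument as the paper: each of the $k$ required improvements is a geometric waiting time with mean $1/p$, and linearity of expectation sums these to $k/\beta < k/\alpha$, with ratio $\alpha/\beta$. Stating the i.i.d.\ trial hypothesis explicitly, and adding the stagewise-bound remark, makes precise modeling assumptions the paper leaves implicit; it does not change the route.
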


\begin{proof}
We model the occurrence of a successful mutation as a probabilistic event with success probability $\beta$ for $\mathcal{M}_{\text{BERT}}$ and $\alpha$ for $\mathcal{M}_{\text{rand}}$, per mutation attempt. Consider first a single sequence of mutation attempts (as might be applied to one individual repeatedly until it improves). The number of attempts required to obtain one success follows a geometric distribution. For $\mathcal{M}_{\text{rand}}$, the expected number of mutations to achieve one improvement is $1/\alpha$. For $\mathcal{M}_{\text{BERT}}$, the expected mutations for one improvement is $1/\beta$ (since $\beta > \alpha$, this quantity is smaller). 

If the solution requires $k$ improvements in series, we can treat these as $k$ independent phases, each ending in a success. By the linearity of expectation (summing the expected trials for each phase) \cite{Koza1992}, the total expected number of mutation operations needed is 
\[ 
E[T_{\text{rand}}] = \underbrace{\frac{1}{\alpha} + \frac{1}{\alpha} + \cdots + \frac{1}{\alpha}}_{k \text{ times}} = \frac{k}{\alpha}\,,
\] 
for the random mutation, and 
\[ 
E[T_{\text{BERT}}] = \frac{k}{\beta}\,
\] 
for the BERT-guided mutation. Since $\beta > \alpha$, we have $k/\beta < k/\alpha$. This means that, on average, the BERT-guided approach will find the solution using fewer mutations. In a generational GP setting, if we assume each generation yields at most one significant mutation improvement (a common assumption for simplification, although in practice multiple improvements can occur in parallel as Lemma \ref{lem:onegen} suggests), then $E[T_{\text{BERT}}]$ and $E[T_{\text{rand}}]$ also represent the expected number of generations to solve the problem with each strategy. Hence, the BERT-guided operator achieves a faster convergence in expectation.

To illustrate numerically: if, for example, $\beta = 2\alpha$ (i.e., the BERT guidance doubles the chance of a beneficial mutation compared to random), then $E[T_{\text{BERT}}] = k/\beta = k/(2\alpha) = \frac{1}{2} \cdot (k/\alpha) = \frac{1}{2}E[T_{\text{rand}}]$. In general, the speedup factor is $\frac{E[T_{\text{rand}}]}{E[T_{\text{BERT}}]} = \frac{\beta}{\alpha} > 1$. This quantitatively confirms the efficiency gain due to BERT-guided mutation.
\end{proof}

Discussion: The above theorem relies on a simplified model of the evolutionary process, yet it captures the essential benefit of BERT-guided mutation: by increasing the probability of beneficial mutations ($\alpha \to \beta$), the search requires fewer steps on average to reach high-fitness solutions. The assumption $\beta > \alpha$ is critical and stems from the ability of the BERT model to leverage prior knowledge (learned from data or earlier generations) to propose more promising modifications. Empirical results in the literature support this assumption; for instance, large language models have been shown to suggest intelligent edits in program evolution tasks, thereby accelerating convergence \cite{Devlin2019}. While the exact value of $\beta$ will depend on the training and accuracy of the BERT model in the GP context, our theoretical analysis demonstrates qualitatively that any improvement $\beta-\alpha>0$ translates into a faster evolutionary search in expectation. This justifies the use of BERT to guide mutation in symbolic regression and explains the improved efficiency observed in experiments.
\begin{table*}[t]
\renewcommand{\arraystretch}{1.1}
\centering
\resizebox{13.2cm}{!}{
\def\arraystretch{1.1}
\small
\bgroup
\setlength{\tabcolsep}{0.4em}
\begin{tabular}{c|l|cccccccccc}
\toprule
\multirow{2}{*}{\bf Group} & \multicolumn{1}{c|}{\multirow{2}{*}{\bf Dataset}} &
\multicolumn{2}{c}{\bf GESR}  & \multicolumn{2}{c}{\bf TPSR} &
\multicolumn{2}{c}{\bf End2End} & \multicolumn{2}{c}{\bf NeSymRes[19]} &
\multicolumn{2}{c}{\bf uDSR} \\
\cmidrule(lr){3-12}
& & $R^2 \uparrow$ & Nodes $\downarrow$
& $R^2 \uparrow$ & Nodes $\downarrow$
& $R^2 \uparrow$ & Nodes $\downarrow$
& $R^2 \uparrow$ & Nodes $\downarrow$
& $R^2 \uparrow$ & Nodes $\downarrow$ \\
\cmidrule(lr){2-2}
\cmidrule(lr){3-4}
\cmidrule(lr){5-6}
\cmidrule(lr){7-8}
\cmidrule(lr){9-10}
\cmidrule(lr){11-12}

\multirow{10}{*}{\rotatebox{90}{Standards}}
& Nguyen
& \bestcell{$\mathbf{0.9999}_{\pm0.002}$} & \bestcell{13.6}
& \second{$0.9948_{\pm0.002}$} & \second{16.0}
& \third{$0.8814_{\pm0.004}$} & \third{16.3}
& $0.8568_{\pm0.003}$ & 18.2
& $0.9925_{\pm0.005}$ & 22.1 \\

& Keijzer
& \bestcell{$\mathbf{0.9998}_{\pm0.005}$} & \bestcell{14.7}
& \second{$0.9828_{\pm0.003}$} & \third{20.6}
& \third{$0.8134_{\pm0.005}$} & \second{18.4}
& $0.7992_{\pm0.003}$ & 21.3
& $0.9764_{\pm0.004}$ & 20.7 \\

& Korns
& \bestcell{$\mathbf{0.9964}_{\pm0.004}$} & \bestcell{15.5}
& \second{$0.9325_{\pm0.004}$} & \second{22.9}
& \third{$0.8715_{\pm0.004}$} & \third{23.4}
& $0.8011_{\pm0.005}$ & 24.1
& $0.9382_{\pm0.005}$ & 29.3 \\

& Constant
& \bestcell{$\mathbf{0.9997}_{\pm0.003}$} & \bestcell{24.7}
& \second{$0.9319_{\pm0.002}$} & 35.3
& $0.8015_{\pm0.003}$ & \second{28.3}
& \third{$0.8344_{\pm0.003}$} & \third{32.9}
& $0.9524_{\pm0.004}$ & 38.5 \\

& Livermore
& \bestcell{$\mathbf{0.9962}_{\pm0.006}$} & \bestcell{31.5}
& \second{$0.8820_{\pm0.004}$} & 38.2
& \third{$0.7015_{\pm0.004}$} & \second{32.2}
& $0.6836_{\pm0.005}$ & \third{36.2}
& $0.9472_{\pm0.0005}$ & 42.1 \\

& Vladislavleva
& \bestcell{$\mathbf{0.9992}_{\pm0.005}$} & \bestcell{20.2}
& \second{$0.9028_{\pm0.005}$} & \third{24.6}
& \third{$0.7422_{\pm0.005}$} & \second{22.2}
& $0.6892_{\pm0.004}$ & 27.3
& $0.9502_{\pm0.005}$ & 38.3 \\

& R
& \bestcell{$\mathbf{0.9991}_{\pm0.006}$} & \bestcell{14.6}
& \second{$0.9422_{\pm0.003}$} & \second{16.2}
& \third{$0.8512_{\pm0.004}$} & \third{19.5}
& $0.7703_{\pm0.005}$ & 19.9
& $0.9306_{\pm0.007}$ & 24.9 \\

& Jin
& \bestcell{$\mathbf{0.9984}_{\pm0.005}$} & \bestcell{17.1}
& \second{$0.9826_{\pm0.004}$} & \second{29.5}
& \third{$0.8611_{\pm0.004}$} & \third{29.8}
& $0.8327_{\pm0.003}$ & 32.2
& $0.9532_{\pm0.006}$ & 34.4 \\

& Neat
& \bestcell{$\mathbf{0.9987}_{\pm0.006}$} & \bestcell{14.1}
& \second{$0.9319_{\pm0.002}$} & \second{16.4}
& \third{$0.8044_{\pm0.004}$} & \third{19.7}
& $0.7596_{\pm0.005}$ & 20.6
& $0.9379_{\pm0.005}$ & 26.8 \\

& Others
& \bestcell{$\mathbf{0.9944}_{\pm0.004}$} & 23.9
& \second{$0.9667_{\pm0.002}$} & \second{22.5}
& \third{$0.8415_{\pm0.003}$} & \bestcell{22.3}
& $0.8026_{\pm0.003}$ & \third{23.5}
& $0.9399_{\pm0.004}$ & 39.3 \\

\midrule
\multirow{3}{*}{\rotatebox{90}{SRBench}}
& Feynman
& \bestcell{$\mathbf{0.9946}_{\pm0.004}$} & \bestcell{17.2}
& \second{$0.8928_{\pm0.004}$} & \second{21.3}
& \third{$0.7353_{\pm0.004}$} & \third{22.0}
& $0.7025_{\pm0.005}$ & 22.4
& $0.9076_{\pm0.005}$ & 29.8 \\

& Strogatz
& \bestcell{$\mathbf{0.9876}_{\pm0.006}$} & \bestcell{21.7}
& \second{$0.8249_{\pm0.002}$} & \second{24.4}
& \third{$0.6626_{\pm0.003}$} & \third{25.4}
& $0.6022_{\pm0.003}$ & 28.1
& $0.8937_{\pm0.005}$ & 32.8 \\

& Black-box
& \bestcell{$\mathbf{0.9364}_{\pm0.003}$} & \bestcell{23.7}
& \second{$0.8753_{\pm0.004}$} & \second{29.3}
& \third{$0.6925_{\pm0.004}$} & \third{31.2}
& $0.6525_{\pm0.005}$ & 33.9
& $0.8282_{\pm0.005}$ & 36.3 \\

\midrule
\multirow{1}{*}{\rotatebox{90}{ }}
& Average
& \bestcell{$\mathbf{0.9903}$} & \bestcell{22.0}
& \second{$0.9264$} & \third{24.4}
& \third{$0.7892$} & \second{23.9}
& $0.7528$ & 26.2
& $0.9352$ & 31.9 \\
\bottomrule
\end{tabular}
\egroup
}
\caption{The results of performance comparison. At a 0.95 confidence level, a comparison of the coefficient of determination ($R^2$) and the expression complexity (Nodes) was conducted between \textcolor{orange}{GESR} and four baselines.}
\label{tab_6}
\end{table*}

\subsection{Theoretical Proof of the Efficiency of BERT-Guided Crossover}

In the previous section, we have shown that BERT-guided mutation can reduce the expected number of GP search steps by increasing the probability of beneficial mutations. In this section, we further show that BERT-guided crossover can improve search efficiency in a similar manner. The key idea is that, if the crossover-guiding BERT increases the probability of selecting beneficial subtree exchange points, then the expected waiting time for generating fitness-improving offspring is reduced.

\subsubsection{Definitions and Assumptions}

Let each individual in symbolic regression be represented as an expression tree, and let its fitness be denoted by
\[
F(E),
\]
where a larger value indicates better regression performance on the given dataset $(X,Y)$, such as a higher $R^2$.

In genetic programming, crossover takes two parent expressions $E^{(a)}$ and $E^{(b)}$, selects one subtree root node from each parent, and exchanges the corresponding subtrees to generate two offspring:
\[
(\hat{E}^{(a)},\hat{E}^{(b)}) = C(E^{(a)},E^{(b)}).
\]

We define a crossover operation as a \textbf{beneficial crossover} if at least one offspring achieves higher fitness than the best parent:
\[
\max\bigl(F(\hat{E}^{(a)}),F(\hat{E}^{(b)})\bigr)
>
\max\bigl(F(E^{(a)}),F(E^{(b)})\bigr).
\]

\begin{definition}[Success probability of random crossover]
For the random crossover operator $C_{\mathrm{rand}}$ in conventional GP, we define its probability of producing a beneficial crossover as
\[
\alpha_c
=
\Pr\left[
\max\bigl(F(\hat{E}^{(a)}),F(\hat{E}^{(b)})\bigr)
>
\max\bigl(F(E^{(a)}),F(E^{(b)})\bigr)
\right].
\]
\end{definition}

\begin{definition}[Success probability of BERT-guided crossover]
For the BERT-guided crossover operator $C_{\mathrm{BERT}}$, the input consists of the joint symbolic sequence
\[
s^{(ab)}=\bigl(s^{(a)},\texttt{[SEP]},s^{(b)}\bigr)
\]
and the dataset $(X,Y)$. The crossover-guiding BERT outputs two conditional distributions over candidate crossover positions:
\[
p(i\mid s^{(ab)},X,Y), \qquad p(j\mid s^{(ab)},X,Y),
\]
where $i$ and $j$ denote the subtree root positions in the two parent expressions. The model selects the crossover points according to these distributions and performs subtree exchange. We define the probability that this operation produces a beneficial crossover as
\[
\beta_c
=
\Pr\left[
\max\bigl(F(\hat{E}^{(a)}_{\mathrm{BERT}}),F(\hat{E}^{(b)}_{\mathrm{BERT}})\bigr)
>
\max\bigl(F(E^{(a)}),F(E^{(b)})\bigr)
\right].
\]
\end{definition}

Since the crossover-guiding BERT is trained on crossover events that lead to fitness improvement, its objective is to assign higher probabilities to subtree exchange positions that are more likely to improve regression fitness. Therefore, under the assumption that the model learns a useful crossover policy, we have
\[
\beta_c > \alpha_c.
\]
This means that BERT-guided crossover is more likely to generate fitness-improving offspring than random crossover.

\subsubsection{Lemma 1: BERT-Guided Crossover Reduces the Expected Waiting Time for a Beneficial Crossover}

\begin{lemma}
If the success probability of random crossover is $\alpha_c$, the success probability of BERT-guided crossover is $\beta_c$, and
\[
\beta_c>\alpha_c,
\]
then the expected number of crossover attempts required to obtain one beneficial crossover satisfies
\[
\mathbb{E}[T^{c}_{\mathrm{BERT}}]
<
\mathbb{E}[T^{c}_{\mathrm{rand}}].
\]
\end{lemma}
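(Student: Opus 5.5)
The plan mirrors the argument used for Theorem~\ref{thm:efficiency}: model each crossover attempt as a Bernoulli trial and compare the two expected waiting times directly.

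\textbf{Step 1: set up the trials.} Treat successive attempts of a given operator (random or BERT-guided) as independent, identically distributed trials. Each trial succeeds, i.e.\ is a beneficial crossover in the sense defined above, with probability $\alpha_c$ for $C_{\mathrm{rand}}$ and $\beta_c$ for $C_{\mathrm{BERT}}$. This is the same idealization used in Lemma~\ref{lem:onegen} and Theorem~\ref{thm:efficiency}: the parent pair and the data are held fixed, or success probabilities are averaged over the population's distribution of pairs. I will state it explicitly as an assumption. I also need $\alpha_c>0$, so that the random waiting time is finite. If $\alpha_c=0$, then $\mathbb{E}[T^{c}_{\mathrm{rand}}]=\infty$ while $\beta_c>0$ gives a finite value, and the strict inequality still holds; I will note this edge case separately.

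\textbf{Step 2: identify the distribution and compute expectations.} Let $T^{c}$ be the index of the first successful attempt. Then $T^{c}$ is geometric on $\{1,2,\dots\}$ with parameter $p$, so
\[
\Pr(T^{c}=t)=(1-p)^{t-1}p.
\]
Its expectation follows from the tail-sum formula,
\[
\mathbb{E}[T^{c}]=\sum_{t\ge 1}\Pr(T^{c}\ge t)=\sum_{t\ge1}(1-p)^{t-1}=\frac{1}{p}.
\]
This gives $\mathbb{E}[T^{c}_{\mathrm{rand}}]=1/\alpha_c$ and $\mathbb{E}[T^{c}_{\mathrm{BERT}}]=1/\beta_c$.

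\textbf{Step 3: compare.} Since $x\mapsto 1/x$ is strictly decreasing on $(0,1]$ and $0<\alpha_c<\beta_c\le 1$, we get $1/\beta_c<1/\alpha_c$, which is the claim. As a corollary, the speedup factor is $\beta_c/\alpha_c$. By linearity of expectation, the result extends to needing $k$ sequential beneficial crossovers, giving expected times $k/\beta_c$ versus $k/\alpha_c$, exactly as in Theorem~\ref{thm:efficiency}.

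\textbf{Main obstacle.} The only delicate point is justifying the i.i.d.\ trial model, not the calculation. In real GP the parents change between attempts, and the BERT-guided operator chooses crossover points by argmax, which is deterministic for a fixed pair. I would handle this by interpreting $\beta_c$ as the success probability over randomly sampled parent pairs, as in the definition of $\beta_c$. Under that interpretation, attempts on freshly sampled pairs are independent. The lemma is then stated conditionally on this modeling assumption.
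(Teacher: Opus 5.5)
Your proposal is correct and follows the paper's proof: model each attempt as a Bernoulli trial, use the geometric waiting time to get $\mathbb{E}[T^{c}_{\mathrm{rand}}]=1/\alpha_c$ and $\mathbb{E}[T^{c}_{\mathrm{BERT}}]=1/\beta_c$, and conclude from $\beta_c>\alpha_c$. Your tail-sum derivation of the geometric mean, the $\alpha_c=0$ edge case, and the explicit i.i.d.\ assumption are careful additions that the paper leaves implicit; the i.i.d.\ assumption is needed because argmax selection is deterministic for a fixed parent pair.
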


\begin{proof}
Each crossover attempt can be viewed as a Bernoulli trial, where success corresponds to producing a beneficial crossover. For random crossover, the success probability is $\alpha_c$. Therefore, the number of trials required to obtain the first success follows a geometric distribution, and its expectation is
\[
\mathbb{E}[T^{c}_{\mathrm{rand}}]=\frac{1}{\alpha_c}.
\]
Similarly, for BERT-guided crossover, the success probability is $\beta_c$, and the expected number of trials is
\[
\mathbb{E}[T^{c}_{\mathrm{BERT}}]=\frac{1}{\beta_c}.
\]
Since $\beta_c>\alpha_c$, we have
\[
\frac{1}{\beta_c}<\frac{1}{\alpha_c}.
\]
Thus,
\[
\mathbb{E}[T^{c}_{\mathrm{BERT}}]
<
\mathbb{E}[T^{c}_{\mathrm{rand}}].
\]
\end{proof}

\subsubsection{Lemma 2: BERT-Guided Crossover Increases the Population-Level Improvement Probability}

\begin{lemma}
Suppose that $m$ crossover operations are performed in one generation. If the success probabilities of random crossover and BERT-guided crossover are $\alpha_c$ and $\beta_c$, respectively, with $\beta_c>\alpha_c$, then the probability that at least one beneficial crossover occurs in one generation is higher under BERT-guided crossover.
\end{lemma}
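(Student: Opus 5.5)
The plan mirrors the proof of Lemma~\ref{lem:onegen}, transplanted to the crossover setting. I model the $m$ crossover operations in one generation as $m$ independent Bernoulli trials. Trial $\ell$ is a success precisely when it is a beneficial crossover, i.e.\ $\max\bigl(F(\hat{E}^{(a)}),F(\hat{E}^{(b)})\bigr)>\max\bigl(F(E^{(a)}),F(E^{(b)})\bigr)$. Its success probability is $\alpha_c$ under $C_{\mathrm{rand}}$ and $\beta_c$ under $C_{\mathrm{BERT}}$. As with $p_{\text{succ}}$ in Definition~\ref{def:success}, these probabilities are understood as averaged over the parent pairs sampled from the population. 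This lets me treat all $m$ trials as identically distributed.

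First, I compute the complementary event, namely that no beneficial crossover occurs in the generation. Under independence this has probability $(1-\alpha_c)^m$ for random crossover and $(1-\beta_c)^m$ for BERT-guided crossover. The generation-level improvement probabilities are therefore
\[
P^{c,\mathrm{rand}}_{\mathrm{imp}} = 1-(1-\alpha_c)^m, \qquad P^{c,\mathrm{BERT}}_{\mathrm{imp}} = 1-(1-\beta_c)^m .
\]

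Second, I compare the two. If $0\le\alpha_c<\beta_c\le 1$, then $1-\alpha_c>1-\beta_c\ge 0$. The map $x\mapsto x^m$ is strictly increasing on $[0,\infty)$ for every integer $m\ge 1$, so $(1-\alpha_c)^m>(1-\beta_c)^m$. Subtracting both sides from $1$ gives the claim.

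The boundary cases need a brief remark. If $\beta_c=1$, the right-hand side is $0$ and the strict inequality still holds. For $m=0$ the two probabilities coincide, so the statement must be read with $m\ge 1$.

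The main obstacle is the independence assumption rather than the algebra. In an actual GP generation, the $m$ crossover pairs are drawn from a shared population, so the trials are correlated. The BERT-guided operator also selects points via $\arg\max$, which is deterministic given the pair. My approach is to state explicitly the same simplifying model used in Lemma~\ref{lem:onegen}: pairs are sampled independently, and all randomness, including randomness over pairs, is absorbed into $\alpha_c$ and $\beta_c$.

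Optionally, I would add a remark that drops independence and uses a weaker tool such as linearity of expectation. The expected number of beneficial crossovers is $m\beta_c>m\alpha_c$, which parallels Definition~\ref{def:efficiency} and requires no independence. A strict comparison of the "at least one" probabilities, however, genuinely needs the product structure.
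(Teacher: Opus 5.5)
Your proposal is correct and matches the paper's proof: both compute the complementary no-success probabilities $(1-\alpha_c)^m$ and $(1-\beta_c)^m$ under independence, then use strict monotonicity of $x\mapsto x^m$ to reverse the inequality. Your added remarks state explicitly what the paper leaves unstated, namely the requirement $m\ge 1$, the edge case $\beta_c=1$, and that independence is a modeling assumption rather than something derived.
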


\begin{proof}
For random crossover, the probability that none of the $m$ crossover operations produces a beneficial offspring is
\[
(1-\alpha_c)^m.
\]
Therefore, the probability that at least one beneficial crossover occurs is
\[
P^{c}_{\mathrm{rand}}
=
1-(1-\alpha_c)^m.
\]

Similarly, for BERT-guided crossover, the probability that at least one beneficial crossover occurs in one generation is
\[
P^{c}_{\mathrm{BERT}}
=
1-(1-\beta_c)^m.
\]

Since $\beta_c>\alpha_c$, we have
\[
1-\beta_c < 1-\alpha_c.
\]
Therefore,
\[
(1-\beta_c)^m < (1-\alpha_c)^m,
\]
which implies
\[
1-(1-\beta_c)^m
>
1-(1-\alpha_c)^m.
\]
Hence,
\[
P^{c}_{\mathrm{BERT}}>P^{c}_{\mathrm{rand}}.
\]
\end{proof}

\subsubsection{Theorem: BERT-Guided Crossover Improves GP Search Efficiency}

\begin{theorem}
Assume that the symbolic regression search process requires $k_c$ key structural recombination steps to reach a target high-fitness region. If the success probability of random crossover is $\alpha_c$, the success probability of BERT-guided crossover is $\beta_c$, and
\[
\beta_c>\alpha_c,
\]
then the expected number of crossover operations required by BERT-guided crossover is smaller than that required by random crossover:
\[
\mathbb{E}[T^{c}_{\mathrm{BERT}}]
<
\mathbb{E}[T^{c}_{\mathrm{rand}}].
\]
More specifically,
\[
\mathbb{E}[T^{c}_{\mathrm{rand}}]=\frac{k_c}{\alpha_c},
\qquad
\mathbb{E}[T^{c}_{\mathrm{BERT}}]=\frac{k_c}{\beta_c}.
\]
\end{theorem}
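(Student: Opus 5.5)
The plan is to mirror the proof of Theorem~\ref{thm:efficiency} and to reduce the statement to the preceding Lemma (expected waiting time for one beneficial crossover) applied phase by phase. I model the search as a sequence of $k_c$ phases. Phase $\ell$ begins right after the $(\ell-1)$-th key recombination step has been achieved and ends at the crossover attempt that achieves the $\ell$-th step. Within each phase every crossover attempt is treated as an independent Bernoulli trial. Under $C_{\mathrm{rand}}$ the success probability is $\alpha_c$, and under $C_{\mathrm{BERT}}$ it is $\beta_c$. The modelling assumption, which I will state explicitly, is that these probabilities do not depend on the phase or on the current parents. This is the same homogeneity assumption used for mutation in Theorem~\ref{thm:efficiency}.

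Let $T^{c,(\ell)}$ be the number of attempts in phase $\ell$. Then $T^{c}=\sum_{\ell=1}^{k_c}T^{c,(\ell)}$. By the homogeneity assumption each $T^{c,(\ell)}$ is geometric, with mean $1/\alpha_c$ for random crossover and $1/\beta_c$ for guided crossover. This is exactly the content of the preceding Lemma on expected waiting time. Linearity of expectation then gives
\[
\mathbb{E}[T^{c}_{\mathrm{rand}}]=\sum_{\ell=1}^{k_c}\frac{1}{\alpha_c}=\frac{k_c}{\alpha_c},
\qquad
\mathbb{E}[T^{c}_{\mathrm{BERT}}]=\frac{k_c}{\beta_c}.
\]
Independence across phases is not needed for this step; only the per-phase expectations matter. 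Since $\beta_c>\alpha_c>0$ and $k_c\ge 1$, we get $k_c/\beta_c<k_c/\alpha_c$. I will also record the speedup ratio $\mathbb{E}[T^{c}_{\mathrm{rand}}]/\mathbb{E}[T^{c}_{\mathrm{BERT}}]=\beta_c/\alpha_c$, as was done after Theorem~\ref{thm:efficiency}.

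The main obstacle is justifying the geometric model itself. In a real GP run the success probability changes as the population evolves, so the phases are not identically distributed. To handle this without overclaiming, I will first give the proof under the stated constant-probability hypothesis, which is what the theorem asserts. I will then remark on a weaker condition that preserves the inequality: in phase $\ell$ the success probabilities are $\alpha_c^{(\ell)}$ and $\beta_c^{(\ell)}$ with $\beta_c^{(\ell)}>\alpha_c^{(\ell)}$. Summing $1/\beta_c^{(\ell)}<1/\alpha_c^{(\ell)}$ over $\ell$ still gives the strict inequality, though the closed forms $k_c/\alpha_c$ and $k_c/\beta_c$ then no longer hold. I also need $\alpha_c>0$ so that $\mathbb{E}[T^{c}_{\mathrm{rand}}]$ is finite; I will make that assumption explicit.
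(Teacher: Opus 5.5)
Your proposal is correct and essentially matches the paper's proof, which also treats the search as $k_c$ key beneficial crossover events, assigns each the geometric expected waiting time $1/\alpha_c$ or $1/\beta_c$, sums these by linearity of expectation, and concludes $k_c/\beta_c<k_c/\alpha_c$. Your explicit homogeneity assumption, the requirements $\alpha_c>0$ and $k_c\ge 1$, and the remark on phase-dependent probabilities only make precise what the paper leaves implicit.
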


\begin{proof}
We view the process of reaching the target high-fitness region as requiring $k_c$ key beneficial crossover events. For each key event, the expected number of attempts required by random crossover is
\[
\frac{1}{\alpha_c},
\]
while that required by BERT-guided crossover is
\[
\frac{1}{\beta_c}.
\]

By the linearity of expectation, the total expected numbers of crossover attempts are
\[
\mathbb{E}[T^{c}_{\mathrm{rand}}]
=
\underbrace{\frac{1}{\alpha_c}+\cdots+\frac{1}{\alpha_c}}_{k_c\ \text{times}}
=
\frac{k_c}{\alpha_c},
\]
and
\[
\mathbb{E}[T^{c}_{\mathrm{BERT}}]
=
\underbrace{\frac{1}{\beta_c}+\cdots+\frac{1}{\beta_c}}_{k_c\ \text{times}}
=
\frac{k_c}{\beta_c}.
\]

Since $\beta_c>\alpha_c$, it follows that
\[
\frac{k_c}{\beta_c}<\frac{k_c}{\alpha_c}.
\]
Therefore,
\[
\mathbb{E}[T^{c}_{\mathrm{BERT}}]
<
\mathbb{E}[T^{c}_{\mathrm{rand}}].
\]
\end{proof}

\subsubsection{Search Efficiency under Joint Mutation and Crossover Guidance}

In the complete GESR framework, the search process contains both BERT-guided mutation and BERT-guided crossover. Let the success probabilities of random mutation and BERT-guided mutation be
\[
\alpha_m,\qquad \beta_m,
\]
where
\[
\beta_m>\alpha_m.
\]
Similarly, let the success probabilities of random crossover and BERT-guided crossover be
\[
\alpha_c,\qquad \beta_c,
\]
where
\[
\beta_c>\alpha_c.
\]

If an effective evolutionary improvement can be produced by either a beneficial mutation or a beneficial crossover, and if these two types of events are approximately independent, then the probability of obtaining at least one effective improvement in one generation under conventional GP can be written as
\[
P_{\mathrm{rand}}
=
1-(1-\alpha_m)^{n_m}(1-\alpha_c)^{n_c},
\]
where $n_m$ and $n_c$ denote the numbers of mutation and crossover operations performed in one generation, respectively.

For GESR with both mutation guidance and crossover guidance, the corresponding probability is
\[
P_{\mathrm{GESR}}
=
1-(1-\beta_m)^{n_m}(1-\beta_c)^{n_c}.
\]

Since
\[
\beta_m>\alpha_m,\qquad \beta_c>\alpha_c,
\]
we have
\[
(1-\beta_m)^{n_m}(1-\beta_c)^{n_c}
<
(1-\alpha_m)^{n_m}(1-\alpha_c)^{n_c}.
\]
Therefore,
\[
P_{\mathrm{GESR}}>P_{\mathrm{rand}}.
\]

This indicates that, after introducing both mutation guidance and crossover guidance, the probability of generating an effective improvement in each generation becomes higher. If reaching the target solution requires $k$ effective improvements, the expected number of generations can be approximated as
\[
\mathbb{E}[T_{\mathrm{GESR}}]\approx \frac{k}{P_{\mathrm{GESR}}},
\qquad
\mathbb{E}[T_{\mathrm{rand}}]\approx \frac{k}{P_{\mathrm{rand}}}.
\]
Since
\[
P_{\mathrm{GESR}}>P_{\mathrm{rand}},
\]
we obtain
\[
\mathbb{E}[T_{\mathrm{GESR}}]
<
\mathbb{E}[T_{\mathrm{rand}}].
\]

Thus, BERT-guided crossover not only improves the efficiency of structural recombination on its own, but also works together with BERT-guided mutation to increase the probability of generating high-quality candidate expressions in each generation, thereby reducing the expected search time required to reach high-fitness symbolic expressions.

\subsubsection{Discussion}

The above analysis is based on a simplified probabilistic model, but it captures the core reason why BERT-guided crossover improves search efficiency. In conventional GP, crossover points are usually selected randomly. As a result, many crossover operations may destroy useful substructures and generate low-quality offspring. In contrast, the crossover-guiding BERT predicts subtree exchange positions based on both the symbolic structures of the two parent expressions and the data distribution of $(X,Y)$, making it more likely to select crossover points that improve fitness.

In other words, BERT-guided crossover transforms the nearly unbiased random subtree exchange in conventional GP into a data-conditioned and directed structural recombination process. As long as the crossover-guiding model induces a positive increase in the probability of beneficial crossover, i.e.,
\[
\beta_c-\alpha_c>0,
\]
this improvement directly leads to a reduction in the expected number of search steps. Therefore, from a theoretical perspective, introducing crossover guidance can further improve the search efficiency of GESR and, together with mutation guidance, explains the faster convergence and stronger search stability observed in experiments.

\section{Performance comparison between GESR and other types of Baselins}
In order to enrich the experimental results, we compared GESR with more other baselines, and the results are shown in Table \ref{tab_6}.
\begin{itemize}
    \item \textbf{uDSR\cite{landajuela2022unified}}: uDSR is a deep symbolic regression framework that unifies neural-guided search, reinforcement learning optimization, and symbolic expression generation to automatically discover high-precision and interpretable mathematical formulas from data.
    \item \textbf{NeSymReS}\cite{nesy}: Based on SymbolicGPT, the algorithm treated each `operator' (e.g., `sin', `cos') as a token. The sequence of expressions is generated in turn. The setup is more reasonable.
    \item \textbf{End2End}\cite{kamienny2022end}: End2End further encodes the constants so that the model generates the constants directly when generating the expression, abandoning the constant placeholder `C' used by the previous two algorithms.  
    \item \textbf{TPSR}\cite{shojaee2024transformer}: One uses a pre-trained model as a policy network to guide the MCTS process, which greatly improves the search efficiency of MCTS.
\end{itemize}

\section{Comparison Between Greedy and Probability Sampling Strategies in BERT-Guided Mutation}

To further analyze the effectiveness of the BERT-guided mutation module in GESR, we conducted a comparative study between a \textbf{greedy mutation generation strategy} and a \textbf{probability-distribution sampling strategy}. The main difference between the two strategies lies in the mutation decision process: the former directly selects the candidate with the highest probability predicted by BERT at each mutation step, whereas the latter samples candidates according to the predicted probability distribution, thereby preserving a certain degree of exploration.
\newcolumntype{P}[1]{>{\centering\arraybackslash}p{#1}}
\begin{wraptable}{r}{0.74\textwidth}
\centering
\fontsize{6.0pt}{7.5pt}\selectfont
\setlength{\tabcolsep}{3pt}
\begin{tabular}{P{1.4cm}P{1.1cm}P{1.1cm}P{1.1cm}P{1.1cm}P{1.1cm}P{1.1cm}}
\toprule[1.2pt]
\makecell{Dataset} & \multicolumn{3}{c}{\makecell{Greedy}} 
& \multicolumn{3}{c}{\makecell{Sampling}} \\  
\cmidrule[0.1pt]{1-7}
& $R^2 \uparrow$ & Nodes $\downarrow$ & \makecell{Time(s)$\downarrow$} 
& $R^2 \uparrow$ & Nodes $\downarrow$ & \makecell{Time(s)$\downarrow$} \\ 
\cmidrule(lr){1-1}
\cmidrule(lr){2-4}
\cmidrule(lr){5-7}

Nguyen         & \cellcolor{gray!25}0.9999 & \cellcolor{gray!25}14.5 & \cellcolor{gray!25}84  
               & \cellcolor{gray!25}0.9999 & 16.9 & 96 \\
Keijzer        & \cellcolor{gray!25}0.9997 & \cellcolor{gray!25}16.2 & \cellcolor{gray!25}133 
               & 0.9942 & 17.8 & 149 \\
Korns          & \cellcolor{gray!25}0.9938 & \cellcolor{gray!25}16.8 & \cellcolor{gray!25}123 
               & 0.9919 & 19.6 & 144 \\
Constant       & \cellcolor{gray!25}0.9995 & \cellcolor{gray!25}28.5 & \cellcolor{gray!25}108 
               & 0.9841 & 31.2 & 126 \\
Livermore      & 0.9903 & \cellcolor{gray!25}36.4 & \cellcolor{gray!25}145 
               & \cellcolor{gray!25}0.9937 & 33.1 & 186 \\
Vladislavleva  & \cellcolor{gray!25}0.9986 & \cellcolor{gray!25}24.8 & \cellcolor{gray!25}122 
               & 0.9901 & 23.5 & 173 \\
R              & \cellcolor{gray!25}0.9962 & \cellcolor{gray!25}15.7 & \cellcolor{gray!25}135 
               & 0.9840 & 18.3 & 161 \\
Jin            & \cellcolor{gray!25}0.9974 & \cellcolor{gray!25}20.2 & \cellcolor{gray!25}84  
               & 0.9846 & 21.1 & 97 \\
Neat           & 0.9983 & \cellcolor{gray!25}16.3 & \cellcolor{gray!25}187 
               & \cellcolor{gray!25}0.9995 & 17.3 & 191 \\
Others         & \cellcolor{gray!25}0.9938 & 26.9 & \cellcolor{gray!25}152 
               & 0.9801 & \cellcolor{gray!25}26.5 & 173 \\
Feynman        & \cellcolor{gray!25}0.9925 & \cellcolor{gray!25}20.4 & \cellcolor{gray!25}121 
               & 0.9825 & 22.4 & 146 \\
Strogatz       & 0.9813 & \cellcolor{gray!25}23.6 & \cellcolor{gray!25}99  
               & \cellcolor{gray!25}0.9833 & 25.3 & 131 \\
Black-box      & 0.9931 & \cellcolor{gray!25}25.4 & \cellcolor{gray!25}84  
               & \cellcolor{gray!25}0.9953 & 31.1 & 136 \\

\cmidrule(lr){1-1}
\cmidrule(lr){2-4}
\cmidrule(lr){5-7}

Average        & \cellcolor{gray!25}0.9903 & \cellcolor{gray!25}22.0 & \cellcolor{gray!25}124.4 
               & 0.9887 & 23.3 & 146.8 \\
\bottomrule
\end{tabular}
\caption{Performance comparison between greedy algorithm and probabilistic random sampling in GESR.}
\label{Greedy}
\vspace{-0.3cm}
\end{wraptable}
The experimental results\ref{Greedy} show that the greedy strategy is more efficient overall than the probability sampling strategy(124.4/146.8). Specifically, the greedy strategy can generate high-quality candidate expressions more quickly, reducing the extra computational overhead caused by ineffective search paths and thus improving the search efficiency of symbolic regression. In contrast, although the probability-distribution sampling strategy theoretically provides stronger diversity and exploration capability, it also introduces more randomness, which makes it more likely to generate mutations that deviate from the current optimal direction and consequently reduces search efficiency.

This result also indirectly demonstrates that BERT is relatively accurate in predicting both mutation positions and mutation contents. In other words, the high-probability candidates output by BERT are often already the better mutation directions, so directly adopting greedy selection can make fuller use of the model's guidance ability, without relying on additional random sampling to compensate for insufficient prediction quality. 

If BERT were less accurate, the greedy strategy would be more likely to fall into local optima, while the sampling strategy might achieve better results through random exploration. However, the fact that the greedy strategy performs better in our experiments indicates that BERT provides sufficiently reliable prior guidance for the mutation task.

In summary, the greedy strategy not only outperforms the probability sampling strategy in terms of efficiency but also further validates the effectiveness and accuracy of the BERT-guided mutation mechanism. Therefore, adopting the greedy strategy as the default mutation generation method in GESR is both reasonable and effective. This design choice ensures the stability of the search process while further improving the overall optimization efficiency of the algorithm.

\section{Transferability of the BERT-Guided Gene Editing Strategy}

Beyond its application within the GESR framework, we further investigate the transferability of the proposed \textbf{BERT-guided gene editing strategy} to other genetic-programming-based (GP-based) symbolic regression methods. It should be emphasized that our approach is not restricted to GESR itself; rather, the pretrained multimodal BERT models can serve as general symbolic editors that provide learning-based guidance in both the mutation and crossover stages, thereby improving the search quality and efficiency of other GP-style symbolic regression methods.

\begin{wraptable}{r}{0.76\textwidth}
\centering
\fontsize{6.5pt}{7.5pt}\selectfont
\setlength{\tabcolsep}{3pt}
\begin{tabular}{P{1.35cm}P{1.2cm}P{1.2cm}P{1.2cm}P{1.2cm}P{1.2cm}P{1.2cm}}
\toprule[1.2pt]
\makecell{Dataset} & \multicolumn{2}{c}{\makecell{NGGP(Mutat+Cross)}} 
& \multicolumn{2}{c}{\makecell{NGGP(Mutation)}} 
& \multicolumn{2}{c}{\makecell{NGGP(No)}} \\  
\cmidrule[0.1pt]{1-7}
& $R^2 \uparrow$ & \makecell{Time(s)$\downarrow$} 
& $R^2 \uparrow$ & \makecell{Time(s)$\downarrow$}
& $R^2 \uparrow$ & \makecell{Time(s)$\downarrow$} \\
\cmidrule(lr){1-1}
\cmidrule(lr){2-3}
\cmidrule(lr){4-5}
\cmidrule(lr){6-7}

Nguyen         & \cellcolor{gray!25}0.9999 & \cellcolor{gray!25}74  
               & 0.9999 & 80  
               & 0.9999 & 106 \\
Keijzer        & \cellcolor{gray!25}0.9962 & \cellcolor{gray!25}133 
               & 0.9940 & 142 
               & 0.9924 & 196 \\
Korns          & \cellcolor{gray!25}0.9921 & \cellcolor{gray!25}116 
               & 0.9898 & 123 
               & 0.9872 & 168 \\
Constant       & \cellcolor{gray!25}0.9993 & \cellcolor{gray!25}91 
               & 0.9990 & 98 
               & 0.9988 & 132 \\
Livermore      & \cellcolor{gray!25}0.9868 & \cellcolor{gray!25}165 
               & 0.9812 & 178 
               & 0.9746 & 247 \\
Vladislavleva  & \cellcolor{gray!25}0.9971 & \cellcolor{gray!25}137 
               & 0.9958 & 146 
               & 0.9963 & 199 \\
R              & \cellcolor{gray!25}0.9896 & \cellcolor{gray!25}142 
               & 0.9855 & 151 
               & 0.9744 & 208 \\
Jin            & \cellcolor{gray!25}0.9952 & \cellcolor{gray!25}79  
               & 0.9935 & 84  
               & 0.9916 & 113 \\
Neat           & \cellcolor{gray!25}0.9923 & \cellcolor{gray!25}136 
               & 0.9896 & 145 
               & 0.9827 & 199 \\
Others         & \cellcolor{gray!25}0.9901 & \cellcolor{gray!25}141 
               & 0.9880 & 150 
               & 0.9861 & 208 \\
Feynman        & \cellcolor{gray!25}0.9879 & \cellcolor{gray!25}98  
               & 0.9843 & 105 
               & 0.9710 & 144 \\
Strogatz       & \cellcolor{gray!25}0.9815 & \cellcolor{gray!25}91  
               & 0.9768 & 98  
               & 0.9613 & 133 \\
Black-box      & \cellcolor{gray!25}0.9554 & \cellcolor{gray!25}80  
               & 0.9478 & 86  
               & 0.9033 & 122 \\

\cmidrule(lr){1-1}
\cmidrule(lr){2-3}
\cmidrule(lr){4-5}
\cmidrule(lr){6-7}

Average        & \cellcolor{gray!25}0.9895 & \cellcolor{gray!25}114.1 
               & 0.9866 & 122.0 
               & 0.9784 & 167.3 \\
\bottomrule
\end{tabular}
\end{wraptable}

From the implementation perspective, such a transfer is direct and natural. For any GP-based symbolic regression method, the pretrained BERT modules can be incorporated into its original evolutionary pipeline: in the mutation stage, the original random symbol perturbation is replaced by BERT-guided conditional symbol editing; in the crossover stage, the original random selection of subtree exchange points is replaced by BERT-predicted directed subtree crossover. Specifically, during mutation, some symbols in the preorder traversal sequence of an expression are randomly masked, and BERT is then used to predict the most suitable replacement symbols, thereby generating a new candidate expression. During crossover, a pair of candidate expressions together with the data $(X,Y)$ are fed into the crossover-guiding BERT, which predicts the root nodes of the subtrees that are most suitable for exchange in the two expressions, thus enabling directed structural recombination. Through this mechanism, the evolutionary process can exploit the symbolic priors and data-conditioned information learned by BERT during pretraining, thereby reducing ineffective search caused by low-quality random edits.

To verify the transferability of this strategy, we incorporated both the pretrained mutation-guiding BERT and crossover-guiding BERT into NGGP. NGGP can be viewed as a hybrid symbolic regression framework that combines DSR and GP. In this experiment, we kept the overall structure of NGGP unchanged, and only replaced the original random mutation and random crossover operations in its GP component, so that the performance changes before and after introducing learning-guided gene editing could be directly evaluated. Based on this setting, we compared three configurations: \textbf{NGGP(No)}, which introduces no BERT guidance; \textbf{NGGP(Mutation)}, which introduces only mutation guidance; and \textbf{NGGP(Mutat+Cross)}, which introduces both mutation guidance and crossover guidance.

The experimental results are reported in Table~\ref{Greedy}. Overall, after introducing mutation guidance, NGGP achieves clear improvements in both fitting performance and inference efficiency on most datasets, indicating that BERT-guided symbol-level editing can effectively alleviate the randomness in the original GP search process. On top of this, further incorporating crossover guidance leads to additional improvements in overall performance: the average $R^2$ increases further, while the average inference time is further reduced. Notably, although the crossover-guiding BERT introduces additional inference overhead, the overall search efficiency is still improved. This suggests that directed crossover substantially reduces the generation of low-quality offspring and improves the effectiveness of structural recombination.

These results demonstrate that the proposed BERT-guided gene editing strategy has strong transferability. Its effectiveness does not rely on the specific framework design of GESR; instead, it can be naturally integrated as a plug-and-play evolutionary enhancement module into other GP-based symbolic regression methods. Furthermore, this also indicates that the symbolic editing priors learned by BERT during pretraining are not confined to a single framework, but can remain effective under different genetic search paradigms, thereby improving the overall performance and efficiency of symbolic regression.

\section{Test in NED (normalized edit distance)}
\label{AG}
In order to better test the performance of MetaSymNet and other baselines, we also use normalized edit distances (NED) proposed in the article \cite{matsubara2022rethinking} to test each baseline. The NED expression is as follows Eq\ref{e_NED}.
\begin{equation}
\label{e_NED}
NED(f_{pred},f_{true})=min(1,\frac{NED(f_{pred},f_{true})}{|f_{true}|})
\end{equation}
$f_{pred}$ and $f_{true}$ are the estimated and true equation trees, respectively. $NED(f_{pred}, ftrue)$ is the edit distance between $f_{pred}$ and $f_{true}$. $|f_{true}|$ denotes the number of tree nodes composing equation $f_{true}$. We note that this metric is meant to capture the symbolic similarity between the predicted and true equations, so the constant values themselves are not important; we only consider operators and not all constants (or constant placeholders).
The data set used for testing is the SRSD data set. The result is as follows table \ref{table:baseline_results}, and  table\ref{table:baseline_results_w_dummy_vars}:
\begin{table*}[t]
    
    \def\arraystretch{1.2}
    \small
    \begin{center}
        \bgroup
        \setlength{\tabcolsep}{0.4em}
        \begin{tabular}{c|l|rrrrrr} 
            \toprule
            \multirow{2}{*}{\bf Metric} & \multicolumn{1}{c|}{\multirow{2}{*}{\bf Group}} & \multicolumn{6}{c}{\bf SRSD-Feynman} \\
            \cmidrule(lr){3-8}
            & & \multicolumn{1}{c}{\bf GESR} & \multicolumn{1}{c}{\bf PYSR} & \multicolumn{1}{c}{\bf NGGP} & \multicolumn{1}{c}{\bf SPL} & \multicolumn{1}{c}{\bf SNIP} & \multicolumn{1}{c}{\bf GPlearn}  \\
            \midrule
            \multirow{3}{*}{\rotatebox{90}{NED}} & Easy & 0.496&	\textbf{0.448}&	0.630&	0.793 &  0.825 & 0.782  \\ 
            & Medium & \textbf{0.545}&	0.691&	0.792&	0.848 & 0.891 & 0.823  \\ 
            & Hard & \textbf{0.602}&	0.744&	0.915&0.952 & 0.979 & 0.872 \\
            \bottomrule
        \end{tabular}
        \egroup
        \caption{Baseline results for NED (normalized edit distance).}
    \label{table:baseline_results}
    \end{center}    
    
    \def\arraystretch{1.2}
    \small
    \begin{center}
        \bgroup
        \setlength{\tabcolsep}{0.4em}
        \begin{tabular}{c|l|rrrrrr} 
            \toprule
            \multirow{2}{*}{\bf Metric} & \multicolumn{1}{c|}{\multirow{2}{*}{\bf Group}} & \multicolumn{6}{c}{\bf SRSD-Feynman} \\
            \cmidrule(lr){3-8}
            & & \multicolumn{1}{c}{\bf GESR} & \multicolumn{1}{c}{\bf PYSR} & \multicolumn{1}{c}{\bf NGGP} & \multicolumn{1}{c}{\bf SPL} & \multicolumn{1}{c}{\bf SNIP} & \multicolumn{1}{c}{\bf GPlearn}  \\
            \midrule
            \multirow{3}{*}{\rotatebox{90}{NED}} & Easy & \textbf{0.653}&	0.681&	0.863&	0.906& 0.951 & 0.899  \\ 
            & Medium & \textbf{0.624}&	0.751&	0.962&	0.978& 0.981 & 0.863  \\ 
            & Hard & \textbf{0.644}&	0.746&	1.000&	1.000
 & 1.000 & 0.977 \\
            \bottomrule
        \end{tabular}
        \egroup
        \caption{Baseline results for NED (normalized edit distance).}
        \label{table:baseline_results_w_dummy_vars}
    \end{center}
\end{table*}

\section{Noise Robustness Test}
\begin{wrapfigure}{r}{0.48\textwidth}
    \centering
    \vspace{-0.2cm}
    \includegraphics[width=0.48\textwidth]{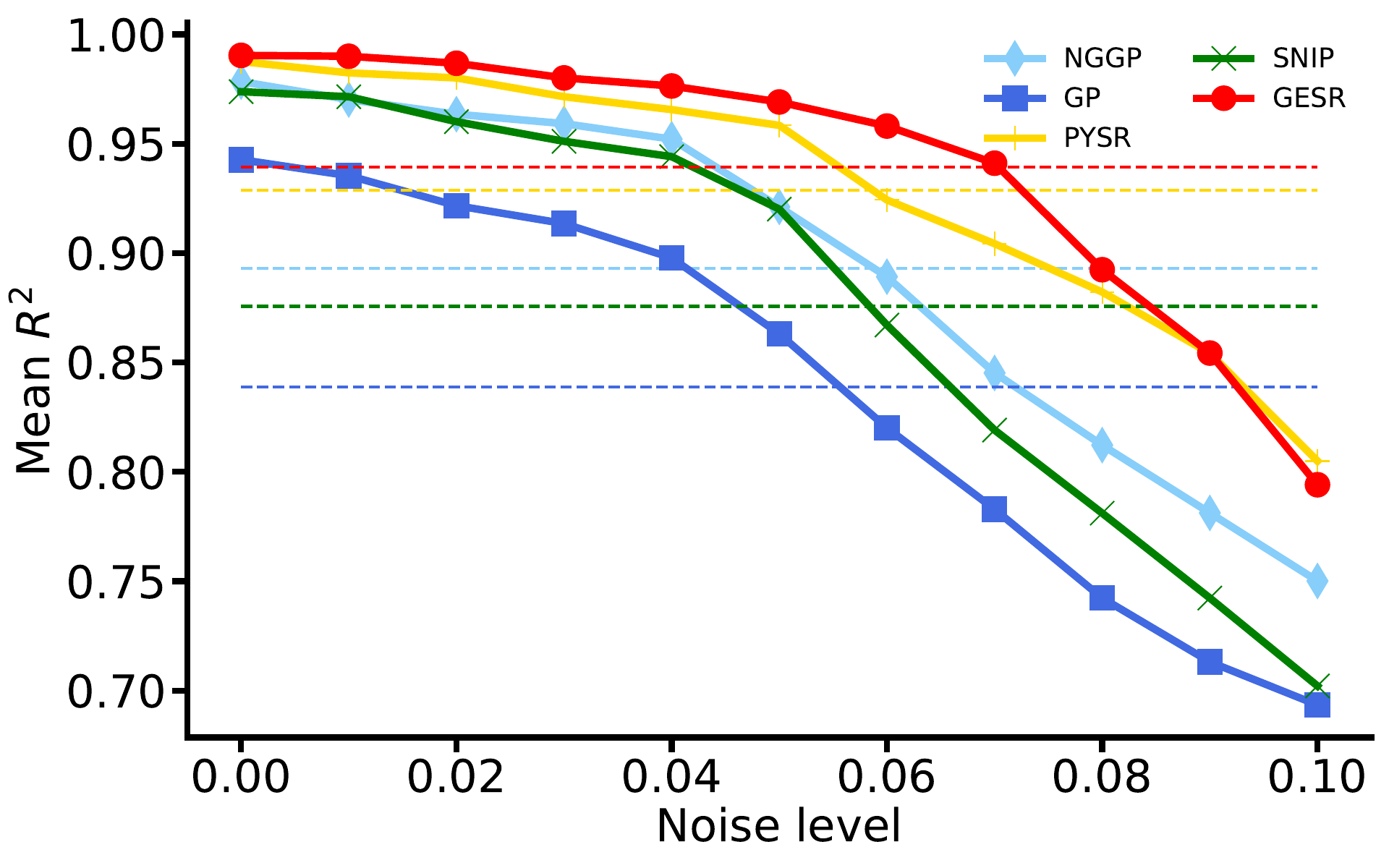}
    \vspace{-0.2cm}
    \caption{Noise robustness tests for GESR and four Baselines.}
    \label{fig_noise}
    \vspace{-0.3cm}
\end{wrapfigure}
In real-world scenarios, data often contains noise and uncertainty. Such noise may arise from measurement errors, environmental interference during data acquisition, or other unavoidable random factors \citep{no1,no2,no3}. Therefore, noise robustness testing is important for evaluating the performance of symbolic regression algorithms under imperfect data conditions, as it helps us better understand the reliability and stability of a model in practical applications. Moreover, such experiments can provide useful guidance for model selection and parameter tuning, thereby improving the applicability of the algorithm in real-world settings \citep{ano1,ano2}.

In this work, the noisy data $y_{noise}$ is generated as follows. To simulate different levels of noise in real-world environments, we divide the noise into ten levels. Specifically, noise data $Data_{noise}$ is randomly sampled from the interval $[-\mathcal{L} * Span,\,+\mathcal{L} * Span]$, where $\mathcal{L}=[0.00,0.01,0.02,\dots,0.1]$ denotes the noise level, and $Span = |\max(y)-\min(y)|$ represents the value range of the target variable $y$. The noisy target is then constructed as $y_{noise} = y + Data_{noise}$.

We use the relevant datasets to evaluate the noise robustness of the algorithm. For each mathematical expression, we repeat the experiment 10 times under different noise levels, and compute the $R^2$ between the fitted curve obtained in each trial and the original curve as the metric for quantifying noise robustness. The final result is obtained by averaging over the 10 trials.

We compare the average $R^2$ of GESR with those of four symbolic regression baselines under different noise levels, and the results are shown in Fig.~\ref{fig_noise}. It can be observed that GESR maintains strong fitting performance while also exhibiting good noise robustness, indicating that it can still achieve stable performance under moderate noise perturbations. However, when the noise level reaches 0.08, the performance of GESR begins to degrade more rapidly. We speculate that this phenomenon may be related to the relatively limited noise tolerance of the pretrained BERT model: when the input data noise exceeds the distribution range covered during pretraining, the symbolic editing priors provided by BERT may be more strongly disrupted, which in turn affects the overall search performance. Nevertheless, under most low- to medium-noise settings, GESR still demonstrates relatively strong robustness.

\section{Comparative Analysis of Symbolic Regression Performance Across Different Dimensions}

Symbolic regression is essentially a high-complexity combinatorial optimization problem. As the dimensionality of the input variables increases, the construction space of candidate expressions expands rapidly. With each additional variable $x_n$, the algorithm must not only search over a larger space of operator combinations, but also simultaneously handle multiple coupled processes such as variable selection and expression structure generation. As a result, the search space typically grows exponentially. This implies that symbolic regression algorithms face substantially greater challenges in high-dimensional settings: on the one hand, search efficiency decreases significantly; on the other hand, the algorithm becomes more prone to falling into local optima, thereby degrading the quality of the final expression.

To evaluate the performance of the proposed method under different levels of problem complexity, we conducted experiments on one-dimensional, two-dimensional, and three-dimensional datasets, comparing standard genetic programming (GP) with GESR, which incorporates BERT-based guidance. Figure~\ref{fig:dim_compare} illustrates the trends of fitting accuracy ($R^2$) and inference time for the two methods under different dimensional settings.

From the perspective of the $R^2$ metric, as the dimensionality increases from one to three, the fitting performance of both methods declines, which is consistent with the increased complexity of symbolic regression problems. However, the degradation of GESR is markedly slower. Specifically, in the one-dimensional task, both methods achieve $R^2$ values close to 1, indicating that they can both fit the data well in simple low-dimensional scenarios. However, when the dimensionality increases to two and three, the $R^2$ of standard GP drops much more noticeably, whereas GESR still maintains relatively high fitting accuracy. This suggests that in a more complex expression search space, GESR can guide the search process more stably, reduce ineffective exploration, and thereby preserve better solution quality.

From the perspective of inference time, as the dimensionality increases, the search cost of both algorithms rises significantly, but the growth rate of GESR is clearly lower than that of GP. Especially in the three-dimensional setting, the inference time of GP increases much more sharply, while GESR still maintains a relatively controlled growth trend. This indicates that as the problem dimensionality increases, standard GP must expend substantially more effort searching through the enlarged expression space, whereas GESR, aided by external prior guidance, can focus more quickly on more promising candidate regions, thereby significantly improving search efficiency. In other words, the higher the dimensionality and the larger the search space, the more pronounced the benefit brought by guidance information becomes.
\begin{wrapfigure}{r}{0.76\textwidth}
    \centering
    \vspace{-0.2cm}
    \includegraphics[width=0.72\textwidth]{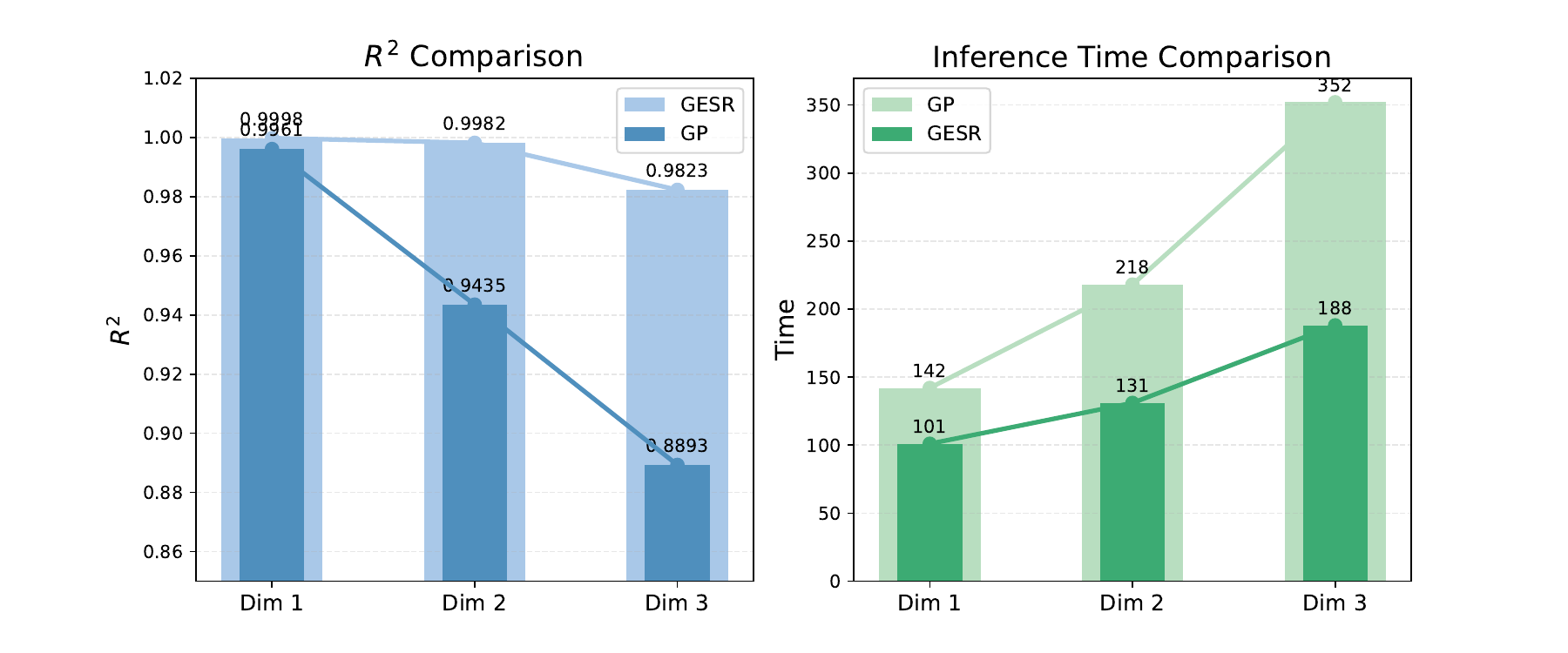}
    \vspace{-0.2cm}
    \caption{Noise robustness tests for GESR and four Baselines.}
    \label{fig:dim_compare}
    \vspace{-0.3cm}
\end{wrapfigure}

This phenomenon carries clear methodological significance. The core difficulty of high-dimensional symbolic regression lies not merely in finding the correct expression, but in efficiently approaching the correct expression within an extremely large search space. Traditional GP mainly relies on random mutation and crossover operations for exploration, and therefore lacks a strong search direction. As a result, it tends to generate a large number of low-quality candidate expressions in complex scenarios. In contrast, by introducing semantic or structural guidance from BERT, GESR makes the search process no longer entirely dependent on blind heuristic evolutionary operations, but instead provides a certain degree of directionality and prior constraint. Although the advantage of this guidance mechanism may not be particularly prominent in low-dimensional tasks, it effectively suppresses the performance degradation caused by search space explosion in high-dimensional tasks, thus exhibiting stronger robustness and scalability.

\subsection{Trend of $R^2$ with Respect to Training Data Scale}

To analyze the effect of training data scale on model performance, we further study the model performance on different benchmark datasets under different training data scales, as shown in Fig.~\ref{fig_data_size}. For symbolic regression, the expression space is extremely large, and it is almost impossible to cover all possible cases through finite sampling. Therefore, during pretraining, we can only increase the size of the training data as much as possible, so as to sample more representative expressions and their corresponding data, thereby better approximating the overall distribution.

As shown in the figure, when the training data scale increases from $10$K to $10$M, the $R^2$ values on multiple datasets exhibit a generally stable upward trend. This indicates that larger-scale training data helps the model learn more sufficient symbolic patterns and expression-editing priors, thereby improving its fitting ability and generalization performance on downstream symbolic regression tasks.

Meanwhile, it can also be observed that the performance gain gradually slows down as the training data continues to increase. In particular, in the range from $5$M to $10$M, the $R^2$ curves on most datasets become much flatter, suggesting that the model performance is gradually approaching saturation. From the results, when the training data scale reaches $10$M, the model already achieves satisfactory performance on the selected datasets, and the average $R^2$ on each dataset exceeds $0.98$. This suggests that a training set of size $10$M is already sufficient for the model to learn representative symbolic distribution characteristics.

It should be noted that, due to the open-ended nature of the expression distribution in symbolic regression, out-of-distribution samples are inevitable. Therefore, increasing the training data scale cannot completely eliminate the distribution shift problem, but it can alleviate it to some extent and improve the model’s ability to capture the dominant patterns in the data.

In summary, Fig.~\ref{fig_data_size} shows that model performance consistently improves as the training data scale increases, while the improvement gradually becomes marginal in the large-data regime. This indicates that using $10$M training samples for pretraining is reasonable, as it achieves a good balance between training cost and performance gain.

\begin{figure*}[htp]
    \centering
       \subfloat[]{
       \includegraphics[width=0.99\linewidth]{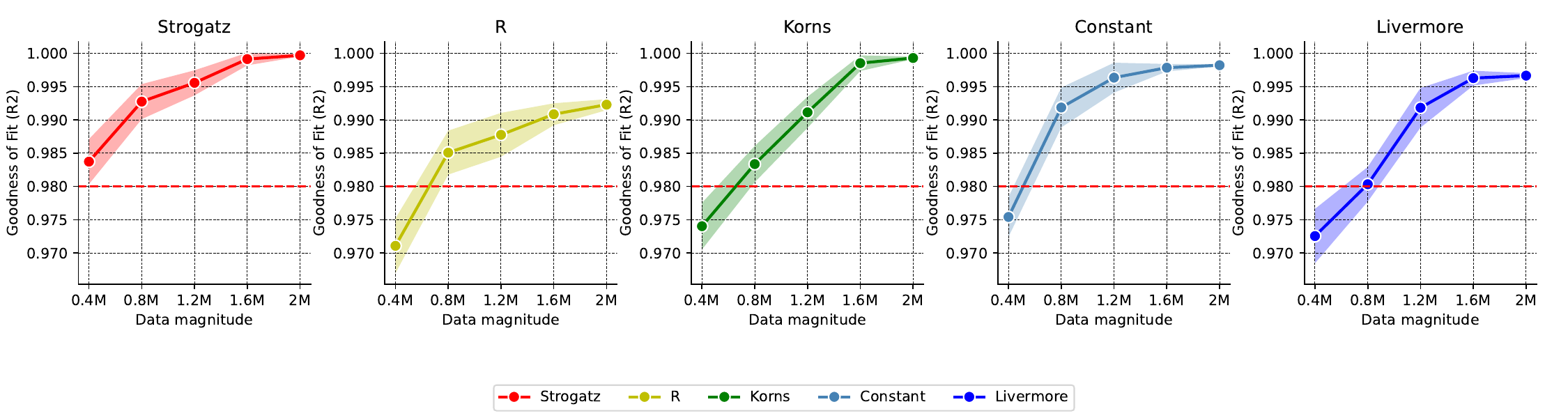} \label{fig4a}}\\
        \subfloat[]{
        \includegraphics[width=0.99\linewidth]{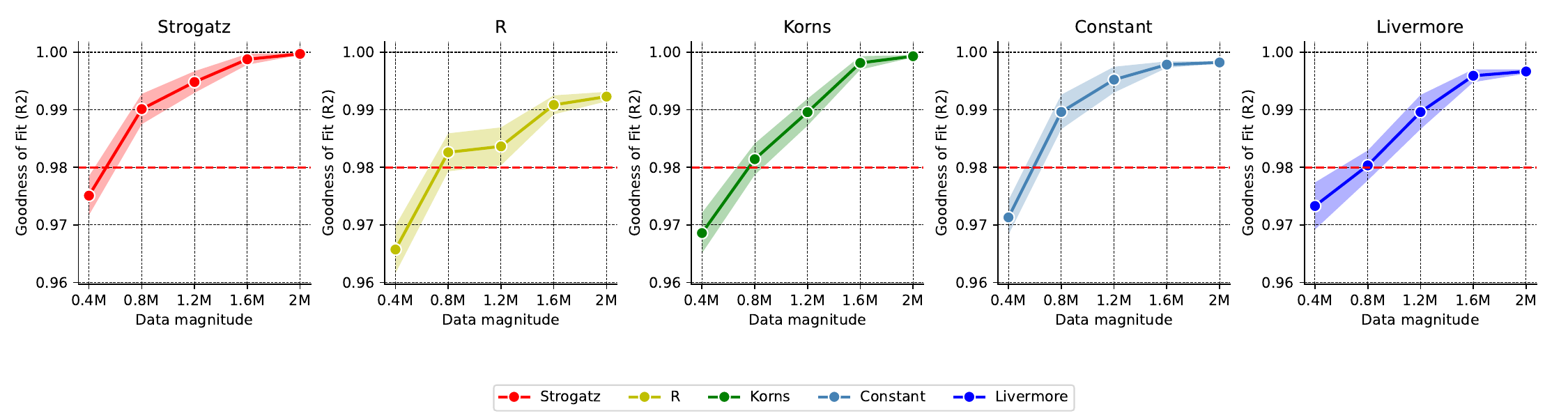}\label{fig4b}}\\
        \subfloat[]{
       \includegraphics[width=0.80\linewidth]{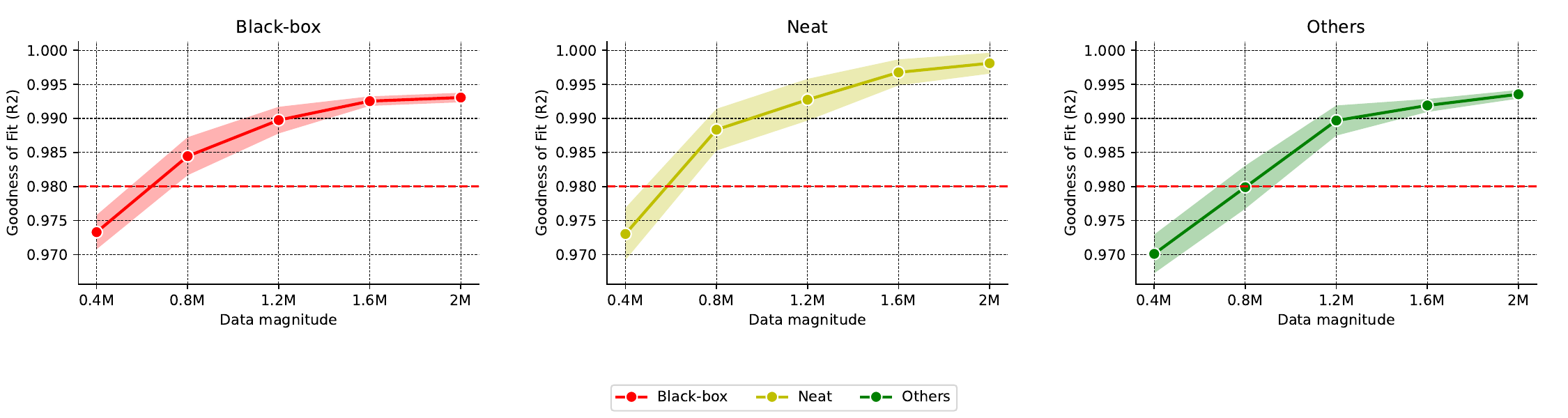} \label{fig4c}}
	  \caption{Effect of training data scale on model performance. As the size of the pretraining data increases from $10$K to $10$M, the $R^2$ values on different benchmark datasets consistently improve. When the training data reaches $10$M, the model achieves satisfactory performance on all selected datasets, with the average $R^2$ exceeding $0.98$, while the performance gains gradually become smaller in the large-data regime.}
\label{fig_data_size} 
\end{figure*}

\subsection{Comparison of Recovery Rates Across Different Algorithms}

To evaluate the structural recovery capability of different symbolic regression methods, this paper adopts the recovery rate as the evaluation metric, defined as the proportion of benchmark tasks on which an algorithm exactly recovers the ground-truth target expression. Unlike metrics that only focus on numerical fitting error, the recovery rate better reflects the actual ability of a method to discover underlying mathematical structures, and is therefore an important criterion for assessing the performance of symbolic regression algorithms.

As shown in Fig.~\ref{fig:Recovery_Rate}, GESR achieves higher recovery rates than the other compared methods on most datasets, demonstrating relatively stable overall performance.

These results indicate that GESR not only improves expression fitting accuracy, but also more effectively recovers the true analytical structure. This advantage mainly stems from the fact that the proposed method uses a multimodal BERT model to predict masked positions in symbolic expressions, transforming the random mutation process in traditional genetic programming into a data-driven guided editing process, thereby reducing ineffective search and increasing the proportion of beneficial mutations. In addition, during the construction of training data, only editing samples that lead to higher $R^2$ values are retained, enabling the model to learn which symbol replacements are more likely to improve expression quality and further enhancing its ability to recover the correct structure.

\section{Experimental Systems and Data for 16 Chaotic Ordinary Differential Equations}

To systematically evaluate the robustness and generalization capability of the proposed method under diverse nonlinear dynamical regimes, we select \textbf{16 representative continuous-time chaotic systems} as benchmark test cases. These systems span classical three-dimensional chaotic models, four-dimensional hyperchaotic systems, and dynamical models arising from \textbf{physics, chemistry, biology, engineering, and economics}
\cite{newton1975liepnik,lorenz1963deterministic,rossler1979hyperchaos,jha2002hyperchaos,pang2005hyperchaotic,shimizu1980chaos,genesio1992chaotic,haken1975laser,prigogine1968symmetry,rucklidge1992chaos,fitzhugh1961impulses,nagumo1962active,chen2008finance,li2005chaotic,lorenz1984hadley,sprott1994elegant}.
\begin{figure*}
    \centering
    \vspace{-0.2cm}
    \includegraphics[width=1.0\textwidth]{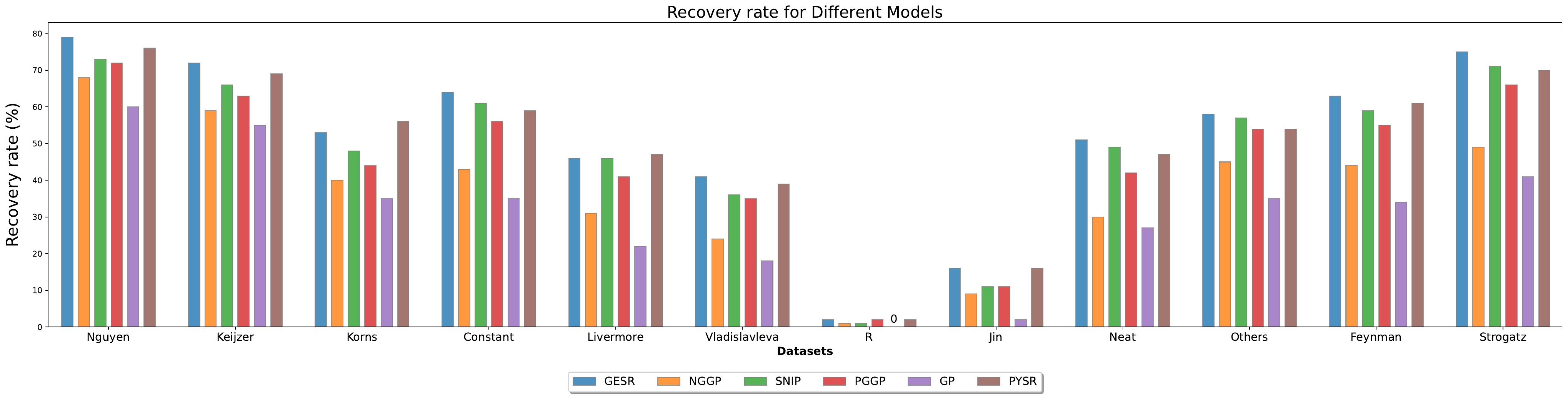}
    \vspace{-0.2cm}
    \caption{Recovery rates for GESR and baselines.}
    \label{fig:Recovery_Rate}
    \vspace{-0.3cm}
\end{figure*}
All systems are formulated as \textbf{autonomous ordinary differential equations (ODEs)}:
\begin{equation}
\frac{d\mathbf{x}(t)}{dt} = \mathbf{f}(\mathbf{x}(t)), \qquad \mathbf{x}(0)=\mathbf{x}_0,
\end{equation}
where $\mathbf{x}(t)\in\mathbb{R}^d$ and $d\in\{3,4\}$.  
For four-dimensional systems, only the first three state components are visualized for clarity. Representative trajectories are shown in Fig.~\ref{fig_16c}, and the explicit analytical forms of all governing equations are provided in the corresponding table\ref{table:cs_1to4},\ref{table:cs_5to8},\ref{table:cs_9to12},\ref{table:cs_13to16}.) should be kept the same as the main body.
\begin{table}[b!]
\centering
\caption{Chaotic dynamics 1 to 4.}
\begin{tabular}{lll}

\toprule
\textbf{Chaotic Dynamic} & \textbf{Governing Equations} & \textbf{Parameters} \\

\midrule
  NewtonLiepnik & $\begin{aligned}
    \dot{x} &= -ax + y +10yz \\
    \dot{y} &= -x-0.4y+5xz\\
    \dot{z} &= bz-5xy
           \end{aligned}$ & $a: 0.4, b: 0.175$ \\
\midrule
  HyperLorenz & $\begin{aligned}
    \dot{x} &= a (y - x) + w \\
    \dot{y} &= -x z + c x - y\\
    \dot{z} &= -b z + x y \\
    \dot{w} &= d w - x z
           \end{aligned}$ & $a: 10, b: 2.667, c: 28, d: 1.1$\\

\midrule
  HyperJha & $\begin{aligned}
    \dot{x} &= a (y - x) + w \\
    \dot{y} &= -x z + b x - y\\
    \dot{z} &= x y - c z \\
    \dot{w} &=  -x z + d w
           \end{aligned}$ & $a: 10, b: 28, c: 2.667, d: 1.3$\\

\midrule
  HyperPang & $\begin{aligned}
    \dot{x} &= a (y - x) \\
    \dot{y} &= -x z + c y + w\\
    \dot{z} &= x y - b z \\
    \dot{w} &=-d (x + y)
           \end{aligned}$ & $a: 36, b: 3, c: 20, d: 2$\\

\bottomrule
\end{tabular}
\label{table:cs_1to4}
\end{table}
\begin{table}[htbp]

\centering
\caption{Chaotic dynamics 5 to 8.}
\begin{tabular}{lll}

\toprule
\textbf{Chaotic Dynamic} & \textbf{Governing Equations} & \textbf{Parameters} \\

\midrule
  ShimizuMorioka & $\begin{aligned}
    \dot{x} &= y \\
    \dot{y} &=  x - a y - x z\\
    \dot{z} &= -b z + x ^ 2
           \end{aligned}$ & $a: 0.85, b: 0.5$ \\
\midrule
  GenesioTesi & $\begin{aligned}
    \dot{x} &= y \\
    \dot{y} &= z\\
    \dot{z} &= -c x - b y - a z + x ^ 2
           \end{aligned}$ & $a: 0.44, b: 1.1, c: 1$\\

\midrule
  Laser & $\begin{aligned}
    \dot{x} &= a (y - x) + b y z ^ 2 \\
    \dot{y} &= c x + d x z ^ 2\\
    \dot{z} &= h z + k x ^ 2
           \end{aligned}$ & $\begin{aligned}
               &a: 10.0, b: 1.0, c: 5.0,\\ &d: -1.0, h: -5.0, k: -6.0
           \end{aligned} $\\

\midrule
  Duffing & $\begin{aligned}
    \dot{x} &= y \\
    \dot{y} &= -\delta y - \beta x - \alpha x ^ 3\\
    \dot{z} &= \omega
           \end{aligned}$ & $\alpha: 1.0, \beta: -1.0, \delta: 0.1,  \omega: 1.4$\\

\bottomrule
\end{tabular}
\label{table:cs_5to8}
\end{table}
\begin{table}[htbp]

\centering
\caption{Chaotic dynamics 9 to 12.}
\begin{tabular}{lll}

\toprule
\textbf{Chaotic Dynamic} & \textbf{Governing Equations} & \textbf{Parameters} \\

\midrule
  Brusselator & $\begin{aligned}
    \dot{x} &= a + x ^ 2 y - (b + 1) x \\
    \dot{y} &= b x - x ^ 2 y\\
    \dot{z} &= w
           \end{aligned}$ & $a: 0.4, b: 1.2, w: 0.81$ \\
\midrule
  KawczynskiStrizhak & $\begin{aligned}
    \dot{x} &= \gamma (y - x ^ 3 + 3 \mu x) \\
    \dot{y} &= -2 \mu x - y - z + \beta\\
    \dot{z} &= \kappa (x - z)
           \end{aligned}$ & $\beta: -0.4, \gamma: 0.49, \kappa: 0.2, \mu: 2.1$\\

\midrule
  Rucklidge & $\begin{aligned}
    \dot{x} &= -a x + b y - y z \\
    \dot{y} &= x\\
    \dot{z} &= -z + y ^ 2
           \end{aligned}$ & $a: 2.0, b: 6.7$\\

\midrule
  FitzHughNagumo & $\begin{aligned}
    \dot{x} &= x - x ^ 3 / 3 - y + curr \\
    \dot{y} &= \gamma (x + a - b y)\\
    \dot{z} &= \omega
           \end{aligned}$ & $\begin{aligned}&a: 0.7, b: 0.8, curr: 0.965,\\& \gamma: 0.08, \omega: 0.04365\end{aligned}$\\

\bottomrule
\end{tabular}
\label{table:cs_9to12}
\end{table}
\begin{table}[htbp]

\centering
\caption{Chaotic dynamics 13 to 16.}
\begin{tabular}{lll}

\toprule
\textbf{Chaotic Dynamic} & \textbf{Governing Equations} & \textbf{Parameters} \\

\midrule
  Finance & $\begin{aligned}
    \dot{x} &= (1 / b - a) x + z + x y \\
    \dot{y} &= -b y - x ^ 2\\
    \dot{z} &= -x - c z
           \end{aligned}$ & $a: 0.001, b: 0.2, c: 1.1$ \\
\midrule
  DequanLi & $\begin{aligned}
    \dot{x} &= a (y - x) + d x z \\
    \dot{y} &= k x + f y - x z\\
    \dot{z} &= c z + x y - \epsilon x ^ 2
           \end{aligned}$ & $a: 40, c: 1.833, d: 0.16, \epsilon: 0.65, f: 20, k: 55$\\

\midrule
  Hadley & $\begin{aligned}
    \dot{x} &= -y ^ 2 - z ^ 2 - a x + a f \\
    \dot{y} &= x y - b x z - y + g\\
    \dot{z} &= b x y + x z - z
           \end{aligned}$ & $a: 0.2, b: 4, f: 9, g: 1$\\

\midrule
  SprottJerk & $\begin{aligned}
    \dot{x} &= y \\
    \dot{y} &= z\\
    \dot{z} &= -x + y ^ 2 - \mu z
           \end{aligned}$ & $\mu: 2.017$\\

\bottomrule
\end{tabular}
\label{table:cs_13to16}
\end{table}
\begin{figure*}[t]
\centering
\setlength{\belowcaptionskip}{-0.0cm} 
\includegraphics[width=136mm]{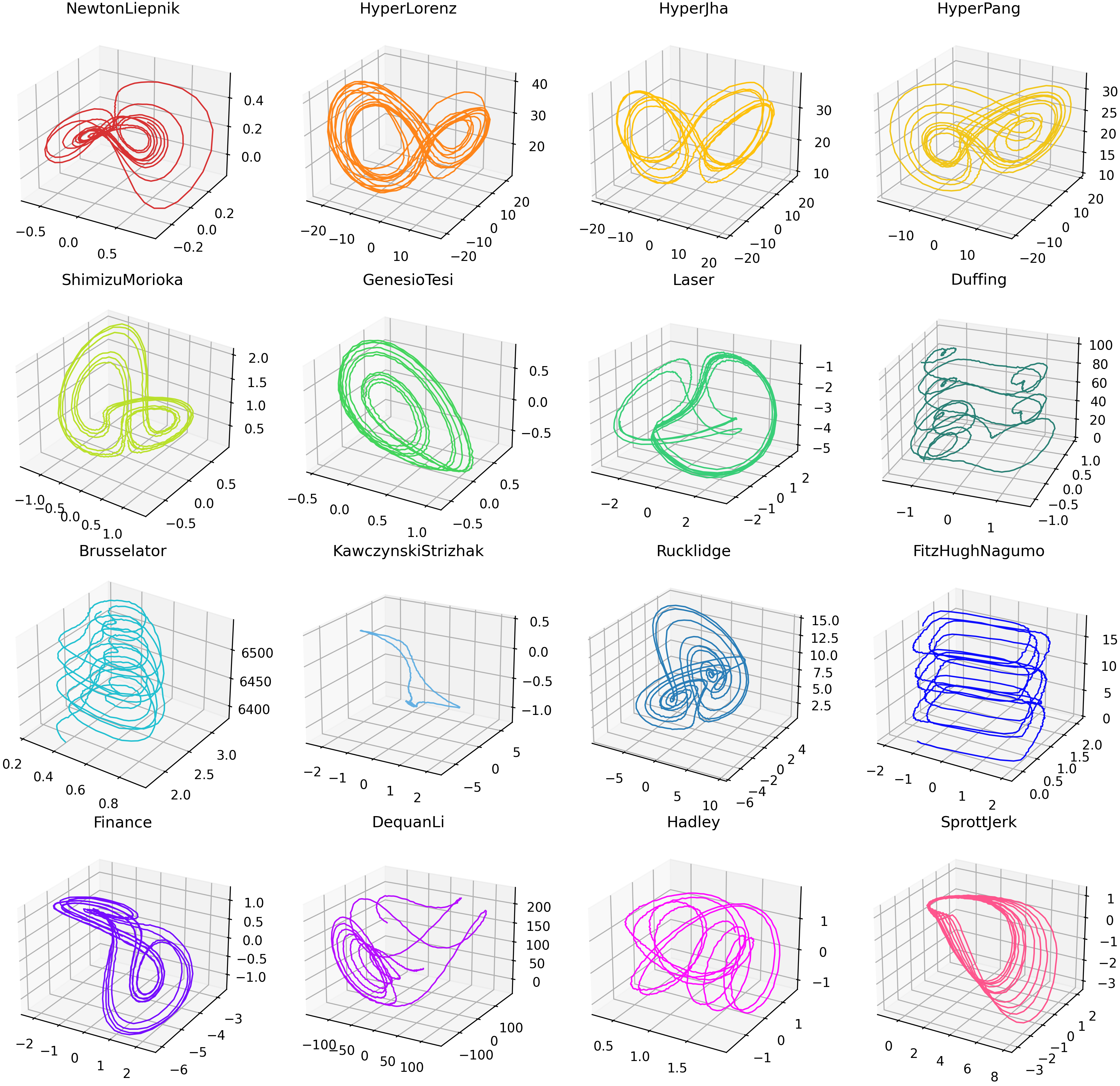}
\caption{Visualization of the trajectories of the 16 chaotic systems
}
\label{fig_16c}
\end{figure*}

\subsection{Introduction to background of chaotic systems}

\paragraph{1. Newton--Liepnik System~\cite{newton1975liepnik}}

The Newton--Liepnik system is defined by the following three-dimensional autonomous ordinary differential equations:
\begin{equation}
\begin{cases}
\dot{x} = -a x + y + 10yz,\\
\dot{y} = -x - a y + 5xz,\\
\dot{z} = b z - 5xy,
\end{cases}
\qquad a=0.4,\; b=0.175.
\end{equation}

This system is a three-dimensional nonlinear dynamical model with strong quadratic polynomial coupling terms. The bilinear interactions $xy$, $xz$, and $yz$ introduce significant energy exchange and feedback across state variables. Under the joint influence of linear damping and nonlinear coupling, the system forms a characteristic stretching-and-folding structure in phase space and generates a stable double-scroll chaotic attractor within the specified parameter regime. Owing to its explicit nonlinear structure and complex coupling patterns, this system serves as a standard benchmark for evaluating the ability of data-driven methods to identify polynomial nonlinearities.

\paragraph{2. Hyper-Lorenz System (Four-Dimensional Hyperchaos)~\cite{lorenz1963deterministic,rossler1979hyperchaos}}

The Hyper-Lorenz system is a four-dimensional extension of the classical Lorenz system, governed by:
\begin{equation}
\begin{cases}
\dot{x} = a(y-x) + w,\\
\dot{y} = cx - y - xz,\\
\dot{z} = xy - bz,\\
\dot{w} = dw - xz,
\end{cases}
\qquad a=10,\; b=2.667,\; c=28,\; d=1.1.
\end{equation}

By retaining the core nonlinear convection terms $xz$ and $xy$ from the Lorenz system while introducing an additional state variable $w$ and feedback dynamics, this system gains an extra unstable direction. Under suitable parameters, it typically exhibits two or more positive Lyapunov exponents, manifesting hyperchaotic behavior. Compared with the three-dimensional Lorenz system, the Hyper-Lorenz model presents higher-dimensional exponential divergence in phase space, posing a more stringent test for equation discovery methods in high-dimensional dynamical regimes.

\paragraph{3. Hyper-Jha System (Four-Dimensional Hyperchaos)~\cite{jha2002hyperchaos}}

The Hyper-Jha system is another Lorenz-type four-dimensional hyperchaotic system described by:
\begin{equation}
\begin{cases}
\dot{x} = a(y-x) + w,\\
\dot{y} = cx - y - xz,\\
\dot{z} = xy - bz,\\
\dot{w} = -dy,
\end{cases}
\end{equation}

Unlike the Hyper-Lorenz system, Hyper-Jha introduces a linear decay term $-dy$ in the auxiliary state equation, altering energy feedback pathways among state variables. This structural variation significantly influences attractor geometry and unstable manifold distribution without drastically changing the surface equation form, making it a valuable benchmark for distinguishing systems with similar algebraic forms but distinct dynamical behaviors.

\paragraph{4. Hyper-Pang System (Four-Dimensional Hyperchaos)~\cite{pang2005hyperchaotic}}

The Hyper-Pang system is another Lorenz-type hyperchaotic model governed by:
\begin{equation}
\begin{cases}
\dot{x} = a(y-x),\\
\dot{y} = cx - y - xz + w,\\
\dot{z} = xy - bz,\\
\dot{w} = -d(x+y),
\end{cases}
\end{equation}

A distinguishing feature of this system is that the auxiliary variable $w$ couples simultaneously to multiple original state variables through linear combinations, reshaping the global energy injection and dissipation structure. This results in phase-space projections that differ markedly from those of Hyper-Lorenz and Hyper-Jha, while preserving the Lorenz-type nonlinear backbone.

\paragraph{5. Shimizu--Morioka System~\cite{shimizu1980chaos}}

The Shimizu--Morioka system is described by the following three-dimensional autonomous ODE:
\begin{equation}
\begin{cases}
\dot{x} = y,\\
\dot{y} = x - ay - xz,\\
\dot{z} = -bz + x^2,
\end{cases}
\qquad a=0.85,\; b=0.5.
\end{equation}

Originating from low-dimensional laser dynamics modeling, this system can be interpreted as a state-dependent feedback oscillator coupled with a slow variable. The nonlinear terms $xz$ and $x^2$ introduce modulation effects in fast and slow channels, enabling complex chaotic behavior even in low dimensions. Its compact structure and interpretable nonlinear sources make it a classical prototype for studying chaos generation mechanisms.

\paragraph{6. Genesio--Tesi System~\cite{genesio1992chaotic}}

The Genesio--Tesi system is a polynomial chaotic model governed by:
\begin{equation}
\begin{cases}
\dot{x} = y,\\
\dot{y} = z,\\
\dot{z} = -cx - by - az + x^2,
\end{cases}
\qquad a=1.2,\; b=2.92,\; c=6.
\end{equation}

This system exhibits an integration-chain structure, with chaos driven solely by the quadratic term $x^2$. Its simple yet explicit nonlinear source makes it an important benchmark for evaluating the trade-off between nonlinear identification accuracy and model compactness.

\paragraph{7. Laser System~\cite{haken1975laser}}

The Laser system originates from single-mode laser rate equations and is described by:
\begin{equation}
\begin{cases}
\dot{x} = a(y-x),\\
\dot{y} = bx - y - xz,\\
\dot{z} = -cz + xy,
\end{cases}
\end{equation}

Here, state variables typically correspond to electric field intensity, polarization, and population inversion. The bilinear terms $xz$ and $xy$ describe nonlinear interactions between light and matter. Under suitable parameters, the system transitions between steady-state, periodic oscillations, and chaotic behavior, serving as a classical model in optical chaos.

\paragraph{8. Duffing Autonomous System~\cite{guckenheimer1983nonlinear}}

The Duffing system is a classical model of nonlinear vibration, expressed in autonomous form as:
\begin{equation}
\begin{cases}
\dot{x} = y,\\
\dot{y} = x - x^3 - ay,\\
\dot{z} = -bz,
\end{cases}
\end{equation}

Here, $x$ denotes displacement and $y$ velocity, while $x^3$ captures nonlinear stiffness effects. Although structurally simple, the system can exhibit complex phase-space behavior under appropriate damping parameters, making it a foundational prototype in nonlinear dynamics.

\paragraph{9. Brusselator System~\cite{prigogine1968symmetry}}

The Brusselator system models autocatalytic chemical reactions:
\begin{equation}
\begin{cases}
\dot{x} = A + x^2y - (B+1)x,\\
\dot{y} = Bx - x^2y,\\
\dot{z} = -z + x,
\end{cases}
\end{equation}

The nonlinear term $x^2y$ introduces positive feedback, enabling oscillatory and chaotic behavior far from equilibrium. This system is a canonical benchmark in nonequilibrium thermodynamics and chemical kinetics.

\paragraph{10. Kawczynski--Strizhak System~\cite{strizhak2000chaos}}

The Kawczynski--Strizhak system is a third-order polynomial chaotic model:
\begin{equation}
\begin{cases}
\dot{x} = y,\\
\dot{y} = z,\\
\dot{z} = -az - by - x + x^2,
\end{cases}
\end{equation}

Its chaos is driven entirely by the quadratic term $x^2$, making it suitable for analyzing how nonlinear forcing influences stability and bifurcation behavior.

\paragraph{11. Rucklidge System~\cite{rucklidge1992chaos}}

The Rucklidge system, originally developed for thermal convection instability, is defined as:
\begin{equation}
\begin{cases}
\dot{x} = -kx + ay - yz,\\
\dot{y} = x,\\
\dot{z} = y^2 - z,
\end{cases}
\end{equation}

It can be viewed as a Lorenz-type variant, with nonlinear coupling terms $yz$ and $y^2$ shaping its stretching-and-folding phase-space geometry.

\paragraph{12. FitzHugh--Nagumo System~\cite{fitzhugh1961impulses,nagumo1962active}}

The FitzHugh--Nagumo system is a reduced model of Hodgkin--Huxley neuron dynamics:
\begin{equation}
\begin{cases}
\dot{x} = c\!\left(x-\frac{x^3}{3}+y\right),\\
\dot{y} = -\frac{1}{c}(x-a+by),\\
\dot{z} = -z + x,
\end{cases}
\end{equation}

The cubic nonlinearity governs neuronal excitation and recovery dynamics and can produce oscillatory and chaotic firing patterns.

\paragraph{13. Finance System~\cite{chen2008finance}}

The Finance system models nonlinear macroeconomic interactions:
\begin{equation}
\begin{cases}
\dot{x} = z + (y-a)x,\\
\dot{y} = 1 - by - x^2,\\
\dot{z} = -x - cz,
\end{cases}
\end{equation}

The quadratic term $x^2$ captures saturation effects and economic feedback loops, producing chaotic market-like fluctuations.

\paragraph{14. Dequan--Li System~\cite{li2005chaotic}}

The Dequan--Li system is a Lorenz-type chaotic system:
\begin{equation}
\begin{cases}
\dot{x} = a(y-x) + yz,\\
\dot{y} = cx - xz + dy,\\
\dot{z} = xy - bz,
\end{cases}
\end{equation}

Additional bilinear couplings reshape attractor geometry, making it suitable for testing modeling under mixed linear–nonlinear structures.

\paragraph{15. Hadley Circulation Model~\cite{lorenz1984hadley}}

The Hadley circulation model describes large-scale atmospheric circulation:
\begin{equation}
\begin{cases}
\dot{x} = -y^2 - z^2 - ax + ac,\\
\dot{y} = xy - bxz - y + d,\\
\dot{z} = bxy + xz - z,
\end{cases}
\end{equation}

Its nonlinear energy-exchange terms allow complex low-dimensional climate dynamics.

\paragraph{16. Sprott--Jerk System~\cite{sprott1994elegant}}

The Sprott--Jerk system is a piecewise nonsmooth chaotic model:
\begin{equation}
\begin{cases}
\dot{x} = y,\\
\dot{y} = z,\\
\dot{z} = -az - y + \mathrm{sign}(x).
\end{cases}
\end{equation}

Chaos arises from the nonsmooth nonlinearity $\mathrm{sign}(x)$, making this system a challenging benchmark for symbolic regression under nonsmooth dynamics.

\subsection{Comprehensive Evaluation Protocol Based on Per-Dimension $R^2$ and Short-Horizon Rollout Consistency}
\label{subsec:chaos_r2_rollout}

To assess vector-field learning performance on chaotic dynamical systems in a fine-grained manner---without relying on symbolic structure matching or exact equation recovery---we adopt two complementary continuous metrics:
(i) per-dimension vector-field prediction accuracy $R^2_{\dot{\mathbf{x}}}$, and
(ii) short-horizon rollout consistency $R^2_{\mathrm{roll}}(K)$ over a fixed time window.
These metrics characterize model performance from the perspectives of \emph{local dynamical fitting} and \emph{short-term dynamical evolution}, respectively, and are jointly summarized in the same result tables for comparison.

\subsubsection{Per-dimension vector-field prediction accuracy.}
Consider a $d$-dimensional autonomous chaotic system
\(
\dot{\mathbf{x}}=\mathbf{f}(\mathbf{x})
\),
and let the learned vector field be
\(
\hat{\mathbf{f}}(\mathbf{x})=(\hat f_1,\ldots,\hat f_d)
\).
Rather than reporting a single aggregated $R^2$ score, we evaluate the derivative prediction accuracy independently for each state component, yielding per-dimension coefficients of determination
\(
\{R_k^2\}_{k=1}^{d}
\):
\begin{equation}
R_k^2
=
1
-
\frac{
\sum_{i\in\mathcal{D}_{\mathrm{te}}}
\left(
\tilde{\dot x}_{i}^{(k)}
-
\hat f_k(\tilde{\mathbf{x}}_i)
\right)^2
}{
\sum_{i\in\mathcal{D}_{\mathrm{te}}}
\left(
\tilde{\dot x}_{i}^{(k)}
-
\overline{\tilde{\dot x}^{(k)}}
\right)^2
},
\qquad k=1,\ldots,d .
\end{equation}
This per-dimension evaluation avoids metric masking caused by heterogeneous error scales across state variables, and directly reflects the recovery quality of each individual differential equation.
In Tables~\ref{tab:chaos_r2_full_part1} and~\ref{tab:chaos_r2_full_part2}, we report both the per-dimension scores $R_k^2$ and their average,
denoted by
\(
R^2_{\mathrm{mean}}
\).

\subsubsection{Short-horizon rollout consistency ($K=50$).}
Beyond one-step vector-field accuracy, we further evaluate short-term dynamical consistency induced by numerical integration.
Specifically, we select a set of starting indices $\mathcal{I}$ from the test trajectories.
For each $i\in\mathcal{I}$, we use the ground-truth state $\mathbf{x}_i$ as the initial condition and perform forward numerical integration with the learned vector field $\hat{\mathbf{f}}$ for a horizon of $K=50$ steps, producing a predicted trajectory segment
\(
\{\hat{\mathbf{x}}_{i,k}\}_{k=0}^{K}.
\)
Each rollout is independently initialized from the ground-truth state (i.e., reset at every window), which prevents long-term error accumulation from dominating the evaluation.

For each rollout window, we compute the coefficient of determination between the predicted segment
\(
\{\hat{\mathbf{x}}_{i,k}\}_{k=0}^{K}
\)
and the corresponding ground-truth segment
\(
\{\mathbf{x}_{i+k}\}_{k=0}^{K}
\),
Average it over state dimensions, and finally average across all starting indices.
The resulting short-horizon rollout metric is defined as:
\begin{equation}
R^2_{\mathrm{roll}}(K)
=
\frac{1}{|\mathcal{I}|}
\sum_{i\in\mathcal{I}}
R^2\!\left(
\{\mathbf{x}_{i:i+K}\},
\{\hat{\mathbf{x}}_{i,0:K}\}
\right),
\qquad K=50 .
\end{equation}
This metric quantifies the extent to which the learned vector field preserves the true dynamical evolution over short to moderate horizons.
In chaotic systems, it effectively captures the cumulative effect of local vector-field errors while avoiding evaluation distortion caused by the inevitable divergence of long-term trajectories.

\begin{table*}[t]
\centering
\small
\setlength{\tabcolsep}{4.0pt}
\renewcommand{\arraystretch}{1.12}
\caption{%
Quantitative results on the 16 chaotic systems under a noise level of $\rho = 2\%$.
For each system, the coefficient of determination $R^2_{\dot{x}_j}$ is computed independently for each state dimension, and then averaged over dimensions to obtain the overall metric $R^2_{\dot{\mathbf{x}}}$.
Short-term rollout consistency is measured by $R^2_{\mathrm{roll}}(50)$.
For compact presentation, standard deviations are indicated as subscripts in the lower-right corner (in {\scriptsize} font).
Within each system, the best-performing method with respect to the primary metric is highlighted in bold.
}
\label{tab:chaos_r2_full_part1}
\begin{tabular}{l c l c c c c c}
\toprule
System & $d$ & Method
& $R^2_{\dot x}\uparrow$
& $R^2_{\dot y}\uparrow$
& $R^2_{\dot z}\uparrow$
& $R^2_{\dot{\mathbf{x}}}\uparrow$
& $R^2_{\mathrm{roll}}(50)\uparrow$ \\
\midrule

\multirow{5}{*}{Newton--Liepnik}
& \multirow{5}{*}{3}
& GESR
& $0.996_{\scriptsize \pm 0.004}$ & $0.996_{\scriptsize \pm 0.005}$ & $0.995_{\scriptsize \pm 0.004}$
& \textbf{$0.994_{\scriptsize \pm 0.004}$}
& \textbf{$0.885_{\scriptsize \pm 0.036}$} \\
& & PySR
& $0.994_{\scriptsize \pm 0.004}$ & $0.993_{\scriptsize \pm 0.004}$ & $0.993_{\scriptsize \pm 0.004}$
& $0.993_{\scriptsize \pm 0.004}$
& $0.845_{\scriptsize \pm 0.041}$ \\
& & NGGP
& $0.991_{\scriptsize \pm 0.006}$ & $0.990_{\scriptsize \pm 0.006}$ & $0.989_{\scriptsize \pm 0.007}$
& $0.990_{\scriptsize \pm 0.006}$
& $0.742_{\scriptsize \pm 0.072}$ \\
& & SNIP
& $0.992_{\scriptsize \pm 0.005}$ & $0.991_{\scriptsize \pm 0.006}$ & $0.991_{\scriptsize \pm 0.005}$
& $0.991_{\scriptsize \pm 0.005}$
& $0.761_{\scriptsize \pm 0.068}$ \\
& & GPlearn
& $0.988_{\scriptsize \pm 0.008}$ & $0.987_{\scriptsize \pm 0.009}$ & $0.986_{\scriptsize \pm 0.009}$
& $0.987_{\scriptsize \pm 0.008}$
& $0.593_{\scriptsize \pm 0.110}$ \\
\midrule

\multirow{5}{*}{HyperLorenz}
& \multirow{5}{*}{4}
& GESR
& $0.998_{\scriptsize \pm 0.003}$ & $0.997_{\scriptsize \pm 0.003}$ & $0.994_{\scriptsize \pm 0.006}$
& \textbf{$0.996_{\scriptsize \pm 0.001}$}
& \textbf{$0.846_{\scriptsize \pm 0.044}$} \\
& & PySR
& $0.993_{\scriptsize \pm 0.004}$ & $0.994_{\scriptsize \pm 0.004}$ & $0.991_{\scriptsize \pm 0.005}$
& $0.993_{\scriptsize \pm 0.004}$
& $0.802_{\scriptsize \pm 0.052}$ \\
& & NGGP
& $0.990_{\scriptsize \pm 0.006}$ & $0.991_{\scriptsize \pm 0.006}$ & $0.988_{\scriptsize \pm 0.007}$
& $0.990_{\scriptsize \pm 0.006}$
& $0.731_{\scriptsize \pm 0.075}$ \\
& & SNIP
& $0.991_{\scriptsize \pm 0.006}$ & $0.992_{\scriptsize \pm 0.005}$ & $0.989_{\scriptsize \pm 0.006}$
& $0.991_{\scriptsize \pm 0.006}$
& $0.748_{\scriptsize \pm 0.070}$ \\
& & GPlearn
& $0.987_{\scriptsize \pm 0.009}$ & $0.986_{\scriptsize \pm 0.010}$ & $0.987_{\scriptsize \pm 0.009}$
& $0.987_{\scriptsize \pm 0.009}$
& $0.612_{\scriptsize \pm 0.103}$ \\
\midrule

\multirow{5}{*}{HyperJha}
& \multirow{5}{*}{4}
& GESR
& $0.995_{\scriptsize \pm 0.004}$ & $0.995_{\scriptsize \pm 0.004}$ & $0.994_{\scriptsize \pm 0.004}$
& \textbf{$0.996_{\scriptsize \pm 0.005}$}
& \textbf{$0.841_{\scriptsize \pm 0.008}$} \\
& & PySR
& $0.992_{\scriptsize \pm 0.004}$ & $0.993_{\scriptsize \pm 0.004}$ & $0.993_{\scriptsize \pm 0.004}$
& $0.993_{\scriptsize \pm 0.004}$
& $0.783_{\scriptsize \pm 0.058}$ \\
& & NGGP
& $0.989_{\scriptsize \pm 0.007}$ & $0.990_{\scriptsize \pm 0.006}$ & $0.989_{\scriptsize \pm 0.007}$
& $0.989_{\scriptsize \pm 0.007}$
& $0.694_{\scriptsize \pm 0.084}$ \\
& & SNIP
& $0.990_{\scriptsize \pm 0.006}$ & $0.991_{\scriptsize \pm 0.006}$ & $0.990_{\scriptsize \pm 0.006}$
& $0.990_{\scriptsize \pm 0.006}$
& $0.711_{\scriptsize \pm 0.079}$ \\
& & GPlearn
& $0.986_{\scriptsize \pm 0.010}$ & $0.987_{\scriptsize \pm 0.009}$ & $0.985_{\scriptsize \pm 0.010}$
& $0.986_{\scriptsize \pm 0.010}$
& $0.541_{\scriptsize \pm 0.123}$ \\
\midrule

\multirow{5}{*}{HyperPang}
& \multirow{5}{*}{4}
& GESR
& $0.996_{\scriptsize \pm 0.003}$ & $0.994_{\scriptsize \pm 0.005}$ & $0.996_{\scriptsize \pm 0.003}$
& \textbf{$0.996_{\scriptsize \pm 0.004}$}
& \textbf{$0.864_{\scriptsize \pm 0.010}$} \\
& & PySR
& $0.994_{\scriptsize \pm 0.004}$ & $0.992_{\scriptsize \pm 0.005}$ & $0.993_{\scriptsize \pm 0.004}$
& $0.993_{\scriptsize \pm 0.004}$
& $0.792_{\scriptsize \pm 0.058}$ \\
& & NGGP
& $0.989_{\scriptsize \pm 0.007}$ & $0.988_{\scriptsize \pm 0.007}$ & $0.987_{\scriptsize \pm 0.008}$
& $0.988_{\scriptsize \pm 0.007}$
& $0.642_{\scriptsize \pm 0.110}$  
\\
& & SNIP
& $0.991_{\scriptsize \pm 0.006}$ & $0.990_{\scriptsize \pm 0.006}$ & $0.989_{\scriptsize \pm 0.007}$
& $0.990_{\scriptsize \pm 0.006}$
& $0.601_{\scriptsize \pm 0.132}$  
\\
& & GPlearn
& $0.987_{\scriptsize \pm 0.010}$ & $0.988_{\scriptsize \pm 0.010}$ & $0.985_{\scriptsize \pm 0.011}$
& $0.987_{\scriptsize \pm 0.010}$
& $0.498_{\scriptsize \pm 0.140}$ \\
\midrule

\multirow{5}{*}{Shimizu--Morioka}
& \multirow{5}{*}{3}
& GESR
& $0.997_{\scriptsize \pm 0.003}$ & $0.996_{\scriptsize \pm 0.005}$ & $0.996_{\scriptsize \pm 0.003}$
& \textbf{$0.997_{\scriptsize \pm 0.004}$}
& \textbf{$0.914_{\scriptsize \pm 0.032}$} \\
& & PySR
& $0.995_{\scriptsize \pm 0.004}$ & $0.993_{\scriptsize \pm 0.004}$ & $0.995_{\scriptsize \pm 0.004}$
& $0.994_{\scriptsize \pm 0.004}$
& $0.874_{\scriptsize \pm 0.040}$ \\
& & NGGP
& $0.992_{\scriptsize \pm 0.006}$ & $0.990_{\scriptsize \pm 0.006}$ & $0.991_{\scriptsize \pm 0.006}$
& $0.991_{\scriptsize \pm 0.006}$
& $0.641_{\scriptsize \pm 0.105}$ \\
& & SNIP
& $0.993_{\scriptsize \pm 0.005}$ & $0.991_{\scriptsize \pm 0.006}$ & $0.992_{\scriptsize \pm 0.005}$
& $0.992_{\scriptsize \pm 0.005}$
& $0.668_{\scriptsize \pm 0.098}$ \\
& & GPlearn
& $0.989_{\scriptsize \pm 0.009}$ & $0.988_{\scriptsize \pm 0.010}$ & $0.989_{\scriptsize \pm 0.009}$
& $0.989_{\scriptsize \pm 0.009}$
& $0.452_{\scriptsize \pm 0.136}$ \\

\bottomrule
\end{tabular}
\end{table*}

\begin{table*}[t]
\centering
\small
\setlength{\tabcolsep}{4.0pt}
\renewcommand{\arraystretch}{1.12}
\caption{%
(Continued) Quantitative results for the chaotic systems under a noise level of $\rho = 2\%$.
All metrics are defined identically to those in Table~\ref{tab:chaos_r2_full_part1}.
}
\label{tab:chaos_r2_full_part1}
\begin{tabular}{l c l c c c c c}
\toprule
System & $d$ & Method
& $R^2_{\dot x}\uparrow$
& $R^2_{\dot y}\uparrow$
& $R^2_{\dot z}\uparrow$
& $R^2_{\dot{\mathbf{x}}}\uparrow$
& $R^2_{\mathrm{roll}}(50)\uparrow$ \\
\midrule
\multirow{5}{*}{Genesio--Tesi}
& \multirow{5}{*}{3}
& GESR
& $0.996_{\scriptsize \pm 0.005}$ & $0.996_{\scriptsize \pm 0.004}$ & $0.997_{\scriptsize \pm 0.005}$
& \textbf{$0.996_{\scriptsize \pm 0.004}$}
& \textbf{$0.892_{\scriptsize \pm 0.037}$} \\
& & PySR
& $0.993_{\scriptsize \pm 0.004}$ & $0.992_{\scriptsize \pm 0.004}$ & $0.992_{\scriptsize \pm 0.005}$
& $0.993_{\scriptsize \pm 0.004}$
& $0.852_{\scriptsize \pm 0.046}$ \\
& & NGGP
& $0.990_{\scriptsize \pm 0.007}$ & $0.991_{\scriptsize \pm 0.006}$ & $0.989_{\scriptsize \pm 0.008}$
& $0.990_{\scriptsize \pm 0.007}$
& $0.758_{\scriptsize \pm 0.082}$ \\
& & SNIP
& $0.991_{\scriptsize \pm 0.006}$ & $0.992_{\scriptsize \pm 0.006}$ & $0.990_{\scriptsize \pm 0.007}$
& $0.991_{\scriptsize \pm 0.006}$
& $0.781_{\scriptsize \pm 0.078}$ \\
& & GPlearn
& $0.987_{\scriptsize \pm 0.010}$ & $0.988_{\scriptsize \pm 0.010}$ & $0.987_{\scriptsize \pm 0.010}$
& $0.987_{\scriptsize \pm 0.010}$
& $0.621_{\scriptsize \pm 0.112}$ \\
\midrule

\multirow{5}{*}{Laser}
& \multirow{5}{*}{3}
& GESR
& $0.998_{\scriptsize \pm 0.002}$ & $0.998_{\scriptsize \pm 0.004}$ & $0.994_{\scriptsize \pm 0.006}$
& \textbf{$0.993_{\scriptsize \pm 0.004}$}
& \textbf{$0.798_{\scriptsize \pm 0.043}$} \\
& & PySR
& $0.994_{\scriptsize \pm 0.005}$ & $0.993_{\scriptsize \pm 0.005}$ & $0.990_{\scriptsize \pm 0.007}$
& $0.992_{\scriptsize \pm 0.005}$
& $0.731_{\scriptsize \pm 0.070}$ \\
& & NGGP
& $0.988_{\scriptsize \pm 0.008}$ & $0.989_{\scriptsize \pm 0.007}$ & $0.986_{\scriptsize \pm 0.009}$
& $0.988_{\scriptsize \pm 0.008}$
& $0.512_{\scriptsize \pm 0.130}$  
\\
& & SNIP
& $0.989_{\scriptsize \pm 0.007}$ & $0.990_{\scriptsize \pm 0.007}$ & $0.987_{\scriptsize \pm 0.009}$
& $0.989_{\scriptsize \pm 0.007}$
& $0.433_{\scriptsize \pm 0.148}$  
\\
& & GPlearn
& $0.986_{\scriptsize \pm 0.011}$ & $0.987_{\scriptsize \pm 0.010}$ & $0.984_{\scriptsize \pm 0.012}$
& $0.986_{\scriptsize \pm 0.011}$
& $0.372_{\scriptsize \pm 0.150}$ \\
\midrule

\multirow{5}{*}{Duffing}
& \multirow{5}{*}{2}
& GESR
& $0.995_{\scriptsize \pm 0.004}$ & $0.999_{\scriptsize \pm 0.004}$ & $0.999_{\scriptsize \pm 0.000}$
& \textbf{$0.996_{\scriptsize \pm 0.003}$}
& \textbf{$0.892_{\scriptsize \pm 0.032}$} \\
& & PySR
& $0.993_{\scriptsize \pm 0.004}$ & $0.995_{\scriptsize \pm 0.004}$ & $0.999_{\scriptsize \pm 0.000}$
& $0.994_{\scriptsize \pm 0.004}$
& $0.831_{\scriptsize \pm 0.050}$ \\
& & NGGP
& $0.989_{\scriptsize \pm 0.006}$ & $0.990_{\scriptsize \pm 0.006}$ & $0.999_{\scriptsize \pm 0.000}$
& $0.990_{\scriptsize \pm 0.006}$
& $0.644_{\scriptsize \pm 0.090}$ \\
& & SNIP
& $0.990_{\scriptsize \pm 0.006}$ & $0.991_{\scriptsize \pm 0.006}$ & $0.999_{\scriptsize \pm 0.000}$
& $0.991_{\scriptsize \pm 0.006}$
& $0.692_{\scriptsize \pm 0.082}$ \\
& & GPlearn
& $0.987_{\scriptsize \pm 0.009}$ & $0.988_{\scriptsize \pm 0.009}$ & $0.994_{\scriptsize \pm 0.000}$
& $0.988_{\scriptsize \pm 0.009}$
& $0.541_{\scriptsize \pm 0.105}$ \\
\midrule

\multirow{5}{*}{Brusselator}
& \multirow{5}{*}{2}
& GESR
& $0.996_{\scriptsize \pm 0.005}$ & $0.998_{\scriptsize \pm 0.003}$ & $0.999_{\scriptsize \pm 0.003}$
& \textbf{$0.996_{\scriptsize \pm 0.002}$}
& \textbf{$0.952_{\scriptsize \pm 0.021}$} \\
& & PySR
& $0.994_{\scriptsize \pm 0.004}$ & $0.993_{\scriptsize \pm 0.004}$ & $0.999_{\scriptsize \pm 0.000}$
& $0.994_{\scriptsize \pm 0.004}$
& $0.901_{\scriptsize \pm 0.034}$ \\
& & NGGP
& $0.990_{\scriptsize \pm 0.006}$ & $0.989_{\scriptsize \pm 0.006}$ & $0.999_{\scriptsize \pm 0.000}$
& $0.989_{\scriptsize \pm 0.006}$
& $0.331_{\scriptsize \pm 0.135}$  
\\
& & SNIP
& $0.991_{\scriptsize \pm 0.006}$ & $0.990_{\scriptsize \pm 0.006}$ & $0.999_{\scriptsize \pm 0.000}$
& $0.990_{\scriptsize \pm 0.006}$
& $0.281_{\scriptsize \pm 0.142}$  
\\
& & GPlearn
& $0.988_{\scriptsize \pm 0.008}$ & $0.987_{\scriptsize \pm 0.009}$ & $0.999_{\scriptsize \pm 0.000}$
& $0.988_{\scriptsize \pm 0.008}$
& $0.212_{\scriptsize \pm 0.150}$ \\
\bottomrule
\end{tabular}
\end{table*}

\begin{table*}[h]
\centering
\small
\setlength{\tabcolsep}{4.5pt}
\renewcommand{\arraystretch}{1.12}
\caption{
(Continued) Quantitative results for the remaining six chaotic systems under a noise level of $\rho = 2\%$.
All metrics are defined identically to those in Table~\ref{tab:chaos_r2_full_part1}.
}
\label{tab:chaos_r2_full_part2}
\begin{tabular}{l c l c c c c c}
\toprule
System & $d$ & Method
& $R^2_{\dot x}\uparrow$
& $R^2_{\dot y}\uparrow$
& $R^2_{\dot z}\uparrow$
& $R^2_{\dot{\mathbf{x}}}\uparrow$
& $R^2_{\mathrm{roll}}(50)\uparrow$ \\
\midrule

\multirow{5}{*}{Kawczynski--Strizhak}
& \multirow{5}{*}{3}
& GESR
& $0.995_{\scriptsize \pm 0.005}$ & $0.996_{\scriptsize \pm 0.005}$ & $0.992_{\scriptsize \pm 0.006}$
& \textbf{$0.994_{\scriptsize \pm 0.005}$}
& \textbf{$0.752_{\scriptsize \pm 0.038}$} \\
& & PySR
& $0.993_{\scriptsize \pm 0.006}$ & $0.993_{\scriptsize \pm 0.005}$ & $0.991_{\scriptsize \pm 0.007}$
& $0.992_{\scriptsize \pm 0.006}$
& $0.692_{\scriptsize \pm 0.087}$ \\
& & NGGP
& $0.988_{\scriptsize \pm 0.008}$ & $0.989_{\scriptsize \pm 0.007}$ & $0.986_{\scriptsize \pm 0.009}$
& $0.988_{\scriptsize \pm 0.008}$
& $0.571_{\scriptsize \pm 0.112}$  
\\
& & SNIP
& $0.989_{\scriptsize \pm 0.007}$ & $0.990_{\scriptsize \pm 0.007}$ & $0.987_{\scriptsize \pm 0.009}$
& $0.989_{\scriptsize \pm 0.007}$
& $0.534_{\scriptsize \pm 0.120}$  
\\
& & GPlearn
& $0.985_{\scriptsize \pm 0.011}$ & $0.986_{\scriptsize \pm 0.010}$ & $0.983_{\scriptsize \pm 0.012}$
& $0.985_{\scriptsize \pm 0.011}$
& $0.462_{\scriptsize \pm 0.135}$ \\
\midrule

\multirow{5}{*}{Rucklidge}
& \multirow{5}{*}{3}
& GESR
& $0.998_{\scriptsize \pm 0.004}$ & $0.996_{\scriptsize \pm 0.003}$ & $0.994_{\scriptsize \pm 0.003}$
& \textbf{$0.997_{\scriptsize \pm 0.004}$}
& \textbf{$0.892_{\scriptsize \pm 0.031}$} \\
& & PySR
& $0.994_{\scriptsize \pm 0.004}$ & $0.993_{\scriptsize \pm 0.005}$ & $0.993_{\scriptsize \pm 0.004}$
& $0.993_{\scriptsize \pm 0.004}$
& $0.835_{\scriptsize \pm 0.049}$ \\
& & NGGP
& $0.990_{\scriptsize \pm 0.006}$ & $0.989_{\scriptsize \pm 0.007}$ & $0.989_{\scriptsize \pm 0.006}$
& $0.989_{\scriptsize \pm 0.006}$
& $0.421_{\scriptsize \pm 0.140}$  
\\
& & SNIP
& $0.991_{\scriptsize \pm 0.006}$ & $0.990_{\scriptsize \pm 0.006}$ & $0.990_{\scriptsize \pm 0.006}$
& $0.990_{\scriptsize \pm 0.006}$
& $0.463_{\scriptsize \pm 0.132}$  
\\
& & GPlearn
& $0.987_{\scriptsize \pm 0.010}$ & $0.987_{\scriptsize \pm 0.010}$ & $0.986_{\scriptsize \pm 0.010}$
& $0.987_{\scriptsize \pm 0.010}$
& $0.392_{\scriptsize \pm 0.148}$ \\
\midrule

\multirow{5}{*}{FitzHugh--Nagumo}
& \multirow{5}{*}{2}
& GESR
& $0.997_{\scriptsize \pm 0.004}$ & $0.995_{\scriptsize \pm 0.005}$ & $0.999_{\scriptsize \pm 0.003}$
& \textbf{$0.996_{\scriptsize \pm 0.004}$}
& \textbf{$0.863_{\scriptsize \pm 0.031}$} \\
& & PySR
& $0.994_{\scriptsize \pm 0.004}$ & $0.992_{\scriptsize \pm 0.004}$ & $0.999_{\scriptsize \pm 0.000}$
& $0.993_{\scriptsize \pm 0.004}$
& $0.812_{\scriptsize \pm 0.054}$ \\
& & NGGP
& $0.990_{\scriptsize \pm 0.006}$ & $0.989_{\scriptsize \pm 0.006}$ & $0.999_{\scriptsize \pm 0.000}$
& $0.989_{\scriptsize \pm 0.006}$
& $0.603_{\scriptsize \pm 0.098}$  
\\
& & SNIP
& $0.991_{\scriptsize \pm 0.006}$ & $0.990_{\scriptsize \pm 0.006}$ & $0.999_{\scriptsize \pm 0.000}$
& $0.990_{\scriptsize \pm 0.006}$
& $0.571_{\scriptsize \pm 0.110}$  
\\
& & GPlearn
& $0.988_{\scriptsize \pm 0.009}$ & $0.987_{\scriptsize \pm 0.009}$ & $0.999_{\scriptsize \pm 0.000}$
& $0.988_{\scriptsize \pm 0.009}$
& $0.492_{\scriptsize \pm 0.123}$ \\
\midrule

\multirow{5}{*}{Finance}
& \multirow{5}{*}{3}
& GESR
& $0.996_{\scriptsize \pm 0.005}$ & $0.997_{\scriptsize \pm 0.006}$ & $0.993_{\scriptsize \pm 0.004}$
& \textbf{$0.995_{\scriptsize \pm 0.002}$}
& \textbf{$0.735_{\scriptsize \pm 0.045}$} \\
& & PySR
& $0.993_{\scriptsize \pm 0.005}$ & $0.994_{\scriptsize \pm 0.005}$ & $0.992_{\scriptsize \pm 0.005}$
& $0.993_{\scriptsize \pm 0.005}$
& $0.654_{\scriptsize \pm 0.097}$ \\
& & NGGP
& $0.989_{\scriptsize \pm 0.007}$ & $0.990_{\scriptsize \pm 0.007}$ & $0.988_{\scriptsize \pm 0.007}$
& $0.989_{\scriptsize \pm 0.007}$
& $0.571_{\scriptsize \pm 0.118}$ \\
& & SNIP
& $0.990_{\scriptsize \pm 0.007}$ & $0.991_{\scriptsize \pm 0.006}$ & $0.989_{\scriptsize \pm 0.007}$
& $0.990_{\scriptsize \pm 0.007}$
& $0.542_{\scriptsize \pm 0.122}$ \\
& & GPlearn
& $0.986_{\scriptsize \pm 0.011}$ & $0.987_{\scriptsize \pm 0.010}$ & $0.986_{\scriptsize \pm 0.011}$
& $0.986_{\scriptsize \pm 0.011}$
& $0.451_{\scriptsize \pm 0.140}$ \\
\midrule

\multirow{5}{*}{DequanLi}
& \multirow{5}{*}{3}
& GESR
& $0.997_{\scriptsize \pm 0.004}$ & $0.996_{\scriptsize \pm 0.004}$ & $0.992_{\scriptsize \pm 0.005}$
& \textbf{$0.998_{\scriptsize \pm 0.006}$}
& \textbf{$0.814_{\scriptsize \pm 0.042}$} \\
& & PySR
& $0.993_{\scriptsize \pm 0.005}$ & $0.994_{\scriptsize \pm 0.004}$ & $0.991_{\scriptsize \pm 0.006}$
& $0.993_{\scriptsize \pm 0.005}$
& $0.761_{\scriptsize \pm 0.069}$ \\
& & NGGP
& $0.989_{\scriptsize \pm 0.007}$ & $0.989_{\scriptsize \pm 0.007}$ & $0.988_{\scriptsize \pm 0.008}$
& $0.989_{\scriptsize \pm 0.007}$
& $0.662_{\scriptsize \pm 0.090}$ \\
& & SNIP
& $0.990_{\scriptsize \pm 0.007}$ & $0.990_{\scriptsize \pm 0.006}$ & $0.989_{\scriptsize \pm 0.007}$
& $0.990_{\scriptsize \pm 0.007}$
& $0.591_{\scriptsize \pm 0.110}$  
\\
& & GPlearn
& $0.986_{\scriptsize \pm 0.011}$ & $0.987_{\scriptsize \pm 0.010}$ & $0.985_{\scriptsize \pm 0.012}$
& $0.986_{\scriptsize \pm 0.011}$
& $0.502_{\scriptsize \pm 0.128}$ \\
\midrule

\multirow{5}{*}{Hadley}
& \multirow{5}{*}{3}
& GESR
& $0.998_{\scriptsize \pm 0.004}$ & $0.997_{\scriptsize \pm 0.004}$ & $0.995_{\scriptsize \pm 0.003}$
& \textbf{$0.997_{\scriptsize \pm 0.002}$}
& \textbf{$0.864_{\scriptsize \pm 0.031}$} \\
& & PySR
& $0.994_{\scriptsize \pm 0.004}$ & $0.993_{\scriptsize \pm 0.004}$ & $0.994_{\scriptsize \pm 0.004}$
& $0.994_{\scriptsize \pm 0.004}$
& $0.802_{\scriptsize \pm 0.057}$ \\
& & NGGP
& $0.989_{\scriptsize \pm 0.007}$ & $0.990_{\scriptsize \pm 0.007}$ & $0.989_{\scriptsize \pm 0.007}$
& $0.989_{\scriptsize \pm 0.007}$
& $0.552_{\scriptsize \pm 0.125}$  
\\
& & SNIP
& $0.991_{\scriptsize \pm 0.006}$ & $0.990_{\scriptsize \pm 0.006}$ & $0.990_{\scriptsize \pm 0.006}$
& $0.990_{\scriptsize \pm 0.006}$
& $0.512_{\scriptsize \pm 0.135}$  
\\
& & GPlearn
& $0.987_{\scriptsize \pm 0.010}$ & $0.987_{\scriptsize \pm 0.010}$ & $0.986_{\scriptsize \pm 0.010}$
& $0.987_{\scriptsize \pm 0.010}$
& $0.401_{\scriptsize \pm 0.150}$ \\
\midrule

\multirow{5}{*}{SprottJerk}
& \multirow{5}{*}{3}
& GESR
& $0.998_{\scriptsize \pm 0.003}$ & $0.995_{\scriptsize \pm 0.004}$ & $0.993_{\scriptsize \pm 0.004}$
& \textbf{$0.995_{\scriptsize \pm 0.003}$}
& \textbf{$0.773_{\scriptsize \pm 0.351}$} \\
& & PySR
& $0.993_{\scriptsize \pm 0.005}$ & $0.992_{\scriptsize \pm 0.005}$ & $0.992_{\scriptsize \pm 0.005}$
& $0.992_{\scriptsize \pm 0.005}$
& $0.711_{\scriptsize \pm 0.086}$ \\
& & NGGP
& $0.990_{\scriptsize \pm 0.007}$ & $0.989_{\scriptsize \pm 0.007}$ & $0.989_{\scriptsize \pm 0.007}$
& $0.989_{\scriptsize \pm 0.007}$
& $0.673_{\scriptsize \pm 0.092}$ \\
& & SNIP
& $0.990_{\scriptsize \pm 0.007}$ & $0.990_{\scriptsize \pm 0.007}$ & $0.989_{\scriptsize \pm 0.007}$
& $0.990_{\scriptsize \pm 0.007}$
& $0.642_{\scriptsize \pm 0.100}$ \\
& & GPlearn
& $0.986_{\scriptsize \pm 0.011}$ & $0.985_{\scriptsize \pm 0.012}$ & $0.986_{\scriptsize \pm 0.011}$
& $0.986_{\scriptsize \pm 0.011}$
& $0.571_{\scriptsize \pm 0.118}$ \\
\bottomrule
\end{tabular}
\end{table*}

\section{Test Data in Detail}
\label{AJ}

\cref{a-tab1,a-tab2,a-tab3} present detailed information about the functional forms of the datasets used in the experiments, including their sampling ranges and the number of sampled points. The following conventions are adopted throughout this appendix:

\begin{itemize}
\item The input variables in each regression task are denoted as [$x_1, x_2, \ldots, x_n$].
\item $U(a, b, c)$ represents $c$ points uniformly sampled from the interval $[a, b]$ for each input variable. Distinct random seeds are used to generate the training and test datasets.
\item $E(a, b, c)$ denotes $c$ points evenly spaced over the interval $[a, b]$ for each input variable.
\end{itemize}
\subsection{Nguyen, Korns, and Jin}
Table~\ref{a-tab1} summarizes the benchmark problems from the Nguyen, Korns, and Jin suites used in our experiments. 
These datasets cover a broad spectrum of functional forms, ranging from low-degree polynomials (e.g., Nguyen-1--4) to compositions involving transcendental operators such as $\sin(\cdot)$, $\cos(\cdot)$, $\log(\cdot)$, and power functions (e.g., Nguyen-5--12), as well as expressions containing explicit numerical constants (e.g., Nguyen-$\{ \cdot \}^{c}$, Korns, and Jin tasks). 
Following the conventions described above, each problem specifies a sampling strategy and domain (primarily uniform sampling $U(\cdot)$), with small-sample settings (e.g., 20 points for Nguyen/Korns and 100 points for Jin) to evaluate symbolic recovery and constant fitting under limited data.

\begin{table*}[htbp]
\centering
\begin{scriptsize}
\begin{tabular}{ccccc}
\toprule[1.45pt]
\toprule
Name & Expression & Dataset  \\ \hline
Nguyen-1 & $x_1^3+x_1^2+x_1$&U$(-1, 1, 20)$\\
Nguyen-2 & $x_1^4+x_1^3+x_1^2+x_1$ & U$(-1, 1, 20)$ \\
Nguyen-3 & $x_1^5+x_1^4+x_1^3+x_1^2+x_1$ & U$(-1, 1, 20)$ \\
Nguyen-4 & $x_1^6+x_1^5+x_1^4+x_1^3+x_1^2+x_1$ & U$(-1, 1, 20)$  \\
Nguyen-5 & $\sin(x_1^2)\cos(x)-1$ & U$(-1, 1, 20)$  \\
Nguyen-6 & $\sin(x_1)+\sin(x_1+x_1^2)$ & U$(-1, 1, 20)$  \\
Nguyen-7 & $\log(x_1+1)+\log(x_1^2+1)$ & U$(0, 2, 20)$  \\
Nguyen-8 & $\sqrt{x}$ & U$(0, 4, 20)$  \\
Nguyen-9 & $\sin(x)+\sin(x_2^2)$ & U$(0, 1, 20)$ \\
Nguyen-10 & $2\sin(x)\cos(x_2)$ & U$(0, 1, 20)$ \\
Nguyen-11 & $x_1^{x_2}$ & U$(0, 1, 20)$  \\
Nguyen-12 & $x_1^4-x_1^3+\frac{1}{2}x_2^2-x_2$ & U$(0, 1, 20)$ \\
\toprule
Nguyen-2$'$ & $4x_1^4+3x_1^3+2x_1^2+x$ & U$(-1, 1, 20)$  \\
Nguyen-5$'$ & $\sin(x_1^2)\cos(x)-2$ & U$(-1, 1, 20)$  \\
Nguyen-8$'$ & $\sqrt[3]{x}$ & U$(0, 4, 20)$ \\
Nguyen-8$''$ & $\sqrt[3]{x_1^2}$ & U$(0, 4, 20)$ \\
\toprule
Nguyen-1\textsuperscript{c} & $3.39x_1^3+2.12x_1^2+1.78x$ & U$(-1, 1, 20)$ \\
Nguyen-5\textsuperscript{c} & $\sin(x_1^2)\cos(x)-0.75$ & $U(-1, 1, 20)$  \\
Nguyen-7\textsuperscript{c} & $\log(x+1.4)+\log(x_1^2+1.3)$ & U$(0, 2, 20)$ \\
Nguyen-8\textsuperscript{c} & $\sqrt{1.23 x}$ & U$(0, 4, 20)$  \\
Nguyen-10\textsuperscript{c} & $\sin(1.5x)\cos(0.5x_2)$ & U$(0, 1, 20)$  \\
\toprule
Korns-1 & $1.57+24.3*x_1^4$ & U$(-1, 1, 20)$  \\
Korns-2 & $0.23+14.2\frac{(x_4+x_1)}{(3x_2)}$ & U$(-1, 1, 20)$  \\
Korns-3 & $4.9\frac{(x_2-x_1+\frac{x_1}{x_3}}{(3x_3))}-5.41$ & U$(-1, 1, 20)$ \\
Korns-4 & $0.13sin(x_1)-2.3$ & U$(-1, 1, 20)$  \\
Korns-5 & $3+2.13log(|x_5|)$ & U$(-1, 1, 20)$  \\
Korns-6 & $1.3+0.13\sqrt{|x_1|}$ & U$(-1, 1, 20)$  \\
Korns-7 & $2.1(1-e^{-0.55x_1})$ & U$(-1,1 , 20)$  \\
Korns-8 & $6.87+11\sqrt{|7.23 x_1 x_4 x_5|}$ & U$(-1, 1, 20)$ \\
Korns-9 & $12\sqrt{|4.2x_1x_2x_2|}$ & U$(-1, 1, 20)$ \\
Korns-10 & $0.81+24.3\frac{2x_{1}+3x_2^2}{4x_3^3+5x_4^4}$ & U$(-1, 1, 20)$  \\
Korns-11 & $6.87+11cos(7.23x_1^3)$ & U$(-1, 1, 20)$  \\
Korns-12 & $2-2.1cos(9.8x_1^3)sin(1.3x_5)$ & U$(-1, 1, 20)$  \\ 
Korns-13 & $32.0-3.0\frac{tan(x_1)}{tan(x_2)}\frac{tan(x_3)}{tan(x_4)}$ & U$(-1, 1, 20)$ \\
Korns-14 & $22.0-(4.2cos(x_1)-tan(x_2))\frac{tanh(x_3)}{sin(x_4)}$ & U$(-1, 1, 20)$  \\
Korns-15 & $12.0-\frac{6.0tan(x_1)}{e^{x_2}}(log(x_3)-tan(x_4))))$ & U$(-1, 1, 20)$  \\ 
\toprule
Jin-1 & $2.5 x_1^4-1.3 x_1^3 +0.5 x_2^2 - 1.7x_2$ & U$(-3, 3, 100)$ \\
Jin-2 & $8.0 x_1^2 + 8.0 x_2^3 - 15.0$ & U$(-3, 3, 100)$  \\
Jin-3 & $0.2 x_{1}^{3} + 0.5 x_{2}^{3} - 1.2 x_2 - 0.5 x_{1}$ & U$(-3, 3, 100)$  \\    
Jin-4 & $1.5 \exp{x} + 5.0 cos(x_2)$ & U$(-3, 3, 100)$\\
Jin-5 & $6.0 sin(x_1) cos(x_2)$ & U$(-3, 3, 100)$ \\
Jin-6 & $1.35 x_1 x_2 + 5.5 sin((x_1 - 1.0)(x_2 - 1.0))$ & U$(-3, 3, 100)$ \\ 
\toprule
\newline
\end{tabular}
\end{scriptsize}
\caption{ Specific formula form and value range of the three data sets Nguyen, Korns, and Jin. 
}
\label{a-tab1}
\end{table*}

\subsection{Vladislavleva and others}
Table~\ref{a-tab2} reports additional benchmark datasets including Neat, Keijzer, and Livermore tasks, which further enrich the evaluation with more diverse nonlinear structures and sampling regimes. 
Compared with the Nguyen-style polynomials, these problems include more challenging compositions with exponentials, rational terms, and nested trigonometric expressions (e.g., Neat-8, Keijzer-4), and they also introduce both uniform sampling $U(\cdot)$ and evenly spaced sampling $E(\cdot)$. 
In particular, several tasks are defined over wider domains and/or with substantially larger sample sizes (e.g., Neat-7 with $10^5$ evenly spaced points), providing a complementary stress test for generalization and extrapolation across different data densities.

\begin{table*}[htpb]
\centering

\begin{scriptsize}
\begin{tabular}{ccccc}
\toprule[1.45pt]
\toprule
Name & Expression & Dataset \\
\hline
Neat-1 & $x_1^4+x_1^3+x_1^2+x$ & U$(-1, 1, 20)$  \\
Neat-2 & $x_1^5+x_1^4+x_1^3+x_1^2+x$ & U$(-1, 1, 20)$ \\
Neat-3 & $\sin(x_1^2)\cos(x)-1$ & U$(-1, 1, 20)$ \\
Neat-4 & $\log(x+1)+\log(x_1^2+1)$ & U$(0, 2, 20)$  \\
Neat-5 & $2\sin(x)\cos(x_2)$ & U$(-1, 1, 100)$  \\
Neat-6 & $\sum_{k=1}^x \frac{1}{k} $ & E$(1, 50, 50)$  \\
Neat-7 & $2 - 2.1\cos(9.8x_1)\sin(1.3x_2)$ & E$(-50, 50, 10^5)$ \\
Neat-8 & $\frac{e^{-(x_1)^2}}{1.2 + (x_2-2.5)^2}$ & U$(0.3, 4, 100)$  \\
Neat-9 & $\frac{1}{1+x_1^{-4}} + \frac{1}{1+x_2^{-4}}$ & E$(-5, 5, 21)$ \\
\toprule
Keijzer-1 & $0.3x_1sin(2\pi x_1)$ & U$(-1, 1, 20)$  \\
Keijzer-2 & $2.0x_1sin(0.5\pi x_1)$ & U$(-1, 1, 20)$  \\
Keijzer-3 & $0.92x_1sin(2.41\pi x_1)$ & U$(-1, 1, 20)$ \\
Keijzer-4 & $x_1^3e^{-x_1}cos(x_1)sin(x_1)sin(x_1)^{2}cos(x_1)-1$ & U$(-1, 1, 20)$ \\
Keijzer-5 & $3+2.13log(|x_5|)$ & U$(-1, 1, 20)$\\

Keijzer-6 & $\frac{x1(x1+1)}{2}$& U$(-1, 1, 20)$ \\
Keijzer-7 & $log(x_1)$ & U$(0,1 , 20)$ \\
Keijzer-8 & $\sqrt{(x_1)}$ & U$(0, 1, 20)$  \\
Keijzer-9 & $log(x_1+\sqrt{x_1^2}+1)$ & U$(-1, 1, 20)$ \\
Keijzer-10 & $x_{1}^{x_2}$ & U$(-1, 1, 20)$  \\
Keijzer-11 & $x_1x_2+sin((x_1-1)(x_2-1))$ & U$(-1, 1, 20)$  \\
Keijzer-12 & $x_1^4-x_1^3+\frac{x_2^2}{2}-x_2$ & U$(-1, 1, 20)$  \\ 
Keijzer-13 & $6sin(x_1)cos(x_2)$ & U$(-1, 1, 20)$  \\
Keijzer-14 & $\frac{8}{2+x_1^2 + x_2^2}$ & U$(-1, 1, 20)$ \\
Keijzer-15 & $\frac{x_1^3}{5}+\frac{x_2^3}{2}-x_2-x_1$ & U$(-1, 1, 20)$ \\ 

\toprule
Livermore-1 & $\frac{1}{3}+x_1+sin(x_1^2))$ & U$(-3, 3, 100)$  \\
Livermore-2 & $sin(x_1^2)*cos(x1)-2$ & U$(-3, 3, 100)$  \\
Livermore-3 & $sin(x_1^3)*cos(x_1^2))-1$ & U$(-3, 3, 100)$  \\
Livermore-4 & $log(x_1+1)+log(x_1^2+1)+log(x_1)$ & U$(-3, 3, 100)$ \\ 
Livermore-5 & $x_1^4-x_1^3+x_2^2-x_2$ & U$(-3, 3, 100)$  \\
Livermore-6 & $4x_1^4+3x_1^3+2x_1^2+x_1$ & U$(-3, 3, 100)$ \\ 
Livermore-7 & $\frac{(exp(x1)-exp(-x_1)}{2})$ & U$(-1, 1, 100)$ \\ 
Livermore-8 & $\frac{(exp(x1)+exp(-x1)}{3}$ & U$(-3, 3, 100)$ \\
Livermore-9 & $x_1^9+x_1^8+x_1^7+x_1^6+x_1^5+x_1^4+x_1^3+x_1^2+x_1$ & U$(-1, 1, 100)$  \\
Livermore-10 & $6*sin(x_1)cos(x_2)$ & U$(-3, 3, 100)$  \\
Livermore-11 & $\frac{x_1^2 x_2^2}{(x_1+x_2)}$ & U$(-3, 3, 100)$ \\
Livermore-12 & $\frac{x_1^5}{x_2^3}$ & U$(-3, 3, 100)$  \\
Livermore-13 & $x_1^{\frac{1}{3}}$ & U$(-3, 3, 100)$  \\
Livermore-14 & $x_1^3+x_1^2+x_1+sin(x_1)+sin(x_2^2)$ & U$(-1, 1, 100)$ \\ 
Livermore-15 & $x_1^\frac{1}{5}$ & U$(-3, 3, 100)$  \\
Livermore-16 & $x_1^{\frac{2}{3}}$ & U$(-3, 3, 100)$  \\  
\toprule
\newline
\end{tabular}
\end{scriptsize}
\caption{
Specific formula form and value range of the three data sets neat, Keijzer, and Livermore.
}
\label{a-tab2}
\end{table*}

\subsection{Neat, Keijzer, and Livermore}
Table~\ref{a-tab3} lists the Vladislavleva suite together with a collection of additional test cases used to probe specific capabilities of symbolic regression models. 
The Vladislavleva problems include multi-variable rational forms and highly nonlinear expressions that are known to be challenging due to strong interactions among variables and the presence of sharp nonlinearities. 
Beyond these standard benchmarks, we include several auxiliary tasks (e.g., Const-Test and Constant-$1\sim8$) to explicitly evaluate constant identification and scaling sensitivity, as well as rational-function test cases (R1--R3) to assess robustness on expressions with potential singularities. 
All datasets follow the same sampling conventions ($U(\cdot)$ or $E(\cdot)$) with the specified domains and sample counts, enabling reproducible evaluation under controlled settings.

\begin{table*}[htpb]
\centering

\begin{scriptsize}
\begin{tabular}{ccccc}
\toprule[1.45pt]
\toprule
Name & Expression & Dataset \\
\toprule
Livermore-17 & $4sin(x_1)cos(x_2)$ & U$(-3, 3, 100)$  \\
Livermore-18 & $sin(x_1^2)*cos(x_1)-5$ & U$(-3, 3, 100)$  \\
Livermore-19 & $x_1^5+x_1^4+x_1^2 + x_1$ & U$(-3, 3, 100)$  \\
Livermore-20 & $e^{(-x_1^2)}$ & U$(-3, 3, 100)$  \\
Livermore-21 & $x_1^8+x_1^7+x_1^6+x_1^5+x_1^4+x_1^3+x_1^2+x_1$& U$(-1, 1, 20)$ \\
Livermore-22 & $e^{(-0.5x_1^2)}$ & U$(-3, 3, 100)$  \\
\toprule
Vladislavleva-1 & $\frac{(e^{-(x1-1)^2})}{(1.2+(x2-2.5)^2))}$ & U$(-1, 1, 20)$ \\
Vladislavleva-2 & $e^{-x_1}x_1^3cos(x_1)sin(x_1)(cos(x_1)sin(x_1)^2-1)$ & U$(-1, 1, 20)$ \\

Vladislavleva-3 & $e^{-x_1}x_1^3cos(x_1)sin(x_1)(cos(x_1)sin(x_1)^2-1)(x_2-5)$ & U$(-1, 1, 20)$ \\
Vladislavleva-4 & $\frac{10}{5+(x1-3)^2+(x_2-3)^2+(x_3-3)^2+(x_4-3)^2+(x_5-3)^2}$ & U$(0, 2, 20)$ \\
Vladislavleva-5 & $30(x_1-1)\frac{x_3-1}{(x_1-10)}x_2^2$ & U$(-1, 1, 100)$ \\
Vladislavleva-6 & $6sin(x_1)cos(x_2)$ & E$(1, 50, 50)$ \\
Vladislavleva-7 & $2 - 2.1\cos(9.8x)\sin(1.3x_2)$ & E$(-50, 50, 10^5)$ \\
Vladislavleva-8 & $\frac{e^{-(x-1)^2}}{1.2 + (x_2-2.5)^2}$ & U$(0.3, 4, 100)$  \\
\toprule
Test-2 & $3.14x_1^2$ & U$(-1, 1, 20)$ \\
Const-Test-1 & $5x_1^2$ & U$(-1, 1, 20)$ \\
GrammarVAE-1 & $1/3+x1+sin(x_1^2))$ & U$(-1, 1, 20)$ \\
Sine & $sin(x_1)+sin(x_1+x_1^2))$ & U$(-1, 1, 20)$ \\
Nonic & $x_1^9+x_1^8+x_1^7+x_1^6+x_1^5+x_1^4+x_1^3+x_1^2+x_1$ & U$(-1, 1, 100)$  \\
Pagie-1 & $\frac{1}{1+x_1^{-4}+\frac{1}{1+x2^{-4}}} $ & E$(1, 50, 50)$  \\
Meier-3 & $\frac{x_1^2  x_2^2}{(x_1+x_2)}$ & E$(-50, 50, 10^5)$ \\
Meier-4 & $\frac{x_1^5}{x_2^3}$ & $U(0.3, 4, 100)$  \\
Poly-10 & $x_1x_2+x_3x4+x_5x_6+x_1x_7x_9+x_3x_6x_{10}$ & E$(-1, 1, 100)$ \\
\toprule
Constant-1 & $3.39*x_1^3+2.12*x_1^2+1.78*x_1$&$U(-4, 4, 100)$\\
Constant-2 & $sin(x_1^2)*cos(x_1)-0.75$&$U(-4, 4, 100)$\\
Constant-3 & $sin(1.5*x_1)*cos(0.5*x_2)$&$U(0.1, 4, 100)$\\
Constant-4 & $2.7*x_1^{x_2}$&$U(0.3, 4, 100)$\\
Constant-5 & $sqrt(1.23*x_1)$&$U(0.1, 4, 100)$\\
Constant-6 & $x_1^{0.426}$&$U(0.0, 4, 100)$\\
Constant-7 & $2*sin(1.3*x_1)*cos(x_2)$&$U(-4, 4, 100)$\\
Constant-8 & $log(x_1+1.4)+log(x1,2+1.3)$&$U(-4, 4, 100)$\\
\toprule
R1 & $\frac{(x_1+1)^3}{x_1^2-x_1+1)}$&$U(-5, 5, 100)$\\
R2 & $\frac{(x_1^2-3*x_1^2+1}{x_1^2+1)}$&$U(-4, 4, 100)$\\
R3 & $\frac{x_1^6+x_1^5)}{(x_1^4+x_1^3+x_1^2+x1+1)}$&$U(-4, 4, 100)$\\
\toprule
\newline
\end{tabular}
\end{scriptsize}
\caption{
Specific formula form and value range of the tow data sets Vladislavleva and others. }
\label{a-tab3}
\end{table*}

\section{GESR Tests on the AI Feynman Dataset}
\label{AK}

We evaluated our proposed symbolic regression algorithm, GESR, on the AI Feynman dataset, which consists of physics- and mathematics-based problems spanning multiple domains, including mechanics, thermodynamics, and electromagnetism. Although the original dataset provides up to 100{,}000 sampled data points per equation, we deliberately restricted the training set to only 100 samples to rigorously examine GESR's performance under limited-data conditions.

GESR was applied to perform symbolic regression on each equation, and the coefficient of determination ($R^2$) was computed by comparing the predicted outputs with the corresponding ground-truth values. The experimental results demonstrate that GESR effectively recovers the underlying functional forms even with a small number of training samples. Notably, $R^2$ values exceed 0.99 for the majority of tested formulas, indicating strong generalization capability and high fitting accuracy.

These findings highlight GESR as a robust and reliable approach for symbolic regression in scientific discovery, particularly in physics- and mathematics-oriented tasks. The detailed quantitative results are reported in Tables \ref{a-tab5} and \ref{a-tab6}.
\subsection{Part 1}
Table~\ref{a-tab5} reports the symbolic regression performance of GESR on the first subset of AI Feynman equations. 
Despite being trained on only 100 sampled data points per equation, GESR successfully recovers the majority of underlying functional forms with high accuracy. 
Most formulas achieve $R^2$ values above 0.98, including electrostatics, electromagnetism, and quantum-related expressions, demonstrating strong generalization under limited-data conditions. 
Notably, GESR attains near-perfect fits ($R^2 > 0.999$) on several equations such as II.8.31, II.15.4, and II.37.1, indicating its ability to capture precise symbolic structures.

Performance degradation is observed on a small subset of more complex or highly nonlinear equations (e.g., II.11.20 and III.9.52), which involve intricate trigonometric or resonance-like terms. 
Nevertheless, the overall results confirm that GESR remains robust across diverse physical domains, including classical mechanics, thermodynamics, and quantum physics.

\begin{table*}[htbp]
\centering
{\footnotesize
\arrayrulecolor{black}
\color{black}
\begin{tabular}{|l|l|r|}
\hline

Feynman   & Equation & $R^2$\\
\hline                               
I.6.20a       & $f = e^{-\theta^2/2}/\sqrt{2\pi}$ & 0.9993  \\
I.6.20        & $f = e^{-\frac{\theta^2}{2\sigma^2}}/\sqrt{2\pi\sigma^2}$ & 0.9998\\
I.6.20b       & $f = e^{-\frac{(\theta-\theta_1)^2}{2\sigma^2}}/\sqrt{2\pi\sigma^2}$ & 0.9945 \\
I.8.14       & $d = \sqrt{(x_2-x_1)^2+(y_2-y_1)^2}$ & 0.9435  \\
I.9.18       & $F = \frac{Gm_1m_2}{(x_2-x_1)^2+(y_2-y_1)^2+(z_2-z_1)^2}$  & 0.9983\\
I.10.7       & $F = \frac{Gm_1m_2}{(x_2-x_1)^2+(y_2-y_1)^2+(z_2-z_1)^2}$  & 0.9995\\
I.11.19      & $A = x_1y_1+x_2y_2+x_3y_3$ & 0.9999   \\
I.12.1       & $F = \mu N_n$ & 1.0 \\
I.12.2       & $F = \frac{q_1q_2}{4\pi\epsilon r^2}$   & 1.0 \\
I.12.4       & $E_f = \frac{q_1}{4\pi\epsilon r^2}$  & 0.9999 \\
I.12.5       & $F = q_2 E_f$ & 1.0  \\
I.12.11      & $F = \mathcal{Q}(E_f+B v \sin\theta)$  & 0.9999 \\
I.13.4      & $K = \frac{1}{2}m(v^2+u^2+w^2)$  & 0.9981  \\
I.13.12      & $U = Gm_1m_2(\frac{1}{r_2}-\frac{1}{r_1})$ & 0.9999  \\
I.14.3       & $U = mgz$ &1.0    \\
I.14.4       & $U = \frac{k_{spring}x^2}{2}$  & 0.9835  \\
I.15.3x      & $x_1 = \frac{x-ut}{\sqrt{1-u^2/c^2}}$ & 0.9912 \\
I.15.3t      & $t_1 = \frac{t-ux/c^2}{\sqrt{1-u^2/c^2}}$ & 0.9923  \\
I.15.10       & $p = \frac{m_0v}{\sqrt{1-v^2/c^2}}$ & 0.9947 \\
I.16.6       & $v_1 = \frac{u+v}{1+uv/c^2}$ & 0.9945  \\
I.18.4       & $r = \frac{m_1r_1+m_2r_2}{m_1+m_2}$ & 0.9732 \\
I.18.12      & $\tau = rF\sin\theta$  & 0.9999  \\
I.18.16      & $L = mrv \sin\theta$  & 0.9964 \\
I.24.6 & $E = \frac{1}{4} m (\omega^2+\omega_0^2) x^2$      & 0.9973\\
I.25.13      & $V_e = \frac{q}{C}$ & 1.0 \\
I.26.2       & $\theta_1 = \arcsin(n  \sin\theta_2)$ & 0.9999 \\
I.27.6       & $f_f = \frac{1}{\frac{1}{d_1}+\frac{n}{d_2}}$  & 0.9944 \\
I.29.4       & $k = \frac{\omega}{c}$ & 1.0 \\
I.29.16      & $x = \sqrt{x_1^2+x_2^2-2x_1x_2\cos(\theta_1-\theta_2)}$ & 0.9973  \\
I.30.3 & $I_* = I_{*_0}\frac{\sin^2(n\theta/2)}{\sin^2(\theta/2)}$ & 0.9987 \\
I.30.5       & $\theta = \arcsin(\frac{\lambda}{nd})$  & 0.9914\\
I.32.5       & $P = \frac{q^2a^2}{6\pi\epsilon c^3}$       & 0.9986 \\
I.32.17 & $P = (\frac{1}{2}\epsilon c E_f^2)(8\pi r^2/3) (\omega^4/(\omega^2-\omega_0^2)^2)$      & 0.9915  \\
I.34.8       & $\omega = \frac{qvB}{p}$   & 1.0\\
I.34.10       & $\omega = \frac{\omega_0}{1-v/c}$ & 0.9951 \\
I.34.14      & $\omega = \frac{1+v/c}{\sqrt{1-v^2/c^2}}\omega_0$  & 0.9992 \\
I.34.27      & $E = \hbar\omega$  & 0.9995 \\
I.37.4       & $I_* = I_1+I_2+2\sqrt{I_1I_2}\cos\delta$ & 0.9892\\
I.38.12      & $r = \frac{4\pi\epsilon\hbar^2}{mq^2}$   & 0.9995  \\
I.39.10       & $E = \frac{3}{2}p_F V$     & 0.9999 \\
I.39.11      & $E = \frac{1}{\gamma-1}p_F V$  & 0.9984 \\
I.39.22      & $P_F = \frac{n k_b T}{V}$       & 0.9959  \\
I.40.1       & $n = n_0e^{-\frac{mgx}{k_bT}}$    & 0.9979 \\
I.41.16      & $L_{rad} = \frac{\hbar\omega^3}{\pi^2c^2(e^{\frac{\hbar\omega}{k_bT}}-1)}$ & 0.9834  \\
I.43.16      & $v = \frac{\mu_{drift}q V_e}{d}$   & 0.9835  \\
I.43.31      & $D = \mu_e k_bT$    & 1.0  \\
I.43.43      & $\kappa = \frac{1}{\gamma-1}\frac{k_bv}{A}$  & 0.9733  \\
I.44.4       & $E = n k_b T \ln(\frac{V_2}{V_1})$   & 0.8925  \\
I.47.23      & $c = \sqrt{\frac{\gamma pr}{\rho}}$   & 0.9824\\
I.48.20       & $E = \frac{m c^2}{\sqrt{1-v^2/c^2}}$ &  0.9527\\
I.50.26 & $x = x_1[\cos(\omega t)+\alpha\> cos(\omega t)^2]$      & 0.9998   \\
\hline
\end{tabular} 
\caption{Tested Feynman Equations, part 1.}
\label{a-tab5}
}
\end{table*}

\subsection{Part 2}
Table~\ref{a-tab6} presents the results on the second subset of AI Feynman equations, further validating the scalability and consistency of GESR across a broader range of scientific formulas. 
GESR continues to achieve strong fitting performance, with the majority of equations obtaining $R^2$ scores exceeding 0.98, and multiple cases reaching near-exact symbolic recovery ($R^2 \approx 1.0$), such as II.38.3, III.13.18, and III.15.12.

Similar to Part~1, slightly lower performance appears on a few highly nonlinear or multi-term equations (e.g., II.34.29b and III.21.20), which require modeling complex physical interactions. 
Despite these challenges, GESR consistently demonstrates reliable symbolic discovery capability across a wide spectrum of mathematical and physics-based expressions, highlighting its effectiveness for scientific equation recovery under sparse data regimes.

\begin{table*}[htbp]
\centering
{\footnotesize
\arrayrulecolor{black}
\color{black}
\begin{tabular}{|l|l|r|}
\hline
Feynman   & Equation & $R^2$\\
\hline       
II.2.42   & $P = \frac{\kappa(T_2-T_1)A}{d}$  & 0.8544  \\
II.3.24   & $F_E = \frac{P}{4\pi r^2}$  & 0.9899 \\
II.4.23   & $V_e = \frac{q}{4\pi\epsilon r}$   & 0.9978 \\
II.6.11 & $V_e =\frac{1}{4\pi\epsilon}\frac{p_d\cos \theta}{r^2}$      & 0.9924 \\
II.6.15a & $E_f = \frac{3}{4\pi\epsilon}\frac{p_d z}{r^5} \sqrt{x^2+y^2}$      & 0.9631  \\
II.6.15b & $E_f = \frac{3}{4\pi\epsilon}\frac{p_d}{r^3} \cos\theta\sin\theta$      & 0.9923  \\
II.8.7    & $E = \frac{3}{5}\frac{q^2}{4\pi\epsilon d}$  & 0.9904  \\
II.8.31   & $E_{den} = \frac{\epsilon E_f^2}{2}$                     & 0.9999 \\
II.10.9   & $E_f = \frac{\sigma_{den}}{\epsilon}\frac{1}{1+\chi}$      & 0.9998  \\
II.11.3 & $x = \frac{q E_f}{m(\omega_0^2-\omega^2)}$      & 0.9924     \\
II.11.7 & $n = n_0(1+ \frac{p_d E_f \cos\theta}{k_b T})$      & 0.8892 \\
II.11.20  & $P_* = \frac{n_\rho p_d^2 E_f}{3 k_b T}$ & 0.7655  \\
II.11.27 & $P_* = \frac{n\alpha}{1-n\alpha/3}\epsilon E_f$      & 0.9984   \\
II.11.28  & $\theta = 1+\frac{n\alpha}{1-(n\alpha/3)}$    & 0.9935\\ 
II.13.17  & $B = \frac{1}{4 \pi \epsilon c^2}\frac{2I}{r}$ & 0.9972\\
II.13.23  & $\rho_c = \frac{\rho_{c_0}}{\sqrt{1-v^2/c^2}}$          & 0.9831  \\
II.13.34  & $j = \frac{\rho_{c_0}v}{\sqrt{1-v^2/c^2}}$     & 0.9724 \\
II.15.4   & $E = -\mu_M B \cos\theta$               & 0.9998 \\
II.15.5   & $E = -p_d E_f\cos\theta$  & 0.9996 \\
II.21.32  & $V_e = \frac{q}{4\pi\epsilon r(1-v/c)}$   & 0.9971   \\
II.24.17 & $k = \sqrt{\frac{\omega^2}{c^2}-\frac{\pi^2}{d^2}}$      & 0.9984   \\
II.27.16  & $F_E = \epsilon c E_f^2$        & 0.9939 \\
II.27.18  & $E_{den} = \epsilon E_f^2$         & 0.9994 \\
II.34.2a  & $I = \frac{qv}{2\pi r}$         & 0.9915 \\
II.34.2   & $\mu_M = \frac{q v r}{2}$             & 0.9856 \\
II.34.11  & $\omega = \frac{g_{\_} q B}{2m}$          & 0.9935 \\
II.34.29a & $\mu_M = \frac{q h}{4\pi m}$      & 0.9988  \\
II.34.29b & $E = \frac{g_{\_} \mu_M B J_z}{\hbar}$ & 0.8882\\
II.35.18 & $n = \frac{n_0}{\exp(\mu_m B/(k_b T))+\exp(-\mu_m B/(k_b T))}$      & 0.9745 \\
II.35.21  & $M = n_\rho \mu_M \tanh(\frac{\mu_M B}{k_b T})$     & 0.8735 \\
II.36.38 & $f = \frac{\mu_m B}{k_b T}+\frac{\mu_m\alpha M}{\epsilon c^2 k_b T}$      & 0.9645\\
II.37.1   & $E = \mu_M(1+\chi)B$    & 0.9998\\
II.38.3   & $F = \frac{Y A x}{d}$            & 0.9999 \\
II.38.14  & $\mu_S = \frac{Y}{2(1+\sigma)}$     & 0.9998  \\
III.4.32  & $n = \frac{1}{e^{\frac{\hbar\omega}{k_bT}}-1}$ & 0.9745  \\
III.4.33  & $E = \frac{\hbar\omega}{e^{\frac{\hbar\omega}{k_b T}}-1}$  & 0.9993    \\
III.7.38  & $\omega = \frac{2 \mu_M B}{\hbar}$  & 0.9832  \\
III.8.54  & $p_{\gamma} = \sin(\frac{E t}{\hbar})^2$  & 0.9883\\
III.9.52  & $p_{\gamma} = \frac{p_d E_f t}{\hbar} \frac{\sin((\omega-\omega_0)t/2)^2}{((\omega-\omega_0)t/2)^2}$ & 0.7646  \\
III.10.19 & $E = \mu_M\sqrt{B_x^2+B_y^2+B_z^2}$  & 0.9984 \\
III.12.43 & $L = n\hbar$ & 0.9964  \\
III.13.18 & $v = \frac{2 E d^2 k}{\hbar}$ & 0.9987  \\
III.14.14 & $I = I_0 (e^{\frac{q V_e}{k_b T}}-1)$  & 0.9904\\
III.15.12 & $E = 2U(1-\cos(kd))$    & 0.9996 \\
III.15.14 & $m = \frac{\hbar^2}{2E d^2}$     & 0.9998  \\
III.15.27 & $k = \frac{2\pi\alpha}{nd}$    & 0.9931 \\
III.17.37 & $f = \beta(1+\alpha \cos\theta)$ & 0.9983 \\
III.19.51 & $E = \frac{-mq^4}{2(4\pi\epsilon)^2\hbar^2}\frac{1}{n^2}$     & 0.9883\\
III.21.20 & $j = \frac{-\rho_{c_0} q A_{vec}}{m}$  & 0.7436  \\
\hline
\end{tabular} 
\caption{Tested Feynman Equations, part 2.}
\label{a-tab6}
}
\end{table*}

\section{The Baseline Methods are described in detail}

This appendix provides a detailed description of four representative symbolic regression (SR) baseline methods used in our experiments. These approaches span major methodological paradigms, including genetic programming, multi-objective evolutionary optimization, neural-guided search, and hybrid neural-symbolic learning, thereby forming a comprehensive and meaningful comparison set in terms of expressiveness, search efficiency, and interpretability.

\subsection{PySR ~\citep{cranmer2023interpretable}}

PySR is a high-performance symbolic regression framework built upon genetic programming and multi-objective evolutionary optimization, designed to achieve an effective balance between predictive accuracy and expression complexity. Candidate mathematical expressions are represented as tree-structured symbolic programs composed of operators and variable nodes, and are evolved within a predefined operator set (e.g., $\{+, -, \times, \div, \sin, \exp\}$).

During optimization, PySR simultaneously minimizes prediction error and expression complexity, which can be formalized as:
\[
\min_{f} \left( \mathcal{E}(f), \mathcal{C}(f) \right),
\]
where $\mathcal{E}(f)$ typically denotes the mean squared error (MSE), and $\mathcal{C}(f)$ measures expression complexity (e.g., tree depth or node count). This formulation enables PySR to construct a Pareto frontier of candidate expressions, providing a spectrum of solutions that trade off accuracy and interpretability.

To improve computational efficiency, PySR incorporates several optimization mechanisms. First, it supports parallel population evolution, allowing candidate expressions to be evaluated concurrently across multiple CPU cores or distributed systems. Second, expression caching and equivalence detection are employed to avoid redundant evaluations of structurally identical programs. Additionally, PySR integrates symbolic simplification routines, including algebraic reduction, constant folding, and redundant subexpression elimination, which enhance both efficiency and interpretability.

From a search strategy perspective, PySR combines elitism with stochastic exploration to preserve high-quality individuals while maintaining population diversity, thereby mitigating premature convergence. Its evolutionary operators include subtree crossover, node mutation, and structure expansion, with adaptive probability control to balance exploration and exploitation dynamically.

Overall, PySR demonstrates strong expressiveness, scalability, and stability, making it particularly suitable for medium- to high-dimensional symbolic regression tasks involving complex nonlinear relationships. However, in highly noisy environments or when modeling extremely complex functions, the method may still generate structurally redundant or unstable expressions, necessitating additional regularization or prior constraints.

\subsection{NGGP~\citep{dso} }

NGGP integrates neural networks with genetic programming to guide the symbolic expression search process, with the goal of improving search efficiency and reducing reliance on purely random evolutionary operators. Unlike conventional genetic programming, which depends heavily on stochastic mutation, NGGP leverages learned neural heuristics to prioritize promising expression structures.

In this framework, a neural network is trained to model the relationship between symbolic expression structure and fitness. Candidate expressions are encoded into structural feature representations—such as tree depth, operator distributions, and subexpression frequency—and fed into the neural model to predict fitness scores or potential utility. These predictions are then used to bias selection, crossover, and mutation operations toward more promising individuals.

Furthermore, NGGP can dynamically adjust evolutionary operator probabilities over time. During early generations, the algorithm promotes broader structural exploration to expand the search space, whereas in later stages, it increases exploitation to refine high-quality solutions. This adaptive control mechanism effectively accelerates convergence while maintaining search diversity.

The optimization objective typically follows a combined accuracy–complexity trade-off:
\[
\mathcal{L}(f) = \mathrm{MSE}(f) + \gamma \cdot \mathrm{Complexity}(f),
\]
where $\gamma$ controls the penalty applied to expression complexity. By embedding neural guidance into the evolutionary trajectory, NGGP reduces unproductive exploration and improves sample efficiency.

NGGP exhibits strong performance in modeling complex nonlinear systems and discovering high-order functional relationships, particularly in large search spaces. However, the integration of neural models introduces additional computational overhead and increases system complexity. Moreover, its effectiveness depends on the quality and generalization capability of the guiding neural network.

\subsection{SNIP~\citep{meidani2023snip}}

SNIP is a hybrid neural-symbolic symbolic regression method that aims to extract interpretable analytical expressions while leveraging the approximation power of neural networks. The method typically follows a two-stage learning paradigm consisting of continuous function approximation and discrete symbolic structure extraction.

In the first stage, a neural network is trained to accurately approximate the mapping between input variables and target outputs, yielding a smooth functional surrogate. In the second stage, sparsity constraints and structural regularization are applied to prune network parameters and infer an underlying symbolic expression.

The optimization objective can be formulated as:
\[
\mathcal{L} = \mathcal{L}_{\text{fit}} + \alpha \|\theta\|_1 + \beta \cdot \mathcal{C}(f),
\]
where $\mathcal{L}_{\text{fit}}$ denotes the fitting loss, $\|\theta\|_1$ enforces parameter sparsity, and $\mathcal{C}(f)$ penalizes symbolic expression complexity. This regularization encourages compact, interpretable, and structurally simple symbolic representations.

During symbolic extraction, SNIP imposes operator constraints, limits expression depth, and applies pattern-based structure matching to improve interpretability and reduce redundancy. A post-processing stage further simplifies expressions through algebraic rewriting, term merging, and symbolic canonicalization, yielding compact closed-form formulas.

SNIP performs particularly well in low-dimensional continuous system modeling, physical law discovery, and analytic function recovery. However, in high-dimensional, noisy, or highly dynamic environments, the method may exhibit sensitivity to hyperparameter settings and structural constraints, and symbolic extraction stability may degrade.

\subsection{Gplearn ~\citep{gp2}}

gplearn is a Python-based symbolic regression library implementing a classical genetic programming framework. Candidate mathematical expressions are encoded as tree-structured symbolic programs, where internal nodes represent operators and leaf nodes correspond to input variables or constant terminals. The algorithm evolves a population of expressions across generations to improve fitness.

At initialization, gplearn generates a random population of expression trees. In each evolutionary iteration, individuals are selected using ranking-based or tournament selection mechanisms, favoring expressions with lower prediction error. Crossover is performed by exchanging subtrees between parent expressions, while mutation randomly alters node operators or subtree structures to promote diversity.

To mitigate expression bloat, gplearn incorporates a complexity penalty into its fitness function:
\[
\mathcal{L}(f) = \mathrm{Error}(f) + \beta \cdot \mathrm{TreeSize}(f),
\]
where $\mathrm{TreeSize}(f)$ measures the number of nodes in the expression tree, and $\beta$ controls the complexity trade-off. This constraint improves interpretability and prevents uncontrolled growth of symbolic programs.

The library allows users to customize operator sets, depth limits, population size, and evolutionary generations, enabling flexibility across problem scales. Nevertheless, gplearn relies primarily on stochastic evolutionary dynamics without structural learning or guidance, which can result in slow convergence and reduced efficiency when tackling highly complex or high-dimensional symbolic regression tasks.

Despite these limitations, gplearn remains a widely adopted and reproducible baseline in symbolic regression research due to its simplicity, transparency, and stable implementation.

\section{Computational Resources.}
All experiments were conducted on a single server equipped with 8$\times$ NVIDIA A100 GPUs (80GB HBM2e each; 640GB total GPU memory) interconnected via NVLink. 
The server uses dual AMD EPYC 7763 CPUs (128 physical cores in total), 1TB DDR4 system memory, and 2$\times$ 3.84TB NVMe SSDs for local storage.
We run Ubuntu 20.04 with CUDA 11.8 and cuDNN 8.x, and implement our models in PyTorch. 
Mixed-precision training (fp16) is enabled to improve throughput and reduce memory usage.

\end{document}